\def\eqref#1{equation~\ref{#1}}
\def\1{\bm{1}}
\DeclareMathAlphabet{\mathsfit}{\encodingdefault}{\sfdefault}{m}{sl}
\SetMathAlphabet{\mathsfit}{bold}{\encodingdefault}{\sfdefault}{bx}{n}
\newcommand{\E}{\mathbb{E}}
\newcommand{\R}{\mathbb{R}}
\newcommand{\Var}{\mathrm{Var}}
\newcommand{\dif}{\mathrm{d}}
\theoremstyle{plain}
\newtheorem{theorem}{Theorem}[section]
\newtheorem{proposition}[theorem]{Proposition}
\theoremstyle{definition}
\newtheorem{assumption}[theorem]{Assumption}
\theoremstyle{remark}
\newcommand{\cirone}{\text{\ding{172}}}
\newcommand{\cirtwo}{\text{\ding{173}}}
\newcommand{\cirthree}{\text{\ding{174}}}
\newcommand{\diag}{\textit{diag}}
\newcommand{\poly}{\textit{poly}}
\renewcommand{\eqref}[1]{(\ref{#1})}
\icmltitlerunning{Kernel Semi-Implicit Variational Inference}
\begin{document}

\twocolumn[
\icmltitle{Kernel Semi-Implicit Variational Inference}



\icmlsetsymbol{equal}{*}

\begin{icmlauthorlist}
\icmlauthor{Ziheng Cheng$^\ast$}{sms}
\icmlauthor{Longlin Yu$^\ast$}{sms}
\icmlauthor{Tianyu Xie}{sms}
\icmlauthor{Shiyue Zhang}{sms}
\icmlauthor{Cheng Zhang}{sms,css}
\end{icmlauthorlist}

\icmlaffiliation{sms}{School of Mathematical Sciences, Peking University, China}
\icmlaffiliation{css}{Center for Statistical Science, Peking University, China}

\icmlcorrespondingauthor{Cheng Zhang}{chengzhang@math.pku.edu.cn}

\icmlkeywords{Machine Learning, ICML}

\vskip 0.3in
]



\printAffiliationsAndNotice{\icmlEqualContribution} 

\begin{abstract}
Semi-implicit variational inference (SIVI) extends traditional variational families with semi-implicit distributions defined in a hierarchical manner.
Due to the intractable densities of semi-implicit distributions, classical SIVI often resorts to surrogates of evidence lower bound (ELBO) that would introduce biases for training.
A recent advancement in SIVI, named SIVI-SM, utilizes an alternative score matching objective made tractable via a minimax formulation, albeit requiring an additional lower-level optimization.
In this paper, we propose kernel SIVI (KSIVI), a variant of SIVI-SM that eliminates the need for lower-level optimization through kernel tricks.
Specifically, we show that when optimizing over a reproducing kernel Hilbert space (RKHS), the lower-level problem has an explicit solution. 
This way, the upper-level objective becomes the kernel Stein discrepancy (KSD), which is readily computable for stochastic gradient descent due to the hierarchical structure of semi-implicit variational distributions.
An upper bound for the variance of the Monte Carlo gradient estimators of the KSD objective is derived, which allows us to establish novel convergence guarantees of KSIVI.
We demonstrate the effectiveness and efficiency of KSIVI on both synthetic distributions and a variety of real data Bayesian inference tasks.
\end{abstract}

\section{Introduction}
Variational inference (VI) is an optimization based approach widely used to approximate posterior densities for Bayesian models \citep{Jordan1999AnIT, Wainwright08, Blei2016Variational}.
The idea behind VI is to first posit a family of variational distributions over the model parameters (or latent variables), and then to find a member from this family that is close to the target posterior, where the closeness is often measured by the Kullback-Leibler (KL) divergence.
Since the posterior generally does not possess analytical densities, in practice people often maximize the evidence lower bound (ELBO) instead, which is an equivalent but tractable reformulation \citep{Jordan1999AnIT}.

In classical VI, a common assumption is that variational distributions are factorized over the parameters (or latent variables) \citep{bishop2000variational, Hinton1993KeepingTN, Peterson1989}.
When combined with conditional conjugate models, this mean-field assumption allows for a straightforward coordinate-ascent optimization scheme with closed-form update rules \citep{Blei2016Variational}.
However, conditional conjugacy may not hold in practice, and a factorized variational distribution may fail to capture the complex posterior due to the ignorance of the dependencies between different factorization components.
Recent years have witnessed significant advancements in the field of VI, with numerous efforts aimed at alleviating these constraints.
To address the conjugacy condition, black-box VI methods have been proposed for a broad class of models that allow generic training algorithms via Monte Carlo gradient estimators \citep{Paisley12, Nott12, ranganath2014black, Rezende14, VAE, titsias14}.
In parallel, more flexible variational families have emerged, either explicitly incorporating certain structures among the parameters or drawing inspiration from invertible transformations of probability distributions \citep{Jaakkola98, Saul96, Giordano15, Tran15, NF, RealNVP, IAF, Papamakarios19}.
Despite their successes, it is noteworthy that all these approaches assume tractable densities of variational distributions.

To further enhance the capacity of variational families, one strategy is to incorporate the implicit models defined through neural network mappings \citep{Huszar17, Tran17, AVB, Shi18, shi2018, SSM}.
These implicit models can provide more flexible variational approximations that allow easy sampling procedures by design.
However, their probability densities are often intractable, rendering direct computation of the log density ratio, crucial for evaluating the ELBO, impossible.
Implicit VI methods, therefore, generally rely on density ratio estimation for training, which not only introduces additional optimization complexity
but also is known to be difficult in high dimensional settings \citep{sugiyama2012density}.

To circumvent the need for density ratio estimation, semi-implicit variational inference (SIVI) has been introduced \citep{yin2018semi, moens2021}.
In SIVI, variational distributions are crafted using a semi-implicit hierarchical framework which allows efficient computation of surrogate ELBOs for training.
As surrogate ELBOs are inherently biased, \citet{titsias2019unbiased} proposed an unbiased estimator of the gradient of the exact ELBO, albeit requiring an inner-loop MCMC sampling from a reversed conditional distribution that can easily become expensive in high-dimensional regimes.
Alternatively, \citet{yu2023semi} introduced SIVI-SM, a variant of SIVI that uses an objective based on Fisher divergence rather than KL divergence. 
They showed that this new objective facilitates a simple minimax formulation, where the intractability of densities can be naturally handled similarly to denoising score matching (DSM) \citep{Vincent2011}, due to the hierarchical construction of semi-implicit distributions.
However, this comes at the price of an additional lower-level problem that requires optimization over a family of functions parameterized via neural networks.

As a popular approach in machine learning, kernel methods have shown great advantage when it comes to optimization over function spaces \citep{MMD, liu2016stein}.
In this paper, we propose kernel semi-implicit variational inference (KSIVI), a variant of SIVI-SM that eliminates the need for lower-level optimization through kernel tricks.
More specifically, we show that there exists an explicit solution to the lower-level problem when optimizing over a reproducing kernel Hilbert space (RKHS).
Upon substitution, the objective for the upper-level problem becomes the kernel Stein discrepancy (KSD), a widely used kernel-induced dissimilarity measure between probability distributions \citep{liu2016kernelized, pmlr-v48-chwialkowski16, gorham2017measuring}.
Moreover, similarly as done in DSM, this solution has a tractable form that solely involves conditional densities due to the hierarchical structure of semi-implicit variational families, making KSD readily computable for stochastic gradient descent (SGD).
We derive an upper bound for the variance of the Monte Carlo gradient estimators under mild assumptions.
This bound allows us to establish convergence guarantees of KSIVI to a stationary point using standard analysis in stochastic optimization.
We show empirically that KSIVI performs on par or better than SIVI-SM on both synthetic distributions and a variety of real data Bayesian inference tasks while enjoying more efficient computation, more stable training, and fewer hyperparameter tuning headaches.

\section{Background}

\paragraph{Notations}
Let $\|\cdot\|$ be the standard Euclidean norm of a vector and the operator norm of a matrix or high-dimensional tensor. 
Let $\|\cdot\|_{\infty}$ denote the $\ell_\infty$ norm of a vector.
For any $x,y\in\R^d$, the expressions $x\odot y, \frac{x}{y}, x^2$ stand for element-wise product, division, and square, respectively.
The RKHS induced by kernel $k(\cdot,\cdot)$ is denoted as $\mathcal{H}_0$ and the corresponding Cartesian product is defined as $\mathcal{H}:=\mathcal{H}_0^{\otimes d}$.
Let $\nabla_1k (\text{resp.} \nabla_2k)$ be the gradient of the kernel w.r.t. the first (resp. second) variable.
We denote the inner product in $\R^d$ and Hilbert space $\mathcal{F}$ by $\left\langle \cdot,\cdot \right\rangle$ and $\left\langle \cdot,\cdot \right\rangle_{\mathcal{F}}$, respectively.
For a probabilistic density function $q$, we use $s_q$ to denote its score function $\nabla\log q$.
Finally, we use standard $\lesssim$ and $\gtrsim$ to omit constant factors.


\subsection{Semi-Implicit Variational Inference}
Given an observed data set $\mathcal{D}$, the classical VI methods find the best approximation in a variational family $\{q_\phi(x)\}_{\phi\in\Phi}$ to the posterior distribution $p(x|\mathcal{D})$ by maximizing the following evidence lower bound (ELBO)
\begin{equation}\label{eq:elbo}
    \mathcal{L}(q_\phi(x))= \E_{q_\phi(x)}\left[\log p(\mathcal{D},x)-\log q_\phi(x)\right],
\end{equation}
where $\phi$ are the variational parameters.
To expand the capacity of variational families, SIVI \citep{yin2018semi} constructs a semi-implicit variational family with hierarchical structure as
\begin{align}
q_\phi(x) = \int q_\phi(x|z)q(z)\dif z,
\end{align}
where $q(z)$ is the mixing distribution which is easy to sample from and can be implicit, and $q_\phi(x|z)$ is the conditional layer that is required to be explicit.
Compared to classical VI which requires tractable variational distributions with explicit densities, the semi-implicit construction greatly enhances the approximation ability, making it able to capture the complex dependence among different variables.

Since $q_\phi(x)$ in the ELBO \eqref{eq:elbo} is no longer tractable, \citet{yin2018semi} considers a sequence of surrogates $\underline{\mathcal{L}}^{(K)}(q_\phi(x))$ of the ELBO \eqref{eq:elbo} defined as
\begin{equation}
\underline{\mathcal{L}}^{(K)}(q_\phi(x)) = \E_{q(z^{(0:K)})q_\phi(x|z^{(0)})}\log \frac{p(\mathcal{D},x)}{\frac{1}{K+1}\sum_{k=0}^K q_\phi(x|z^{(k)})},
\end{equation}
where $q(z^{(0:K)})=\prod_{i=0}^Kq(z^{(k)})$.
The introduced surrogate $\underline{\mathcal{L}}^{(K)}(q_\phi(x))$ is a lower bound of the $\mathcal{L}(q_\phi(x))$ and is asymptotically exact in the sense that $\lim_{K\rightarrow  \infty}\underline{\mathcal{L}}^{(K)}(q_\phi(x)) = \mathcal{L}(q_\phi(x))$.

Besides the ELBO-based objectives, the score-based measure has also been introduced for VI \citep{liu2016kernelized, zhang18, Hu18, korba2021kernel} where the score function $s_p(x)$ of the target posterior distribution $p(x)=p(x|\mathcal{D})$ is generally assumed to be tractable. 
In particular, SIVI-SM \citep{yu2023semi} learns the semi-implicit approximation by minimizing the Fisher divergence 
via a minimax formulation
\begin{equation}\label{eq:minmax}
\min_\phi \max_{f}\ \E_{q_\phi(x)} \left[ 2f(x)^T [s_p(x) - s_{q_\phi}(x)]- \|f(x)\|^2\right].
\end{equation}
Using a similar trick in DSM~\citep{Vincent2011}, \eqref{eq:minmax} has a tractable form as follows
\begin{equation}
\min_\phi \max_{f}\ \E_{q_{\phi}(x,z)}\left[2f(x)^T[s_p(x) - s_{q_\phi(\cdot|z)}(x)] - \|f(x)\|^2\right],
\end{equation}
where $q_\phi(x,z)=q_\phi(x|z)q(z)$.
The introduced auxiliary function $f(x)$ is implemented as a neural network $f_\psi(x)$ in practice, necessitating additional attention to training deep models and tuning hyperparameters.

\subsection{Kernel Stein Discrepancy}
Consider a continuous and positive semi-definite kernel $k(\cdot,\cdot): \R^d \times \R^d \to \R$ and its corresponding RKHS $\mathcal{H}_0$ of real-valued functions in $\R^d$.
The reproducing property of $\mathcal{H}_0$ indicates that for any function $f\in \mathcal{H}_0$, $\left\langle f(\cdot), k(x,\cdot)\right\rangle_{\mathcal{H}_0}=f(x)$.
Given a measure $q$ such that $\int k(x,x)\dif q(x)<\infty$, it follows that $\mathcal{H}_0 \subset L^2(q)$, and the integral operator $S_{q,k}:L^2(q)\to \mathcal{H}_0$ is defined as 
\begin{equation}
    (S_{q,k}f)(\cdot) := \int k(\cdot,y)f(y)\dif q(y).
\end{equation}
The Cartesian product $\mathcal{H}:=\mathcal{H}_0^{\otimes d}$, which consists $f=(f_1, \cdots, f_d)$ with $f_i\in \mathcal{H}_0$, is an extension to vector-valued functions.
The inner product in $\mathcal{H}$ is defined by $\left\langle f,g\right\rangle_{\mathcal{H}}=\sum_{i=1}^d \left\langle f_i, g_i\right\rangle_{\mathcal{H}_0}$.
For $f\in \mathcal{H}$ and $g_0\in \mathcal{H}_0$, we extend $g_0$ and define $\left\langle f,g_0\right\rangle_{\mathcal{H}}= (\left\langle f_1, g_0\right\rangle_{\mathcal{H}_0},\cdots,\left\langle f_d, g_0\right\rangle_{\mathcal{H}_0})$.
For vector-valued inputs $f$, we reload the definition of the operator $S_{q,k}$ and apply it element-wisely.
The Stein discrepancy \citep{Gorham2015, liu2016stein,liu2016kernelized} which measures the difference between $q$ and $p$ is defined as 
\begin{equation}\label{eq:stein-discrepancy}
    \mathrm{S}(q\|p) := \max_{f\in\mathcal{F}}\ \E_{q}[\nabla\log p(x)^T f(x) + \nabla \cdot f(x)],
\end{equation}
where $\mathcal{F}$ is a pre-defined function space.
To tackle the minimization problem of $\mathrm{S}(q\|p)$, the kernel Stein discrepancy (KSD) \citep{liu2016stein, pmlr-v48-chwialkowski16, gorham2017measuring} maximizes $f$ within the unit ball of the RKHS, i.e., $\mathcal{F} = \{f:\|f\|_{\mathcal{H}}\le 1\}$. Since the optima of $f(x)$ has a closed form, KSD has the following equivalent form
\begin{equation}
    \begin{aligned}
        \textrm{KSD}(q\|p) 
        &= \left\|S_{q,k}\nabla\log\frac{p}{q}\right\|_{\mathcal{H}} \\
        &= \left\langle \nabla\log \frac{p}{q}, S_{q,k}\nabla\log\frac{p}{q}\right\rangle_{L^2(q)}^{1/2},
    \end{aligned}
\end{equation}
which can be also seen as a kernelized Fisher divergence \cite{korba2021kernel}.
\citet{gorham2017measuring} proves that under mild conditions of target distribution $p(x)$ and kernel $k$, weak convergence of the KSD implies the weak convergence of distribution.

\section{Proposed Method}
In this section, we present kernel semi-implicit variational inference (KSIVI), which explicitly solves the lower-level optimization in SIVI-SM through kernel tricks.
We will first give the expression of this explicit solution and then discuss the practical implementation of the upper-level optimization.
\subsection{Training Objective}

Instead of considering $f\in L^2(q_\phi)$ which in general does not allow an explicit solution of $f$ in \eqref{eq:minmax}, we seek the optimal $f^\ast$ in an RKHS $\mathcal{H}$ and reformulate this minimax problem as
\begin{equation}\label{eq:kernel_minmax}
    \min_{\phi}\max_{f\in \mathcal{H}} \ \E_{q_{\phi}(x)}\left[2f(x)^T[s_p(x) - s_{q_\phi}(x)] - \|f\|_{\mathcal{H}}^2\right].
\end{equation}
The following theorem shows that the solution $f^\ast$ to the lower-level optimization in \eqref{eq:kernel_minmax} has an explicit form, which allows us to reduce \eqref{eq:kernel_minmax} to a standard optimization problem.
\begin{theorem}\label{thm:opt_f}
    For any variational distribution $q_\phi$, the solution $f^*$ to the lower-level optimization in  \eqref{eq:kernel_minmax} takes the form
    \begin{equation}
        f^*(x)=\E_{ q_\phi(y)} k(x,y)\left[s_p(y)-s_{q_\phi}(y)\right].
    \end{equation}
    Thus the upper-level optimization problem for $\phi$ is
    \begin{equation}\label{eq:semi_ksd}
        \min_\phi\ \text{KSD}(q_\phi \|p)^2=\left\|S_{q_\phi,k}\nabla\log\frac{p}{q_\phi}\right\|_{\mathcal{H}}^2.
    \end{equation}
\end{theorem}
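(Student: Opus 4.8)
The plan is to recognize the inner maximization in \eqref{eq:kernel_minmax} as a concave quadratic functional on the Hilbert space $\mathcal{H}$ and solve it by the standard first-order condition, then simplify the resulting value using the reproducing property. First I would fix $\phi$ and write the objective as $J(f) = 2\langle f, v_\phi\rangle_{?} - \|f\|_{\mathcal{H}}^2$ where $v_\phi(x) := s_p(x) - s_{q_\phi}(x)$ and the first term is an expectation $\E_{q_\phi(x)}[f(x)^T v_\phi(x)]$. The key observation is that, by the reproducing property in $\mathcal{H}$, for each $x$ we have $f(x) = \langle f, k(x,\cdot)\rangle_{\mathcal{H}}$ (applied componentwise via the $\mathcal{H} = \mathcal{H}_0^{\otimes d}$ structure), so the linear term can be rewritten as $\E_{q_\phi(x)}[f(x)^T v_\phi(x)] = \langle f, \E_{q_\phi(x)} k(x,\cdot) v_\phi(x)\rangle_{\mathcal{H}} = \langle f, f^*\rangle_{\mathcal{H}}$ with $f^*(\cdot) = \E_{q_\phi(y)} k(\cdot,y)[s_p(y) - s_{q_\phi}(y)]$, provided this element lies in $\mathcal{H}$ (which holds under the integrability condition $\int k(x,x)\dif q_\phi(x) < \infty$ together with square-integrability of $v_\phi$, so $S_{q_\phi,k} v_\phi \in \mathcal{H}$).

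Once the objective is in the form $J(f) = 2\langle f, f^*\rangle_{\mathcal{H}} - \|f\|_{\mathcal{H}}^2 = \|f^*\|_{\mathcal{H}}^2 - \|f - f^*\|_{\mathcal{H}}^2$, it is immediate that the unique maximizer is $f = f^*$ and the optimal value is $\|f^*\|_{\mathcal{H}}^2$. It then remains to identify $\|f^*\|_{\mathcal{H}}^2$ with $\text{KSD}(q_\phi\|p)^2$. I would do this by invoking the operator notation from the Background section: $f^* = S_{q_\phi,k} v_\phi = S_{q_\phi,k}\nabla\log\frac{p}{q_\phi}$ (using $v_\phi = s_p - s_{q_\phi} = \nabla\log p - \nabla\log q_\phi = \nabla\log\frac{p}{q_\phi}$), so $\|f^*\|_{\mathcal{H}}^2 = \|S_{q_\phi,k}\nabla\log\frac{p}{q_\phi}\|_{\mathcal{H}}^2$, which is exactly the displayed form of $\text{KSD}(q_\phi\|p)^2$ recalled just before Section 3. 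Substituting the optimal value back into \eqref{eq:kernel_minmax} turns the minimax problem into the single minimization \eqref{eq:semi_ksd}, completing the proof.

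The main obstacle — really the only non-routine point — is justifying the interchange of expectation and inner product, i.e. that $x \mapsto k(x,\cdot) v_\phi(x)$ is Bochner-integrable in $\mathcal{H}$ and that $\E_{q_\phi(x)}\langle f, k(x,\cdot) v_\phi(x)\rangle_{\mathcal{H}} = \langle f, \E_{q_\phi(x)} k(x,\cdot) v_\phi(x)\rangle_{\mathcal{H}}$. This follows from Cauchy–Schwarz: $\|k(x,\cdot) v_\phi(x)\|_{\mathcal{H}} \le \sqrt{k(x,x)}\,\|v_\phi(x)\|$, and the standing moment assumptions on $k$ and on the score mismatch guarantee the right-hand side is $q_\phi$-integrable; then the defining property of the Bochner integral gives the exchange, and in fact shows $\E_{q_\phi(x)} k(x,\cdot) v_\phi(x)$ coincides with $(S_{q_\phi,k} v_\phi)(\cdot)$ as defined in the Background. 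Everything else is the elementary completion-of-squares argument for a quadratic on a Hilbert space, so I would state the integrability hypotheses explicitly and keep the rest brief.
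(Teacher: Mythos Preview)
Your proposal is correct and follows essentially the same argument as the paper: apply the reproducing property to rewrite the linear term as an $\mathcal{H}$-inner product against $S_{q_\phi,k}\nabla\log\frac{p}{q_\phi}$, then complete the square to read off the maximizer and optimal value. Your additional care about Bochner integrability and the exchange of expectation with the inner product is a welcome refinement over the paper's terse version, but the route is the same.
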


The detailed proof is deferred to Appendix \ref{app:subsec:proof_opt_f}.
Moreover, by taking advantage of the semi-implicit structure, we have
\begin{equation}
    \begin{aligned}
        &\E_{q_\phi(y)} k(x,y)s_{q_\phi}(y) \\
        = & \E_{q_\phi(y)} \frac{k(x,y)}{q_\phi(y)} \int \nabla_y q_\phi(y|z)q(z) \dif z \\
        = & \int k(x,y)\nabla_y q_{\phi}(y|z)q(z)\dif z\dif y \\
        = & \int k(x,y)s_{q_\phi(\cdot|z)}(y)q_\phi(y|z)q(z)\dif z\dif y \\
        = & \E_{q_\phi(y,z)} k(x,y)s_{q_\phi(\cdot|z)}(y).
    \end{aligned}
\end{equation}
This leads to the following proposition.
\begin{proposition}
    The solution $f^*$ in Theorem \ref{thm:opt_f} can be rewritten as
    \begin{equation}
        f^*(x)=\E_{q_\phi(y,z)} k(x,y)\left[s_p(y)-s_{q_\phi(\cdot|z)}(y)\right].
    \end{equation}
    And the $\text{KSD}(q_\phi \|p)^2$ in \eqref{eq:semi_ksd} has an equivalent expression

\begin{equation}\label{eq:phi_obj}
\begin{split}
\text{KSD}(q_\phi \|p)^2&=\E_{q_\phi(x,z), q_\phi(x
',z')} \big[k(x,x')\cdot \\
\big\langle s_p(x)&-s_{q_\phi(\cdot|z)}(x),s_p(x')-s_{q_\phi(\cdot|z')}(x')\big\rangle\big].
\end{split}
\end{equation}
\end{proposition}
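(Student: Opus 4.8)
The plan is to prove the two claims in turn. The rewriting of $f^*$ follows almost immediately from the display preceding the proposition: starting from $f^*(x)=\E_{q_\phi(y)} k(x,y)[s_p(y)-s_{q_\phi}(y)]$ in Theorem~\ref{thm:opt_f}, I would split off the $s_p$ contribution, which depends on $z$ only through the marginal and so equals $\E_{q_\phi(y,z)}k(x,y)s_p(y)$, and handle the $s_{q_\phi}$ contribution through the chain of identities in the display: write $s_{q_\phi}(y)=\nabla_y q_\phi(y)/q_\phi(y)$, pass $\nabla_y$ through the mixing integral, substitute $\nabla_y q_\phi(y|z)=q_\phi(y|z)\,s_{q_\phi(\cdot|z)}(y)$, and cancel $q_\phi(y)$ against the sampling density of $\E_{q_\phi(y)}$. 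This turns the $s_{q_\phi}$ term into $\E_{q_\phi(y,z)}k(x,y)s_{q_\phi(\cdot|z)}(y)$, and recombining the two pieces gives the asserted form of $f^*$.

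For the closed form of $\text{KSD}(q_\phi\|p)^2$, observe that the first expression for $f^*$ in Theorem~\ref{thm:opt_f} is exactly $S_{q_\phi,k}\nabla\log(p/q_\phi)$, so by \eqref{eq:semi_ksd} we have $\text{KSD}(q_\phi\|p)^2=\|f^*\|_{\mathcal{H}}^2$; it then remains to expand this RKHS norm using the semi-implicit form of $f^*$ from the first part. I would write each coordinate as a Bochner integral, $f^*_i=\E_{q_\phi(y,z)}[s_p(y)-s_{q_\phi(\cdot|z)}(y)]_i\,k(\cdot,y)\in\mathcal{H}_0$, use $\|f^*\|_{\mathcal{H}}^2=\sum_{i=1}^d\langle f^*_i,f^*_i\rangle_{\mathcal{H}_0}$, pull out two independent copies of the expectation (Fubini for Bochner integrals), and apply the reproducing property $\langle k(\cdot,y),k(\cdot,y')\rangle_{\mathcal{H}_0}=k(y,y')$. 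Summing over $i$ recombines the coordinatewise products into the Euclidean inner product $\big\langle s_p(y)-s_{q_\phi(\cdot|z)}(y),\,s_p(y')-s_{q_\phi(\cdot|z')}(y')\big\rangle$, and relabeling $(y,z,y',z')$ as $(x,z,x',z')$ gives \eqref{eq:phi_obj}.

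The computations are routine substitutions; the content lies in the analytic bookkeeping. Passing $\nabla_y$ through the mixing integral requires a Leibniz-rule (domination) hypothesis on $\nabla_y q_\phi(y|z)$, and treating $f^*_i$ as a genuine $\mathcal{H}_0$-valued Bochner integral, together with the Fubini exchange, requires a mild moment condition along the lines of $\E_{q_\phi(y,z)}\sqrt{k(y,y)}\,\|s_p(y)-s_{q_\phi(\cdot|z)}(y)\|<\infty$, which is in any event among the assumptions later used for the variance bound. I expect these integrability verifications, rather than the algebra itself, to be the main point to handle carefully.
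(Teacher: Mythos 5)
Your argument is correct and matches the paper's reasoning: the rewriting of $f^*$ is exactly the chain of identities displayed before the proposition (passing $\nabla_y$ through the mixing integral and cancelling $q_\phi(y)$), and the formula for $\text{KSD}(q_\phi\|p)^2$ is the standard coordinatewise expansion of $\|f^*\|_{\mathcal{H}}^2$ via the reproducing property and Fubini for Bochner integrals. Your extra remarks on the domination and moment hypotheses needed to justify the interchange of derivative, integral, and inner product are a reasonable addition but not part of what the paper records.
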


Since the semi-implicit variational distribution $q_\phi(x,z)$ enables efficient sampling, both $f^*$ and $\text{KSD}(q_\phi \|p)^2$ can be estimated using the Monte Carlo method with samples from $q_\phi(x,z)$.
This way, we have transformed the minimax problem \eqref{eq:kernel_minmax} into a standard optimization problem with a tractable objective function.

\subsection{Practical Implementation}\label{sec:practical-implementation}
Suppose the conditional $q_\phi(x|z)$ admits the reparameterization trick \citep{VAE,titsias14,Rezende14}, i.e., $x\sim q_\phi(\cdot|z)\Leftrightarrow x=T_\phi(z,\xi), \xi\sim q_\xi$, where $T_\phi$ is a parameterized transformation and $q_\xi$ is a base distribution that does not depend on $\phi$.
We now show how to find an optimal variational approximation $q_{\phi}(x)$ that minimizes the KSD between $q_{\phi}(x)$ and $p(x)$ defined in \eqref{eq:phi_obj} using stochastic optimization.

To estimate the gradient $g(\phi) = \nabla_{\phi}\text{KSD}(q_\phi \|p)^2$, we consider two unbiased stochastic gradient estimators in our implementations.
The first one is a vanilla gradient estimator using two batches of Monte Carlo samples $(x_{ri}, z_{ri})\overset{\textrm{i.i.d.}}{\sim} q_{\phi}(x,z)$ where $r=1,2$ and $1\leq i\leq N$, and it is defined as
\begin{equation}\label{eq:sto_grad}
    \hat{g}_{\textrm{vanilla}}(\phi)=\frac{1}{N^2}\sum_{1\leq i,j\leq N}\nabla_\phi \left[k(x_{1i},x_{2j})\left\langle f_{1i},f_{2j}\right\rangle \right],
\end{equation}
where $f_{ri}=s_p(x_{ri})-s_{q_\phi(\cdot|z_{ri})}(x_{ri})$.
We can also leverage U-statistics to design an alternative estimator \citep{MMD,liu2016kernelized}.
Given one batch of samples $(x_{i}, z_{i})\overset{\textrm{i.i.d.}}{\sim} q_{\phi}(x,z)$ for $1\leq i\leq N$, the U-statistic gradient estimator takes the form
\begin{equation}\label{eq:sto_grad_u}
    \hat{g}_{\textrm{u-stat}}(\phi)=\frac{2}{N(N-1)}\sum_{1\leq i<j\leq N}\nabla_\phi \left[k(x_i,x_j)\left\langle f_i,f_j\right\rangle \right],
\end{equation}
where $f_i=s_p(x_i)-s_{q_\phi(\cdot|z_i)}(x_i)$. 
Note that all the gradient terms in $\hat{g}_{\textrm{vanilla}}(\phi)$ and $\hat{g}_{\textrm{u-stat}}(\phi)$ can be efficiently evaluated using the reparameterization trick.

In our implementation, we assume a diagonal Gaussian conditional layer $q_\phi(x|z)=\mathcal{N}(\mu(z;\phi), \mathrm{diag}\{\sigma^2(z;\phi)\})$ where the mean and standard deviation $\mu(z;\phi),\sigma(z;\phi)\in \R^d$ are parametrized by neural networks.
Hence the parameterization formula of $q_\phi(x|z)$ is $x=\mu(z;\phi)+\sigma(z;\phi)\odot\xi$ with $\xi\sim q_\xi=\mathcal{N}(0,I)$, and the corresponding conditional score takes the form $s_{q_\phi(\cdot|z)}(x)=-\xi/\sigma(z;\phi)$.
Note that in principle, one can sample multiple $\xi$ for a single $z_i$ to get a more robust estimation as done in \citet{yin2018semi}.
In our experiments, we find that sampling a single $\xi$ works well.
The full training procedure of KSIVI with the vanilla gradient estimator \eqref{eq:sto_grad} is described in Algorithm \ref{alg:kernel_sivi_vanilla}.
The algorithm with U-statistic gradient estimator \eqref{eq:sto_grad_u} is deferred to Appendix \ref{app:sec:alg}.
Compared to other variants of SIVI, our proposed KSIVI optimizes the KSD objective efficiently and stably with samples from the variational distribution, without using surrogates of exact ELBO, expensive inner-loop MCMC, or additional lower-level optimization.

\begin{algorithm}[t]
    \caption{KSIVI with diagonal Gaussian conditional layer and vanilla gradient estimator}
    \label{alg:kernel_sivi_vanilla}
    \begin{algorithmic}
        \STATE{{\bfseries Input:} target score $s_p(x)$, number of iterations $T$, number of samples $N$ for stochastic gradient.}
        \STATE{{\bfseries Output:} the optimal variational parameters $\phi^\ast$.}
        \FOR{$t=0, \cdots, T-1$}
            \STATE{
            Sample $\{z_{r1},\cdots, z_{rN}\}$ from mixing distribution $q(z)$ for $r=1,2$.
            }
            \STATE{
            Sample $\{\xi_{r1},\cdots, \xi_{rN}\}$ from $\mathcal{N}(0,I)$ for $r=1,2$.
            }
            \STATE{
            Compute $x_{ri} =\mu(z_{ri};\phi)+\sigma(z_{ri};\phi)\odot \xi_{ri}$ and $f_{ri}=s_p(x_{ri})+\frac{\xi_{ri}}{\sigma(z_{ri};\phi)}$.
            }
            \STATE{
            Compute stochastic gradient $\hat{g}_{\textrm{vanilla}}(\phi)$ through \eqref{eq:sto_grad}.
            }
            \STATE{
            Set $\phi\leftarrow\text{optimizer}(\phi, \hat{g}_{\textrm{vanilla}}(\phi))$. 
            }
        \ENDFOR
        \STATE{$\phi^\ast\leftarrow\phi$}
    \end{algorithmic}
\end{algorithm}

\section{Theoretical Results}\label{sec: theory}

In this section, we provide a theoretical guarantee for the convergence of KSIVI as a black-box variational inference (BBVI) problem \citep{ranganath2014black, titsias14}.
This is non-trivial as the conventional assumptions in stochastic optimization are challenging to meet within the context of variational inference settings.
Recent works \cite{kim2023convergence, domke2023provable, kim2023linear} attempt to analyze the location-scale family, which has poor approximation capacity.
For SIVI, \citet{moens2021} assumes that $q(z)$ is a discrete measure with $n$ components and gives a sub-optimal rate depending on $\poly(n)$.
We are not aware of any analysis of the convergence of the full version SIVI with nonlinear reparameterization.

Similar to prior works, we first investigate the smoothness of the loss function and prove an upper bound of the variance of the stochastic gradient.
Then we apply the standard analysis in stochastic optimization, which ensures $\{\phi_t\}_{t\geq 1}$ converges to a stationary point.

Define the loss function as
\begin{equation}
    \mathcal{L}(\phi):=\text{KSD}(q_\phi \| p)^2=\left\|S_{q_\phi,k}\nabla\log\frac{p}{q_\phi}\right\|_{\mathcal{H}}^2.
\end{equation}
We consider a diagonal Gaussian conditional layer.
The conditional score function is $s_{q_\phi(\cdot|z)}(x) = -\frac{\xi}{\sigma(z;\phi)}$ for $x=\mu(z;\phi)+\sigma(z;\phi)\odot \xi$ with $\xi\sim \mathcal{N}(0,I)$.
We assume an unbiased gradient estimator in \eqref{eq:sto_grad} or \eqref{eq:sto_grad_u}.
An SGD update $\phi_{t+1}=\phi_t-\eta\hat{g}_t$ is then applied with learning rate $\eta$ for $T$ iterations, where $\hat{g}_t$ is the stochastic gradient in the $t$-th iteration.

\begin{assumption}\label{asp:kernel}
    There exists a constant $B>0$ such that $\forall x,y\in \R^d$, $\max\{k,\|\nabla_1 k\|, \|\nabla_{11} k\|, \|\nabla_{12} k\|\}\leq B^2$.
\end{assumption}

\begin{assumption}\label{asp:target}
    $\log p$ is three times continuously differentiable and $\|\nabla^2 \log p(x)\|\leq L$, $\|\nabla^3 \log p(x)\|\leq M$.
\end{assumption}

\begin{assumption}\label{asp:nn}
    There exists a constant $G\geq 1$ such that $\max\{\|\nabla_\phi \mu(z;\phi)\|,  \|\nabla^2_\phi \mu(z;\phi)\|\} \leq G(1+\|z\|)$.
    The same inequalities hold for $\sigma(z;\phi)$. 
    Besides, for any $z,\phi$, $\sigma(z;\phi)$ has a uniform lower bound $1/\sqrt{L}$.
\end{assumption}

As an application to location-scale family, where $\mu(z;\phi)\equiv\mu\in\R^d$, $\sigma(z;\phi)\equiv\sigma\in\R^d$, we have $G=1$.
For general nonlinear neural networks, similar first order smoothness is also assumed in the analysis of GANs \cite{arora2017generalization}. 
Note that we do not directly assume a uniformly bounded constant of smoothness of $\mu(z;\phi)$ and $\sigma(z;\phi)$, but it can grow with $\|z\|$. 
We find in practice this assumption is valid and the constant $G$ is in a reasonable range.
See Appendix \ref{app:subsec:asp} for numerical evidence.
The lower bound of $\sigma(z;\phi)$ is to avoid degeneracy and is required in \citet{domke2023provable, kim2023linear}, in which projected SGD is applied to ensure this.
Similarly, our results can be easily extended to the setting of stochastic composite optimization and  projected SGD optimizer, which we omit due to limited space.

\begin{assumption}\label{asp:moment}
    The mixing distribution $q(z)$ and the variational distribution $q_\phi(x)$  have bounded 4th-moment, i.e., $\E_{q(z)}\|z\|^4\lesssim d_z^2$, $\E_{q_\phi(x)}\|x\|^4\leq s^4$.
\end{assumption}

Overall, Assumption \ref{asp:kernel} guarantees that the kernel operator is well-defined and is easy to verify for commonly used Gaussian radial basis function (RBF) kernel and the inverse multi-quadratic (IMQ) kernel \citep{gorham2017measuring}.
Assumption \ref{asp:target} and \ref{asp:nn} ensures the smoothness of target distribution and variational distribution.
Assumption \ref{asp:moment} is a technical and reasonable assumption to bound the variance of stochastic gradient.

Based on Assumption \ref{asp:target} and \ref{asp:moment}, we can give a uniform upper bound on $\E_{q_\phi(x)}\|s_p(x)\|^4$, which is crucial to the subsequent analysis.

\begin{proposition}\label{prop:score_bound}
    Under Assumption \ref{asp:target} and \ref{asp:moment}, we have $\E_{q_\phi(x)}\|s_p(x)\|^4\lesssim L^4(s^4+\|x^*\|^4):=C^2$, where $x^*$ is any zero point of $s_p$.
\end{proposition}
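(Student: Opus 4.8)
The plan is to bound $\E_{q_\phi(x)}\|s_p(x)\|^4$ by controlling how fast the score $s_p = \nabla \log p$ can grow away from one of its zeros, and then combine this with the fourth-moment bound on $q_\phi$ from Assumption \ref{asp:moment}. First I would fix a point $x^*$ with $s_p(x^*) = 0$; such a point exists because $\nabla^2 \log p$ is bounded (Assumption \ref{asp:target}) so $\log p$ is strongly concave outside a bounded region in the worst case — actually, more carefully, I would simply assume such a zero exists as the statement does (``$x^*$ is any zero point of $s_p$''), which is the natural condition for $p$ to be a well-defined density with a mode. Then, by the mean value / fundamental theorem of calculus applied to $t \mapsto s_p(x^* + t(x - x^*))$, I get
\begin{equation}
  s_p(x) = s_p(x) - s_p(x^*) = \left(\int_0^1 \nabla^2 \log p\big(x^* + t(x-x^*)\big)\, \dif t\right)(x - x^*),
\end{equation}
so that $\|s_p(x)\| \le L \|x - x^*\|$ using the operator-norm bound $\|\nabla^2 \log p\| \le L$.

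Next I would raise this to the fourth power and take expectation under $q_\phi(x)$:
\begin{equation}
  \E_{q_\phi(x)}\|s_p(x)\|^4 \le L^4\, \E_{q_\phi(x)}\|x - x^*\|^4.
\end{equation}
The remaining step is to expand $\|x - x^*\|^4 = (\|x\|^2 - 2\langle x, x^*\rangle + \|x^*\|^2)^2$ and bound each term using Assumption \ref{asp:moment} ($\E_{q_\phi(x)}\|x\|^4 \le s^4$), Cauchy–Schwarz, and Jensen's inequality (so that, e.g., $\E\|x\|^3 \le (\E\|x\|^4)^{3/4} \le s^3$ and $\E\|x\|^2 \le s^2$). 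Since we only care about the bound up to constant factors (the $\lesssim$ notation), all the cross-terms are absorbed, giving $\E_{q_\phi(x)}\|x - x^*\|^4 \lesssim s^4 + \|x^*\|^4$, and hence $\E_{q_\phi(x)}\|s_p(x)\|^4 \lesssim L^4(s^4 + \|x^*\|^4)$, which is the claimed $C^2$.

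I do not anticipate a serious obstacle here — this is essentially a Lipschitz-score estimate plus moment bookkeeping. The only mildly delicate point is the existence of a zero $x^*$ of $s_p$: strictly speaking this needs $\log p$ to attain an interior maximum, which follows if $p$ is a proper density whose log-concavity at infinity (guaranteed up to the bounded-Hessian caveat) forces a critical point, but the cleanest route is to treat it as part of the hypothesis (the statement already says ``any zero point of $s_p$'', presuming one exists). A secondary bookkeeping nuisance is making the constants in the expansion of $\|x-x^*\|^4$ explicit if one wanted a non-asymptotic statement, but under the paper's $\lesssim$ convention this is immediate. I would also note in passing that the third-derivative bound $M$ from Assumption \ref{asp:target} is not needed for this particular proposition — it will be used later for the smoothness/variance analysis — so the proof of Proposition \ref{prop:score_bound} relies only on the Hessian bound and the moment assumption.
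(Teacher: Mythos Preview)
Your proposal is correct and matches the paper's own argument essentially line for line: the paper writes $\|s_p(x)\|^4=\|s_p(x)-s_p(x^*)\|^4\le L^4\|x-x^*\|^4$ and then invokes AM--GM to get $\|x-x^*\|^4\lesssim \|x\|^4+\|x^*\|^4$, which is exactly your Lipschitz-plus-moment bookkeeping. Your side remarks (that $M$ is unused here, and that the existence of $x^*$ is taken as given) are also accurate.
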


\begin{theorem}\label{thm:smooth}
    The objective $\mathcal{L}(\phi)$ is $L_\phi$-smooth, where 
    \begin{equation}
        L_\phi\lesssim B^2G^2d_z\log d\left[(1\vee L)^3+Ld+ M^2 + C\right].
    \end{equation}
\end{theorem}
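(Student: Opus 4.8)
The plan is to establish $L_\phi$-smoothness by bounding $\|\nabla_\phi^2 \mathcal{L}(\phi)\|$ uniformly in $\phi$, working from the closed-form expression \eqref{eq:phi_obj}. Write $h_\phi(x,z) := s_p(x) - s_{q_\phi(\cdot|z)}(x)$ and recall $\mathcal{L}(\phi) = \E_{q_\phi(x,z),q_\phi(x',z')}[k(x,x')\langle h_\phi(x,z), h_\phi(x',z')\rangle]$. Using the reparameterization $x = T_\phi(z,\xi) = \mu(z;\phi) + \sigma(z;\phi)\odot\xi$, I would rewrite the whole expectation over the $\phi$-independent measure $q(z)q(z')q_\xi(\xi)q_\xi(\xi')$, so that $\nabla_\phi$ passes inside the expectation and only hits the integrand $F(\phi;z,z',\xi,\xi') := k(x,x')\langle h_\phi(x,z), h_\phi(x',z')\rangle$ with $x = T_\phi(z,\xi)$, $x' = T_\phi(z',\xi')$. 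Then $L_\phi \le \E\|\nabla_\phi^2 F\|$, and the task reduces to a (tedious but routine) product-rule expansion of the Hessian of $F$.

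The key intermediate estimates I would assemble are: (i) bounds on the derivatives of the transformation, $\|\nabla_\phi x\|, \|\nabla_\phi^2 x\| \lesssim G(1+\|z\|)(1+\|\xi\|)$, coming directly from Assumption~\ref{asp:nn} and the form of $T_\phi$; (ii) bounds on the kernel factor and its $x$-derivatives via Assumption~\ref{asp:kernel}, composed with (i) through the chain rule; (iii) bounds on the conditional-score factor $s_{q_\phi(\cdot|z)}(x) = -\xi/\sigma(z;\phi)$ and its $\phi$-derivatives — here the uniform lower bound $\sigma \ge 1/\sqrt L$ from Assumption~\ref{asp:nn} is what keeps $\|s_{q_\phi(\cdot|z)}(x)\| \lesssim \sqrt L\|\xi\|$ and its first/second $\phi$-derivatives $\lesssim L^{3/2}G(1+\|z\|)\|\xi\|$ under control, which is where the $(1\vee L)^3$ term originates; (iv) bounds on the target-score factor $s_p(x)$ and its derivatives: $\|\nabla_x s_p\| = \|\nabla^2\log p\|\le L$ gives the $Ld$-type term (the $d$ enters because $\nabla\cdot$-style traces or dimension-counting in the Hessian of $F$ produce a factor $d$, or rather $\log d$ after a more careful union/covering argument for the sub-Gaussian tails — I would follow whatever refinement the paper uses to get $\log d$ rather than $d$), and $\|\nabla^3\log p\|\le M$ feeds the $M^2$ term after squaring in the product with a second score difference. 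After collecting all terms, I take expectations: every stray $\|z\|, \|z'\|, \|\xi\|, \|\xi'\|$ gets absorbed using $\E_{q(z)}\|z\|^4 \lesssim d_z^2$ (Assumption~\ref{asp:moment}), the Gaussian moments of $\xi$, Cauchy–Schwarz to split products, and Proposition~\ref{prop:score_bound} to handle $\E_{q_\phi(x)}\|s_p(x)\|^4 \lesssim C^2$ wherever an un-differentiated $s_p(x)$ multiplies a differentiated factor.

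The main obstacle will be bookkeeping: $\nabla_\phi^2$ of a triple product $k \cdot \langle h, h'\rangle$ with each factor itself a composition $(\cdot)\circ T_\phi$ generates on the order of a dozen structurally distinct terms, and each must be matched against the right combination of $B^2$, powers of $L$, $M$, $C$, $G$, $d_z$, and $d$ (or $\log d$). The delicate points are (a) making sure the worst term is exactly the claimed $B^2 G^2 d_z \log d\,[(1\vee L)^3 + Ld + M^2 + C]$ and not something larger — in particular verifying that the $d$-dependence is only linear (from Hessians of $d$-dimensional scores) and that the extra $\log d$ (not a power of $d$) is achievable, which I expect requires replacing crude operator-norm bounds on tensors by a sub-Gaussian maximal-inequality argument on the coordinates of $\xi$; and (b) confirming that no term needs a higher moment of $q_\phi(x)$ than 4th (so Assumption~\ref{asp:moment} suffices) — this is why one splits products so that at most one un-differentiated score factor appears per term before applying Cauchy–Schwarz. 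Once the term-by-term bounds are in hand, the final assembly is just the triangle inequality and collecting constants, so I would present the Hessian expansion as a lemma, prove each block of bounds (i)–(iv) separately, and then combine, deferring the fully expanded calculation to an appendix.
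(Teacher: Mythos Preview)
Your approach is essentially the paper's: pass $\nabla_\phi^2$ inside the reparameterized expectation, expand the product $k(x,x')\langle h,h'\rangle$ by the product rule into three blocks (Hessian of $k$ times inner product; mixed first derivatives; $k$ times Hessian of the inner product), and bound each block term-by-term using Assumptions~\ref{asp:kernel}--\ref{asp:moment} together with Cauchy--Schwarz and Proposition~\ref{prop:score_bound}.

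The one concrete gap is your handling of the $\log d$ factor. You write $\|\nabla_\phi x\|\lesssim G(1+\|z\|)(1+\|\xi\|)$ and then speculate that $\log d$ will appear via ``a more careful union/covering argument.'' The paper's mechanism is simpler and more specific: since $\nabla_\phi(\sigma\odot\xi)=\diag(\xi)\,\nabla_\phi\sigma$, one has the operator-norm bound $\|\nabla_\phi(\sigma\odot\xi)\|\le\|\xi\|_\infty\|\nabla_\phi\sigma\|$, so in fact $\|\nabla_\phi x\|,\|\nabla_\phi^2 x\|\le G(1+\|z\|)(1+\|\xi\|_\infty)$. The $\log d$ then comes directly from the Gaussian maximal inequality $\E\|\xi\|_\infty^4\lesssim\log^2 d$, not from any trace or covering argument. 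If you keep $\|\xi\|$ in place of $\|\xi\|_\infty$ you will get a polynomial-in-$d$ factor and miss the stated bound. Separately, the linear $d$ in the $Ld$ term is more elementary than you suggest: it arises simply from $\E\|\xi/\sigma\|^2\le L\,\E\|\xi\|^2=Ld$ when bounding $\E\|s_p(x)+\xi/\sigma\|^2$, not from any divergence-type trace. With these two clarifications your plan goes through exactly as the paper's proof does.
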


\begin{theorem}\label{thm:var} 
   Both gradient estimators $\hat{g}_{\textrm{vanilla}}$ and $\hat{g}_{\textrm{u\text{-}stat}}$ have bounded variance $\Sigma=\frac{\Sigma_0}{N}$, where 
    \begin{equation}
        \Sigma_0\lesssim B^4G^2d_z\log d[L^3d+L^2d^2+C^2].
    \end{equation}
\end{theorem}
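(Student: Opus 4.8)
The plan is to bound the variance of each gradient estimator by controlling the fluctuation of the per-pair summand $h_{ij} := \nabla_\phi[k(x_i,x_j)\langle f_i, f_j\rangle]$, where $f_i = s_p(x_i) - s_{q_\phi(\cdot|z_i)}(x_i)$. For both the vanilla estimator \eqref{eq:sto_grad} and the U-statistic estimator \eqref{eq:sto_grad_u}, standard facts about averages of i.i.d. terms and about U-statistics give $\Var(\hat g) \lesssim \frac{1}{N}\,\E\|h_{ij}\|^2$ (for the U-statistic one uses the Hoeffding decomposition; the leading $1/N$ term is governed by the variance of the first projection, which is itself dominated by $\E\|h_{ij}\|^2$). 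So it suffices to prove $\E\|h_{ij}\|^2 \lesssim B^4 G^2 d_z \log d\,[L^3 d + L^2 d^2 + C^2]$, and then $\Sigma_0$ is exactly this bound.

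Next I would expand $h_{ij}$ by the product rule: it splits into a term with $\nabla_\phi k(x_i,x_j)$ (which, via the chain rule through $x = \mu(z;\phi) + \sigma(z;\phi)\odot\xi$, pulls in $\|\nabla_1 k\|\le B^2$ from Assumption~\ref{asp:kernel} and $\|\nabla_\phi x\|\lesssim G(1+\|z\|)$ from Assumption~\ref{asp:nn}) times $\langle f_i,f_j\rangle$, plus a term with $k(x_i,x_j)\le B^2$ times $\nabla_\phi\langle f_i,f_j\rangle$. The latter requires differentiating $f_i$: $\nabla_\phi f_i = \nabla_\phi x_i^\top \nabla^2\log p(x_i) - \nabla_\phi s_{q_\phi(\cdot|z_i)}(x_i)$, where the Hessian is bounded by $L$ (Assumption~\ref{asp:target}) and, for the diagonal Gaussian layer, $s_{q_\phi(\cdot|z)}(x) = -\xi/\sigma(z;\phi)$ so its $\phi$-gradient is controlled by $\|\nabla_\phi\sigma\|/\sigma^2 \lesssim G(1+\|z\|)\cdot L$ using the lower bound $\sigma\ge 1/\sqrt L$. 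Collecting these, $\|h_{ij}\|$ is bounded by a polynomial in $B, G, L$, times $(1+\|z_i\|)(1+\|z_j\|)$-type factors, times $(1 + \|f_i\| + \|f_j\| + \|\xi_i\|/\sigma + \dots)$; the key point is that all $x$- and $\xi$-dependence enters through $\|f_i\|$, $\|\xi_i\|$, $\|s_p(x_i)\|$, and the scaling factors. Then I take expectations: $\E\|z\|^4 \lesssim d_z^2$ and $\E\|x\|^4 \le s^4$ (Assumption~\ref{asp:moment}), $\E\|s_p(x)\|^4 \lesssim C^2$ (Proposition~\ref{prop:score_bound}), $\E\|\xi\|^2 = d$ and $\E\|\xi\|^4 \lesssim d^2$ for the Gaussian reparameterization noise, and Cauchy--Schwarz to handle products of these across the two independent indices. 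The dimension dependence $L^3 d + L^2 d^2 + C^2$ emerges from the three source terms: $\E\|f\|^2$ contributes the $s_{q_\phi}$ part $\E\|\xi/\sigma\|^2 \lesssim Ld$ (times another $L^2$ from the Hessian factor in $\nabla_\phi f$, giving $L^3 d$), the cross/noise terms give $L^2 d^2$, and the $\|s_p\|$ contributions give $C^2$.

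The $\log d$ factor is the one subtlety that does not come from a crude norm bound: it must enter through a more careful handling of the kernel's tensor structure on $\mathcal H = \mathcal H_0^{\otimes d}$ — presumably via a maximal-inequality / union-bound argument over coordinates when passing from the operator-norm bounds in Assumption~\ref{asp:kernel} to the Hilbert-space norm of the vector-valued $\nabla_\phi$ expressions — exactly as in the proof of Theorem~\ref{thm:smooth}, which carries the same $\log d$. I would reuse that mechanism verbatim. The main obstacle, then, is bookkeeping: tracking which of the many product terms in the expanded $\|h_{ij}\|^2$ carries which powers of $L$ and $d$, and making sure Cauchy--Schwarz is applied so that the two independent sample indices never produce a worse-than-$C^2$ or worse-than-$d^2$ term. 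Everything else is an application of the moment assumptions already in place; no new analytic idea beyond those used for the smoothness theorem is needed.
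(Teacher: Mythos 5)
Your overall plan matches the paper's proof: reduce $\Var(\hat g)$ to $\frac1N\E\|h_{ij}\|^2$ (for the U-statistic, the first Hoeffding projection is dominated by $\E\|h_{ij}\|^2$), split $h_{ij}$ by the product rule into a $\nabla_\phi k$ term and a $k\,\nabla_\phi\langle f_i,f_j\rangle$ term, then invoke Assumptions~\ref{asp:kernel}--\ref{asp:moment}, Proposition~\ref{prop:score_bound}, and Cauchy--Schwarz across the independent indices. The one place you go wrong is the provenance of the $\log d$ factor. It does not come from the tensor structure of $\mathcal H=\mathcal H_0^{\otimes d}$ or any maximal inequality over coordinates of the Hilbert-space norm. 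It comes from the reparameterization itself: writing $x=\mu(z;\phi)+\sigma(z;\phi)\odot\xi$, one has $\nabla_\phi x=\nabla_\phi\mu+\diag\{\xi\}\,\nabla_\phi\sigma$, and the operator norm of $\diag\{\xi\}$ equals $\|\xi\|_\infty$, giving $\|\nabla_\phi x\|\lesssim G(1+\|z\|)(1+\|\xi\|_\infty)$; the paper then uses $\E\|\xi\|_\infty^4\lesssim\log^2 d$ for standard Gaussian $\xi$. This matters for your accounting: the moment facts you list ($\E\|\xi\|^2=d$, $\E\|\xi\|^4\lesssim d^2$) are for the Euclidean norm and are correct where $\|\xi/\sigma\|$ appears (producing the $Ld$ and $L^2d^2$ contributions), but if you substitute $\|\xi\|$ where the argument actually needs $\|\xi\|_\infty$ — i.e.\ in bounding $\nabla_\phi x$ and hence $\nabla_\phi k$ — you would lose a factor of $d/\log d$. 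You say you would ``reuse the mechanism from Theorem~\ref{thm:smooth} verbatim,'' which would save you in practice, but the mechanism you would be reusing is the $\diag\{\xi\}$ operator-norm bound, not a union bound over kernel coordinates.
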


\begin{theorem}\label{thm:sgd}
    Under Assumption \ref{asp:kernel}-\ref{asp:moment}, iterates from SGD update $\phi_{t+1}=\phi_t-\eta \hat{g}_t$ with proper learning rate $\eta$ include an $\varepsilon$-stationary point $\hat{\phi}$ such that $\E [\|\nabla_\phi\mathcal{L}(\hat{\phi})\|]\leq\varepsilon$, if 
    \begin{equation}
        T\gtrsim \frac{L_\phi\mathcal{L}_0}{\varepsilon^2}\left(1+\frac{\Sigma_0}{N\varepsilon^2}\right),
    \end{equation}
    where $\mathcal{L}_0:=\mathcal{L}(\phi_0)-\inf_{\phi}\mathcal{L}$.
\end{theorem}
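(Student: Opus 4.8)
The plan is to invoke the textbook analysis of SGD for smooth nonconvex objectives with bounded-variance unbiased gradients, using the two ingredients already assembled: the $L_\phi$-smoothness from Theorem~\ref{thm:smooth} and the variance bound $\Sigma = \Sigma_0/N$ from Theorem~\ref{thm:var}. First I would record that $\hat{g}_t$ is unbiased, i.e.\ $\E[\hat{g}_t \mid \phi_t] = \nabla_\phi\mathcal{L}(\phi_t)$ (true for both estimators in \eqref{eq:sto_grad} and \eqref{eq:sto_grad_u}), and that $\E\|\hat{g}_t - \nabla_\phi\mathcal{L}(\phi_t)\|^2 \le \Sigma_0/N$. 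Starting from the $L_\phi$-smoothness descent inequality applied to the update $\phi_{t+1} = \phi_t - \eta\hat{g}_t$,
\begin{equation}
\mathcal{L}(\phi_{t+1}) \le \mathcal{L}(\phi_t) - \eta\,\nabla_\phi\mathcal{L}(\phi_t)^T\hat{g}_t + \frac{L_\phi\eta^2}{2}\|\hat{g}_t\|^2 ,
\end{equation}
I would take conditional expectation given $\phi_t$, use unbiasedness to turn the cross term into $-\eta\|\nabla_\phi\mathcal{L}(\phi_t)\|^2$, and bound $\E\|\hat{g}_t\|^2 \le \|\nabla_\phi\mathcal{L}(\phi_t)\|^2 + \Sigma_0/N$. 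Choosing $\eta \le 1/L_\phi$ makes the coefficient of $\|\nabla_\phi\mathcal{L}(\phi_t)\|^2$ at most $-\eta/2$, giving
\begin{equation}
\E[\mathcal{L}(\phi_{t+1}) \mid \phi_t] \le \mathcal{L}(\phi_t) - \frac{\eta}{2}\|\nabla_\phi\mathcal{L}(\phi_t)\|^2 + \frac{L_\phi\eta^2\Sigma_0}{2N}.
\end{equation}

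Next I would take total expectation, telescope over $t = 0,\dots,T-1$, and use $\mathcal{L} \ge \inf_\phi\mathcal{L}$ together with $\mathcal{L}_0 = \mathcal{L}(\phi_0) - \inf_\phi\mathcal{L}$ to obtain
\begin{equation}
\frac{1}{T}\sum_{t=0}^{T-1}\E\|\nabla_\phi\mathcal{L}(\phi_t)\|^2 \le \frac{2\mathcal{L}_0}{\eta T} + \frac{L_\phi\eta\Sigma_0}{N}.
\end{equation}
Defining $\hat\phi$ to be $\phi_t$ with $t$ drawn uniformly from $\{0,\dots,T-1\}$ (the standard way to extract a stationary point), the left side equals $\E\|\nabla_\phi\mathcal{L}(\hat\phi)\|^2$. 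I would then optimize the right-hand side over $\eta \in (0, 1/L_\phi]$: the balancing choice is $\eta = \min\{1/L_\phi,\ \sqrt{2\mathcal{L}_0 N/(L_\phi\Sigma_0 T)}\}$, which yields a bound of order $\frac{L_\phi\mathcal{L}_0}{T} + \sqrt{\frac{L_\phi\mathcal{L}_0\Sigma_0}{NT}}$ for $\E\|\nabla_\phi\mathcal{L}(\hat\phi)\|^2$. Requiring this to be $\le \varepsilon^2$ forces both terms to be $\lesssim \varepsilon^2$, i.e.\ $T \gtrsim L_\phi\mathcal{L}_0/\varepsilon^2$ and $T \gtrsim L_\phi\mathcal{L}_0\Sigma_0/(N\varepsilon^4)$, which combine exactly into $T \gtrsim \frac{L_\phi\mathcal{L}_0}{\varepsilon^2}\bigl(1 + \frac{\Sigma_0}{N\varepsilon^2}\bigr)$. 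Finally, since $\E\|\nabla_\phi\mathcal{L}(\hat\phi)\| \le (\E\|\nabla_\phi\mathcal{L}(\hat\phi)\|^2)^{1/2}$ by Jensen, the same $T$ guarantees $\E\|\nabla_\phi\mathcal{L}(\hat\phi)\| \le \varepsilon$, as claimed.

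There is little genuine difficulty here once Theorems~\ref{thm:smooth} and~\ref{thm:var} are in hand — the argument is the classical nonconvex SGD rate of \citet{ghadimi2013stochastic}. The only point requiring a small amount of care is the learning-rate bookkeeping: one must confirm that the two regimes of the $\min$ in $\eta$ reproduce the two summands inside the parenthesis of the stated complexity, and that the constant $1/\sqrt{L}$ lower bound on $\sigma(z;\phi)$ (Assumption~\ref{asp:nn}) is what keeps the SGD iterates inside the region where Theorems~\ref{thm:smooth} and~\ref{thm:var} hold — strictly speaking this needs a projection step, which the paper notes can be added without changing the rate. I would state this caveat explicitly and otherwise present the computation above essentially verbatim.
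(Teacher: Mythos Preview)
Your proposal is correct and follows essentially the same route as the paper's proof: both apply the $L_\phi$-smoothness descent lemma, take conditional expectation using unbiasedness and the variance bound $\Sigma_0/N$, telescope, and then optimize the learning rate $\eta \asymp \min\{1/L_\phi,\ \sqrt{N\mathcal{L}_0/(L_\phi\Sigma_0 T)}\}$ to obtain the $\sqrt{\Sigma_0 L_\phi\mathcal{L}_0/(NT)} + L_\phi\mathcal{L}_0/T$ bound before solving for $T$. Your write-up is in fact slightly more explicit than the paper's (you spell out the uniform-random choice of $\hat\phi$ and the Jensen step to pass from the squared-norm bound to $\E\|\nabla_\phi\mathcal{L}(\hat\phi)\|\le\varepsilon$), but the argument is the same classical Ghadimi--Lan nonconvex SGD analysis.
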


The complete proofs are deferred to Appendix \ref{app:subsec:proof}.
Theorem \ref{thm:smooth} and \ref{thm:var} imply that the optimization problem $\min_\phi \mathcal{L}(\phi)$ satisfies the standard assumptions in non-convex optimization literature.
Therefore, classic results of SGD can be applied to our scenario \citep{ghadimi2013stochastic}.

Theorem \ref{thm:sgd} implies that the sequence $\{\phi_t\}$ converge to a stationary point of the objective $\mathcal{L}(\phi)$.  
Since the training objective is squared KSD instead of traditional ELBO in BBVI, $\mathcal{L}(\phi)$ is nonconvex even for location-scale family.
The convergence in loss function $\mathcal{L}$ is generally inaccessible.
We hope future works can shed more insights into this issue.

\section{Related Works}

To address the limitation of standard VI, implicit VI constructs a flexible variational family through non-invertible mappings parameterized by neural networks.
However, the main issue therein is density ratio estimation, which is difficult in high dimensions \cite{sugiyama2012density}.
Besides SIVI, there are a number of recent advancements in this field.
\citet{molchanov2019doubly} have further extended SIVI in the context of generative models.
\citet{sobolev2019importance} introduce a new surrogate of ELBO through importance sampling.
UIVI \cite{titsias2019unbiased} runs inner-loop MCMC to get an unbiased estimation of the gradient of ELBO.
KIVI \cite{Shi18} constrains the density ratio estimation within an RKHS.
LIVI \cite{uppal2023implicit} approximates the intractable entropy with a Gaussian distribution by linearizing the generator.

Besides ELBO-based training of VI, many works consider to minimize Fisher divergence or its variants \cite{yu2023semi, ranganath2016operator, grathwohl2020learning, dong2022particle, cheng2023particle}.
Leveraging the minimax formulation of Fisher divergence, these methods try to alternatively optimize the variational distribution and an adversarial function in certain function classes, typically neural networks.
Although it does not rely on surrogates of ELBO, the lower-level optimization can not be done accurately in general, thus inducing unavoidable bias for the training of variational distribution.
More importantly, it involves expensive extra computation during training.

To fix this computation issue, another closely related work \citep{korba2021kernel} proposes a particle-based VI method, KSDD, which follows the KSD flow to minimize KSD, a kernelized version of Fisher divergence.
It utilizes the fact that the Wasserstein gradient of KSD can be directly estimated by the empirical particle distribution.
\citet{korba2021kernel} also discusses the theoretical properties of KSDD.
However, our formulation and the "denoising" derivation are not naive extensions, since KSIVI only requires the function value of kernel, while KSDD relies on twice derivatives of the kernel.
We believe this is a crucial point that distinguishes KSIVI, since less smooth kernels can be leveraged, which may have stronger power as a distance, e.g. Riesz kernel \citep{Altekrger2023NeuralWG}.

As for the convergence guarantee of BBVI, previous works mainly focus on ELBO objectives and location-scale families.
In particular, \citet{domke2019provable} proves the bound of the gradient variance and later \citet{domke2020provable} shows the smoothness guarantee of the loss function, both of which are under various structural assumptions.
The first self-contained analysis of the convergence of location-scale family has been recently proposed by \citet{kim2023convergence, domke2023provable}, where some important variants like STL gradient estimator and proximal gradient descent are also discussed.
Still, there is no theoretical convergence guarantee of (semi) implicit VI to the best of our knowledge.

\begin{figure}[!t]
    \centering
    \includegraphics[width=\linewidth]{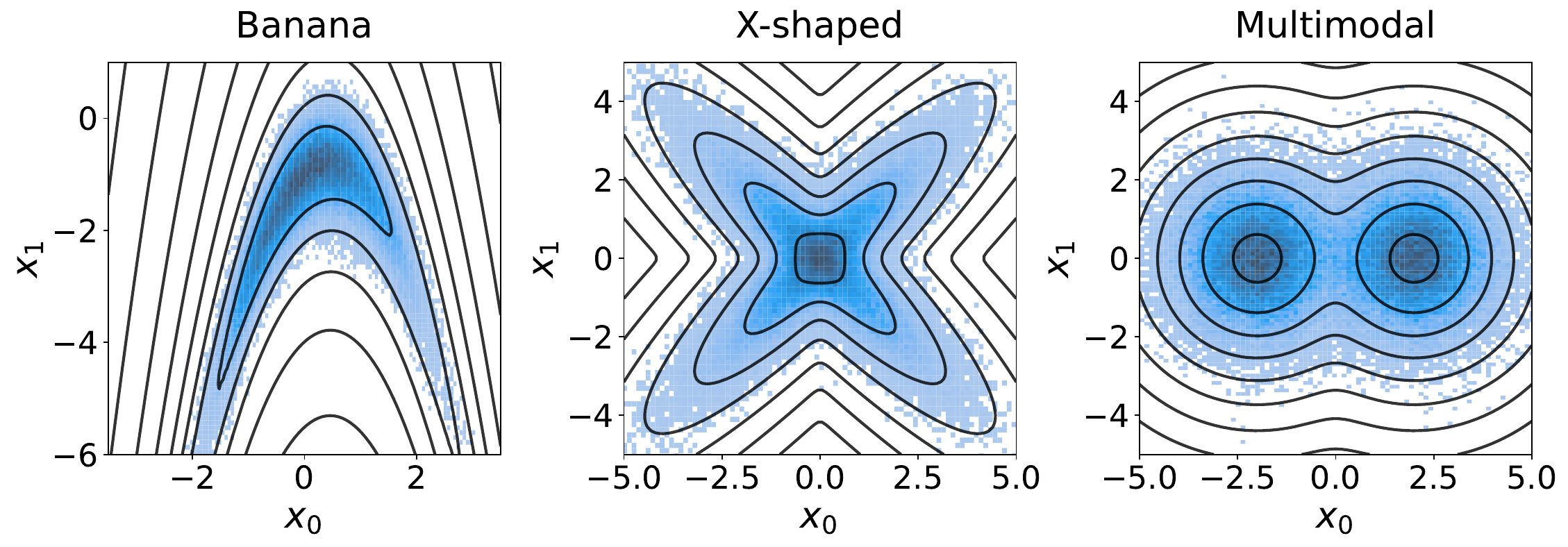}
    \caption{\textbf{Performances of KSIVI on toy examples}. The histplots in blue represent the estimated densities using 100,000 samples generated from KSIVI's variational approximation. The black lines depict the contour of the target distributions.
    }
    \label{figure: samples-toy}
\end{figure}

\begin{figure}[t]
    \centering
    \includegraphics[width=\linewidth]{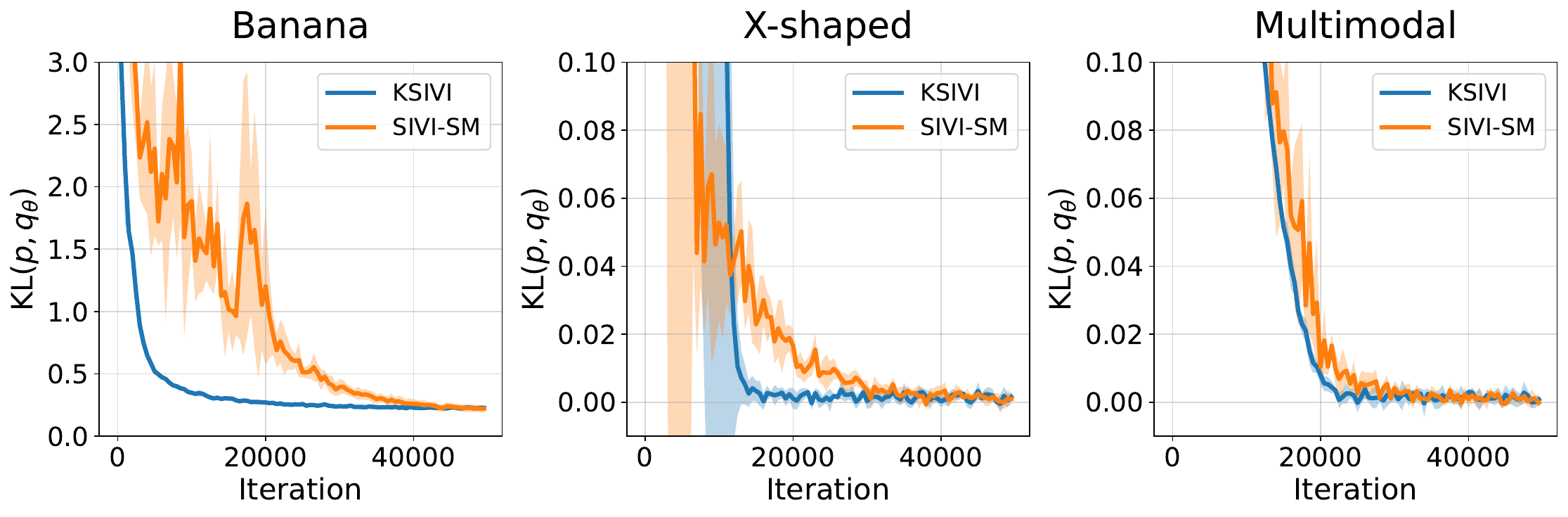}
    \caption{\textbf{Convergence of KL divergence during training obtained by different methods on toy examples.} The KL divergences are estimated using the Python ITE module \citep{ITE2014} with 100,000 samples.
    The results are averaged over 5 independent computations with the standard deviation as the shaded region.
    }
    \label{figure: kl-toy}
\end{figure}

\begin{figure*}[t]
   \centering
   \subfigure{
   \begin{minipage}[t]{0.3\linewidth}
   \centering
   \includegraphics[width=1\textwidth]{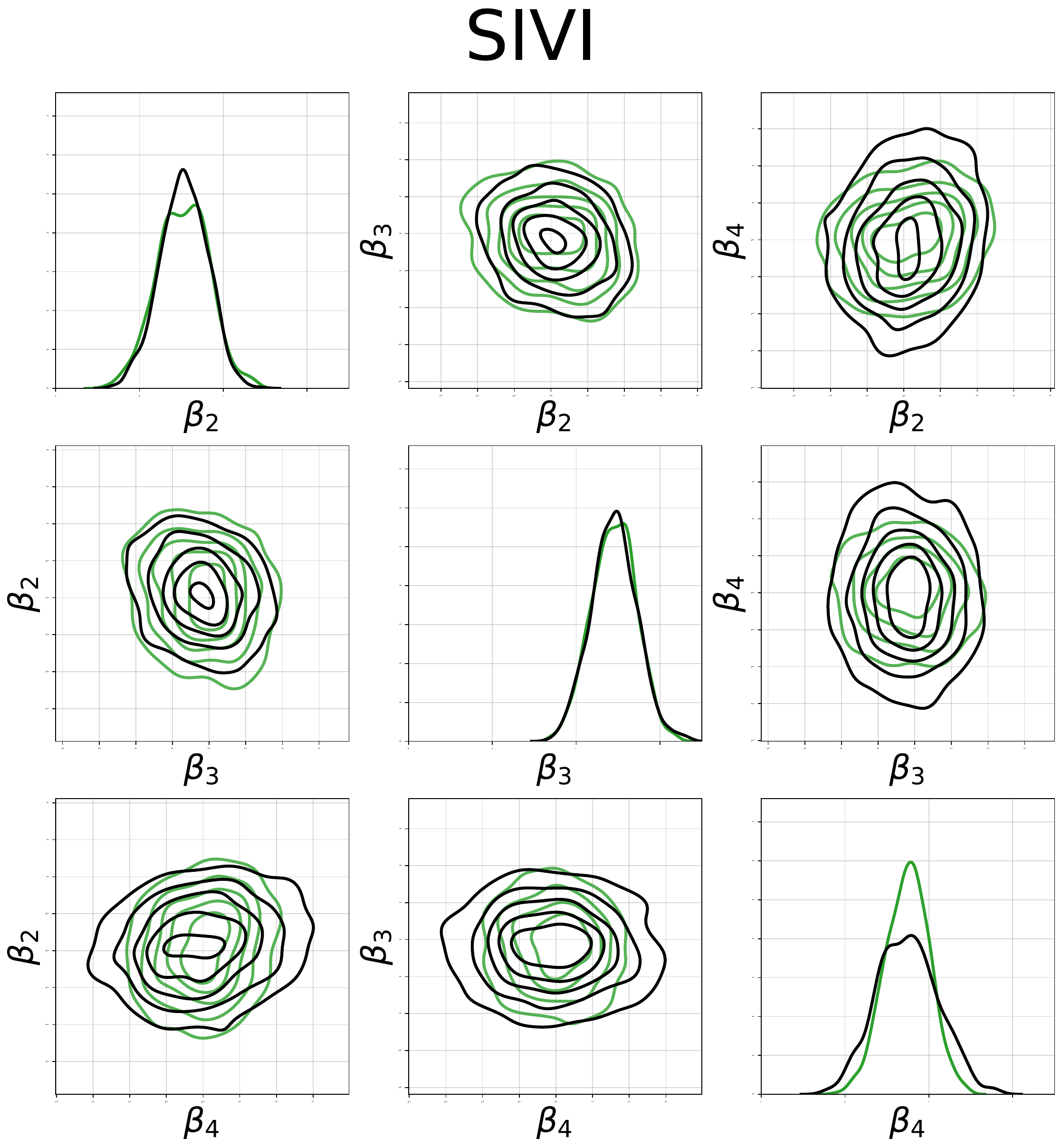}
   \end{minipage}%
   }%
   \hfill
   \subfigure{
   \begin{minipage}[t]{0.3\linewidth}
   \centering
   \includegraphics[width=1\textwidth]{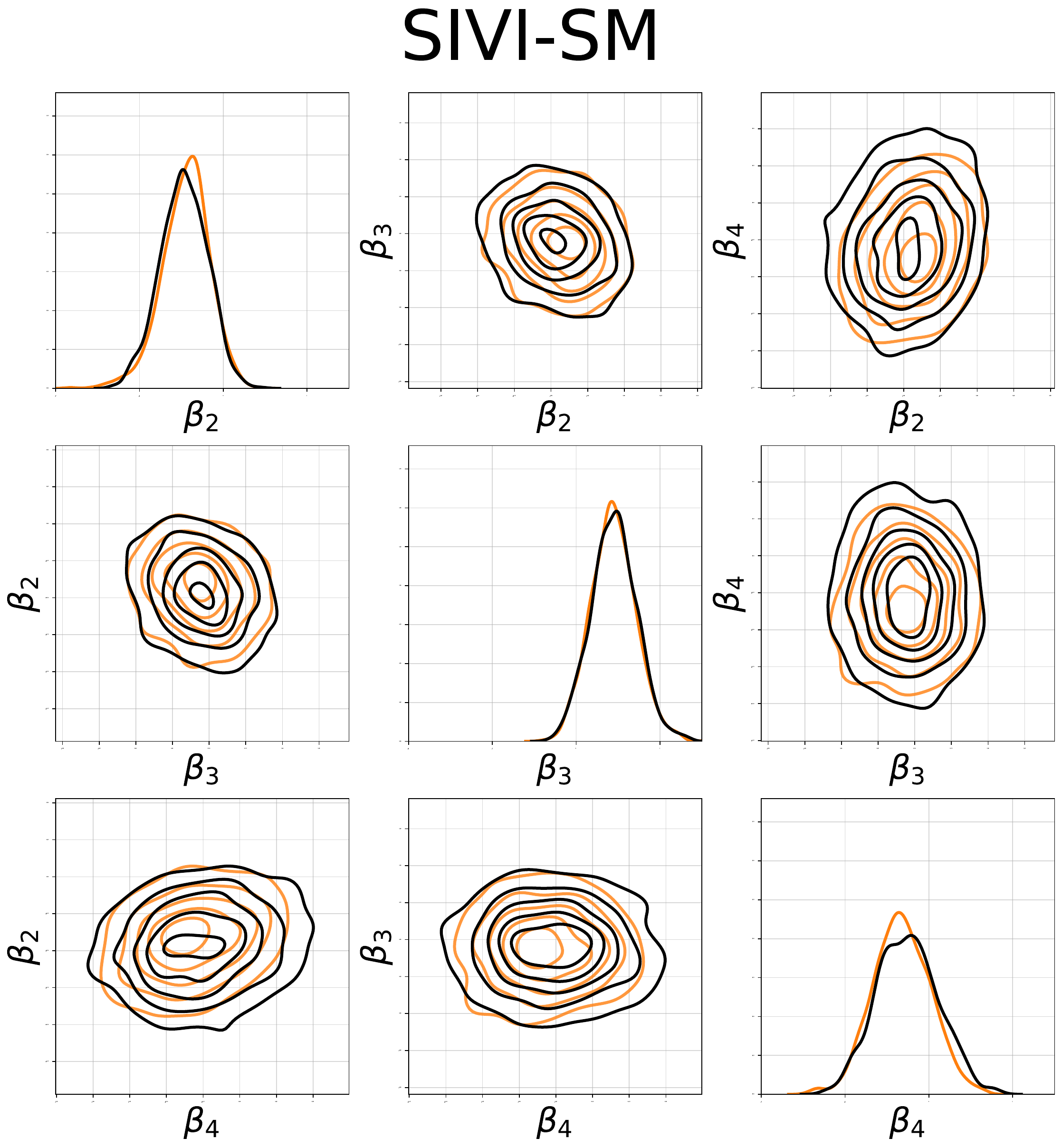}
   \end{minipage}%
   }%
   \hfill
   \subfigure{
   \begin{minipage}[t]{0.3\linewidth}
   \centering
   \includegraphics[width=1\textwidth]{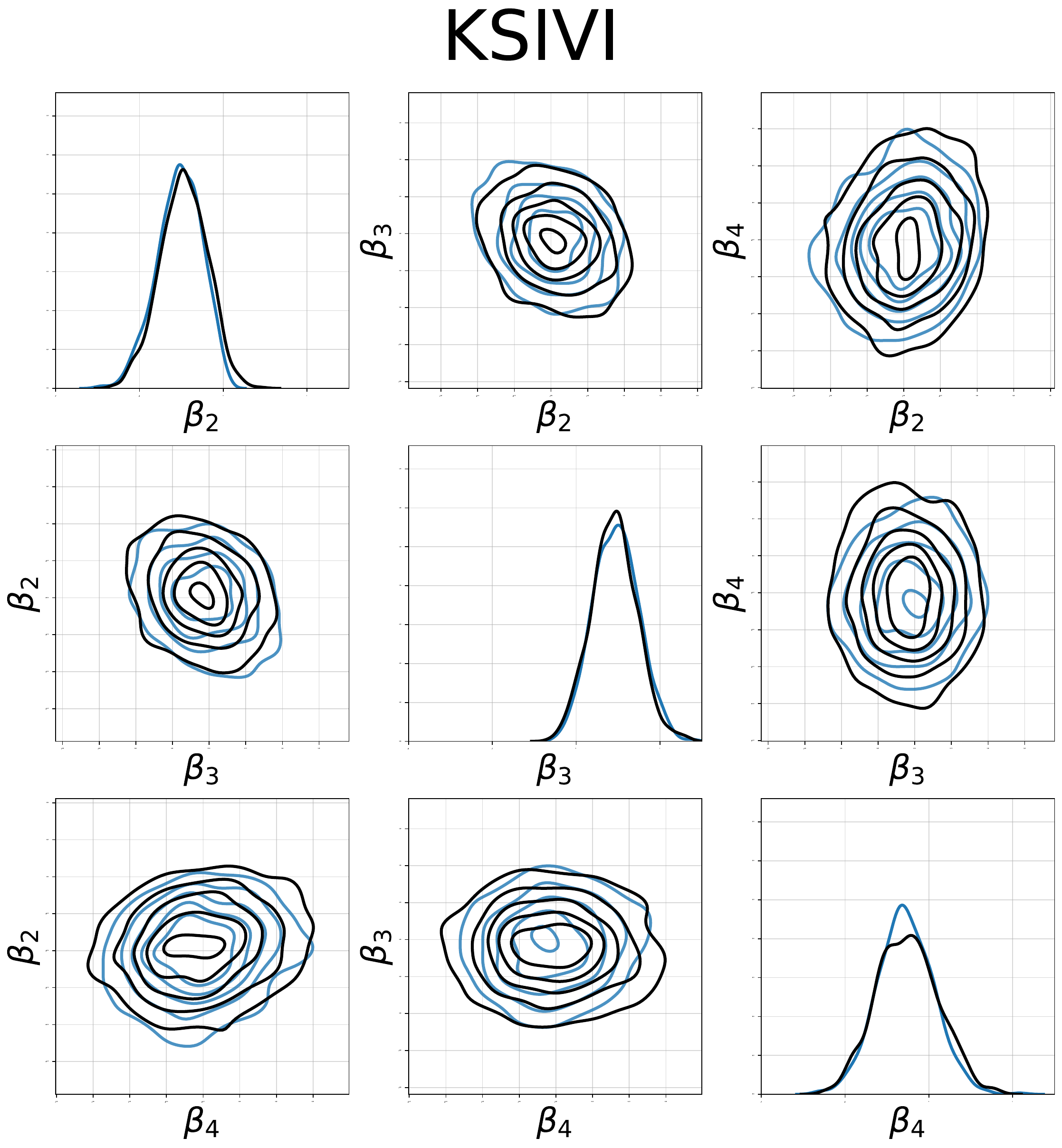}
   \end{minipage}%
   }%
   \centering
   \captionof{figure}{\textbf{Marginal and pairwise variational approximations of $\beta_2,\beta_3,\beta_4$ on the Bayesian logistic regression task}.
   The contours of the pairwise posterior approximation produced by SIVI-SM (in orange), SIVI (in green), and KSIVI (in blue) are graphed in comparison to the ground truth (in black). The sample size is 1000.
   }
   \label{figure:LRwaveform_density_dim_2_4}
\end{figure*}

\begin{figure}[t]
    \centering
    \includegraphics[width=\linewidth]{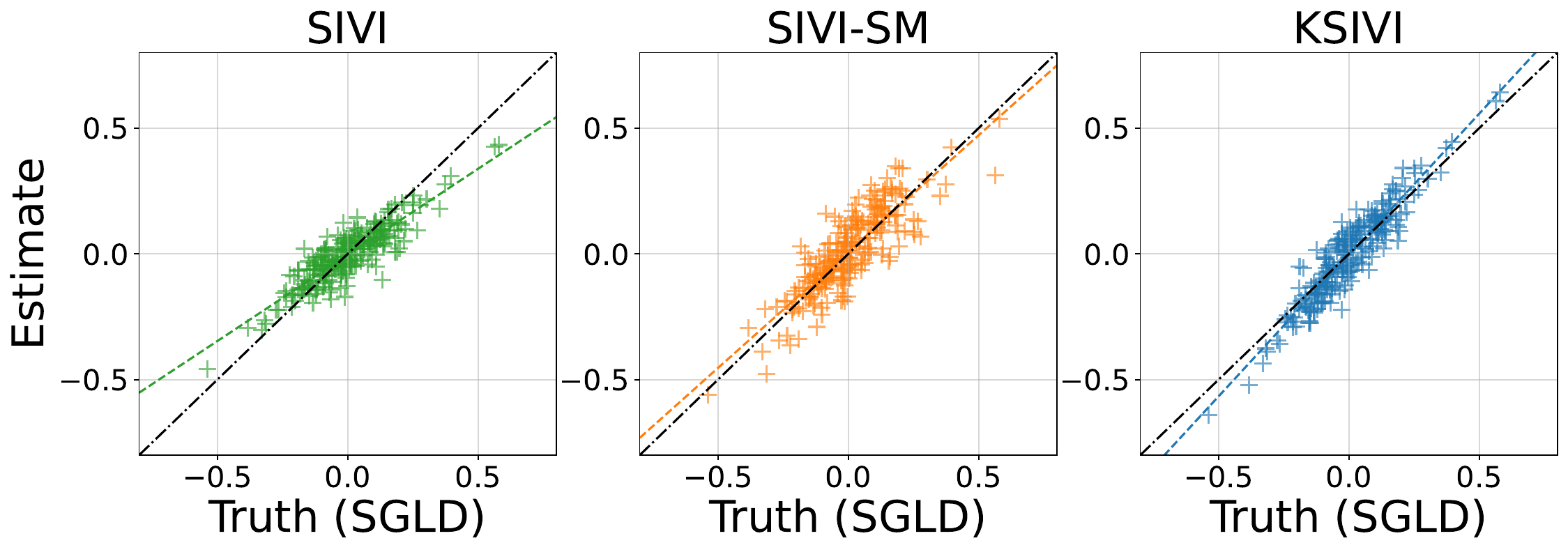}
    \caption{\textbf{Comparison between the estimated pairwise correlation coefficients and the ground truth on the Bayesian logistic regression task.} 
    Each scatter represents the estimated correlation coefficient ($y$-axis) and the ground truth correlation coefficient ($x$-axis) of some pair $(\beta_i,\beta_j)$.
    The lines in the same color as the scatters represent the regression lines.
    The sample size is 1000.
    }
    \label{figure:LRwaveform_corr}
\end{figure}

\begin{figure*}[htp]
    \centering
    \includegraphics[width=\linewidth]{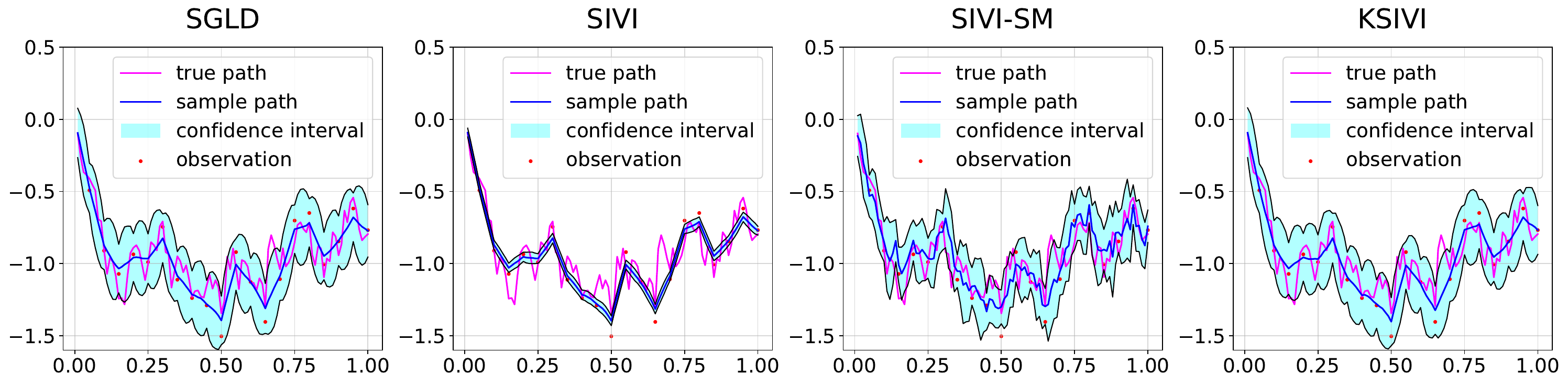}
    \caption{\textbf{Variational approximations of different methods for the discretized conditioned diffusion process.}
    The magenta trajectory represents the ground truth via parallel SGLD. The blue line corresponds to the estimated posterior mean of different methods, and the shaded region denotes the $95\%$ marginal posterior confidence interval at each time step. The sample size is 1000.
    }
    \label{figure:cd_traj}
\end{figure*}

\section{Experiments}
In this section, we compare KSIVI to the ELBO-based method SIVI and the score-based method SIVI-SM on toy examples and real-data problems. 
For the construction of the semi-implicit variational family in all these methods, we choose a standard Gaussian mixing distribution and diagonal Gaussian conditional layer (see Section \ref{sec:practical-implementation}) whose standard deviation $\sigma(z,\phi)=\phi_\sigma\in\mathbb{R}^d$ does not depend on $z$.
Following the approach by \citet{liu2016stein}, we dynamically set the kernel width, to the median value of variational samples' spacing.
In all experiments, we use the Gaussian RBF kernel in KSIVI, following \citet{liu2016stein, liu2016kernelized}. 
Throughout this section, we use the vanilla gradient estimator for KSIVI, and results of the U-statistic gradient estimator can be found in Appendix \ref{appendix:AddtionalExp}.
All the experiments are implemented in PyTorch \citep{pytorch2019}. 
More implementation details can be found in Appendix \ref{appendix:AddtionalExp} and \url{https://github.com/longinYu/KSIVI}.

\subsection{Toy Examples}
We first conduct toy experiments on approximating three two-dimensional distributions: \textsc{Banana}, \textsc{Multimodal}, and \textsc{X-shaped}, whose probability density functions are in Table~\ref{table: ToyDensity} in Appendix \ref{appendix:Toy}. 
We consider a temperature annealing strategy \citep{NF} on \textsc{Multimodal} to facilitate exploration.
The results are collected after 50,000 iterations with a learning rate of 0.001 for all the methods. 

Figure~\ref{figure: samples-toy} shows approximation performances of KSIVI on the toy examples.
We see that KSIVI provides favorable variational approximations for the target distributions. 
Figure~\ref{figure: kl-toy} displays the KL divergence as a function of the number of iterations obtained by KSIVI and SIVI-SM.
Compared to SIVI-SM, KSIVI is more stable in training, partly because it does not require additional lower-level optimization.
KSIVI also tends to converge faster than SIVI-SM, although this advantage becomes less evident as the target distribution gets more complicated.
Figure \ref{figure:mmd_toy} in Appendix \ref{appendix:Toy} illustrates the convergence of maximum mean discrepancy (MMD) during training obtained by KSIVI and SIVI-SM, which mostly aligns with the behavior of KL divergence. 
We also conduct experiments with the IMQ kernel \citep{gorham2017measuring}, and do not notice a significant difference w.r.t. the Gaussian RBF kernel, which is consistent with the findings in \citet{korba2021kernel}.

\subsection{Bayesian Logistic Regression}

Our second experiment is on the Bayesian logistic regression problem with the same experimental setting in \citet{yin2018semi}.
Given the explanatory variable $x_i\in \mathbb{R}^{d}$ and the observed binary response variable $y_i\in\{0,1\}$, the log-likelihood function takes the form
\begin{equation*}
\log p(y_i|x_i', \beta) = y_i \beta^T \bar{x}_i - \log(1+\exp(\beta^T \bar{x}_i)),
\end{equation*}
where $\bar{x}_i=\bigl[\begin{smallmatrix}1\\x_i\end{smallmatrix}\bigl] \in \mathbb{R}^{d+1}$ is the covariate and $\beta \in \mathbb{R}^{d+1}$ is the variable we want to infer. 
The prior distribution of $\beta$ is set to $p(\beta)=\mathcal{N}(0,\alpha^{-1} I)$ where the inverse variance $\alpha = 0.01$. 
We consider the \textsc{waveform}\footnote{https://archive.ics.uci.edu/ml/machine-learning-databases/waveform} dataset of $\{x_i,y_i\}_{i=1}^{N}$ where the dimension of the explanatory variable $x_i$ is $d=21$.
Then different SIVI variants with the same architecture of semi-implicit variational family are applied to infer the posterior distribution $p(\beta|\{x_i,y_i\}_{i=1}^{N})$.
The learning rate for variational parameters $\phi$ is chosen as 0.001 and the batch size of particles is chosen as 100 during the training.
For all the SIVI variants, the results are collected after 40,000 parameter updates. 
The ground truth consisting of 1000 samples is established by simulating parallel stochastic gradient Langevin dynamics (SGLD) \citep{Welling2011} with 400,000 iterations, 1000 independent particles, and a small step size of 0.0001.
Additionally, we assessed the performance of parallel Metropolis-adjusted Langevin algorithm (MALA) \citep{mala}. The calculated KL divergence between samples from MALA and SGLD is 0.0289, suggesting their proximity.

Figure~\ref{figure:LRwaveform_density_dim_2_4} demonstrates the marginal and pairwise posterior approximations for $\beta_2,\beta_3,\beta_4$ obtained by the aforementioned SIVI variants in comparison to the ground truth. 
We see that KSIVI agrees well with the ground truth and performs comparably to SIVI-SM.
Notice that SIVI-SM also requires tuning additional hyper-parameters (e.g., the learning rate of $f_\psi(x)$ and the number of lower-level gradient steps), which may be challenging as commonly observed in minimax optimization \citep{GAN,AVB}.
In contrast, SIVI slightly underestimates the variance of $\beta_4$ in both marginal and pairwise joint distributions, as shown by the left plot in Figure~\ref{figure:LRwaveform_density_dim_2_4}.
The results for more components of $\beta$ can be found in Figure~\ref{figure:LRwaveform_density_dim_1_6} in Appendix~\ref{appendix:blr}.
Additionally, we also investigate the pairwise correlation coefficients of $\beta$ defined as
\[
\bm{\rho} = \left\{\rho_{i,j}:= \frac{\mathrm{cov}(\beta_i,\beta_j)}{\sqrt{\mathrm{cov}(\beta_i,\beta_i)\mathrm{cov}(\beta_j,\beta_j)}}\right\}_{1\le i < j \le 22}
\]
and compare the estimated pairwise correlation coefficients produced by different methods to the ground truth in Figure~\ref{figure:LRwaveform_corr}. 
We see that KSIVI provides better correlation coefficient approximations than SIVI and SIVI-SM, as evidenced by the scatters more concentrated around the diagonal.

\subsection{Conditioned Diffusion Process}
Our next example is a higher-dimensional Bayesian inference problem arising from the following Langevin stochastic differential equation (SDE) with state $x_t\in \mathbb{R}$
\begin{equation}\label{cond-diffusion}
\mathrm{d} x_t = 10x_t(1-x_t^2) \mathrm{d} t + \mathrm{d} w_t, \ 0\leq t \leq 1, 
\end{equation}
where $x_0 = 0$ and $w_t$ is a one-dimensional standard Brownian motion. Equation \eqref{cond-diffusion} describes the motion of a particle with negligible mass trapped in an energy potential, with thermal fluctuations represented by the Brownian forcing \citep{detommaso2018stein, cui2016dimension, yu2023hierarchical}. 
Using the Euler-Maruyama scheme with step size $\Delta t = 0.01$, we discretize the SDE into $x=(x_{\Delta t}, x_{2\Delta t},\cdots, x_{100\Delta t})$, which defines the prior distribution $p_{\mathrm{prior}}(x)$ of the 100-dimensional variable $x$.
The perturbed 20-dimensional observation is $y=(y_{5\Delta t},y_{10\Delta t},\ldots,y_{100\Delta t})$ where $y_{5k\Delta t} \sim \mathcal{N}(x_{5k\Delta_t},\sigma^2)$ with $1\leq k\leq 20$ and $\sigma = 0.1$, which gives the likelihood function $p(y|x)$. 
Given the perturbed observations $y$, our goal is to infer the posterior of the discretized path of conditioned diffusion process $p(x|y)\propto p_{\mathrm{prior}}(x) p(y|x)$.
We simulate a long-run parallel SGLD of 100,000 iterations with 1000 independent particles and a small step size of 0.0001 to form the ground truth of 1000 samples. 
For all SIVI variants, we update the variational parameters $\phi$ for 100,000 iterations to ensure convergence (Appendix \ref{appendix:cd}). 

Table \ref{tab:run_time_cd} shows the training time per 10,000 iterations of SIVI variants on a 3.2 GHz CPU. 
For a fair time comparison, we use the score-based training of SIVI discussed in \citet{yu2023hierarchical}, which computes the $\nabla\log p(x)$ instead of $\log p(x)$ to derive the gradient estimator.
We see that KSIVI achieves better computational efficiency than SIVI-SM and comparable training time to SIVI.
Figure~\ref{figure:cd_traj} shows the approximation for the discretized conditional diffusion process of all methods.
We see that the posterior mean estimates given by SIVI-SM are considerably bumpier compared to the ground truth, while SIVI fails to capture the uncertainty with a severely underestimated variance. 
In contrast, the results from KSIVI align well with the SGLD ground truth.

\begin{table*}[t]
\caption{\textbf{Test RMSE and test NLL of Bayesian neural networks on several UCI datasets}. The results are averaged from 10 independent runs with the standard deviation in the subscripts.
  For each data set, the best result is marked in \textbf{black bold font} and the second best result is marked in \textbf{\color{Sepia!30}{brown bold font}}.
  }  
\label{tab:bnn_rmse}
\centering
\vskip0.5em
\setlength\tabcolsep{6.3pt}
\resizebox{\linewidth}{!}{
\begin{tabular}{lcccccccc}
\toprule
 \multirow{2}{*}{Dataset}&\multicolumn{4}{c}{Test RMSE ($\downarrow$)} &\multicolumn{4}{c}{Test NLL ($\downarrow$)}  \\
\cmidrule(l){2-5}\cmidrule(l){6-9}
&  SIVI& SIVI-SM & SGLD& KSIVI&SIVI& SIVI-SM & SGLD& KSIVI \\
\midrule
\textsc{Boston}       & $\bm{\textcolor{Sepia!30}{2.621}}_{\pm0.02}$ & $2.785_{\pm0.03}$ & $2.857_{\pm0.11}$& $\bm{2.555}_{\pm0.02}$& $\bm{2.481}_{\pm0.00}$ & $2.542_{\pm0.01}$ & $3.094_{\pm0.01}$& $\bm{\textcolor{Sepia!30}{2.506}}_{\pm0.01}$    \\
\textsc{Concrete}     & $6.932_{\pm0.02}$ & $\bm{\textcolor{Sepia!30}{5.973}}_{\pm0.04}$ & $6.861_{\pm0.19}$& $\bm{5.750}_{\pm0.03}$  & $3.337_{\pm0.00}$ & $\bm{3.229}_{\pm0.01}$ & $4.036_{\pm0.01}$& $\bm{\textcolor{Sepia!30}{3.309}}_{\pm0.01}$    \\
\textsc{Power}        & $\bm{3.861}_{\pm0.01}$ & $4.009_{\pm0.00}$ & $3.916_{\pm0.01}$& $\bm{\textcolor{Sepia!30}{3.868}}_{\pm0.01}$   & $\bm{2.791}_{\pm0.00}$ & $2.822_{\pm0.00}$ & $2.944_{\pm0.00}$& $\bm{\textcolor{Sepia!30}{2.797}}_{\pm0.00}$   \\
\textsc{Wine}      & $\bm{\textcolor{Sepia!30}{0.597}}_{\pm0.00}$ & $0.605_{\pm0.00}$ & $\bm{\textcolor{Sepia!30}{0.597}}_{\pm0.00}$& $\bm{0.595}_{\pm0.00}$ & $\bm{\textcolor{Sepia!30}{0.904}}_{\pm0.00}$ & $0.916_{\pm0.00}$ & $\bm{\textcolor{Sepia!30}{0.904}}_{\pm0.00}$& $\bm{0.901}_{\pm0.00}$    \\
\textsc{Yacht}        & $1.505_{\pm0.07}$ & $\bm{0.884}_{\pm0.01}$ & $2.152_{\pm0.09}$& $\bm{\textcolor{Sepia!30}{1.237}}_{\pm0.05}$  & $\bm{\textcolor{Sepia!30}{1.721}}_{\pm0.03}$ & $\bm{1.432}_{\pm0.01}$ & $2.873_{\pm0.03}$& $1.752_{\pm0.03}$    \\
\textsc{Protein}      & $\bm{4.669}_{\pm0.00}$ & $5.087_{\pm0.00}$  & $\bm{\textcolor{Sepia!30}{4.777}}_{\pm0.00}$& $5.027_{\pm0.01}$ & $\bm{2.967}_{\pm0.00}$ & $3.047_{\pm0.00}$ & $\bm{\textcolor{Sepia!30}{2.984}}_{\pm0.00}$& $3.034_{\pm0.00}$    \\
\bottomrule
\end{tabular}
}
\end{table*}

\begin{table}[t]
\caption{\textbf{Training time (per 10,000 iterations, in seconds) for the conditioned diffusion process inference task.} For all the methods, the batch size for Monte Carlo estimation is set to $N=128$.}
\label{tab:run_time_cd}
\centering
\setlength\tabcolsep{5pt}
\vskip0.5em
\begin{tabular}{lccc}
\toprule
Method \textbackslash Dimensionality & 50 & 100 & 200 \\
\midrule
SIVI & 58.00 & 88.12 & 113.46 \\
SIVI-SM & 70.42 & 128.13 & 149.61 \\
KSIVI & 56.67 & 90.48 & 107.84 \\
\bottomrule
\end{tabular}
\vspace{-1.5em}
\end{table}

\subsection{Bayesian Neural Network}
In the last experiment, we compare KSIVI against SGLD, SIVI, and SIVI-SM on sampling from the posterior of a Bayesian neural network (BNN) across various benchmark UCI datasets\footnote{https://archive.ics.uci.edu/ml/datasets.php}. 
Following \citet{liu2016stein, wang2022accelerated}, we use a two-layer neural network consisting of 50 hidden units and ReLU activation functions for the BNN model.
The datasets are randomly partitioned into 90\% for training and 10\% for testing.
Additionally, gradient clipping is applied during the training process, and the exponential moving average (EMA) trick \citep{huang2017snapshot, izmailov2019averaging} is employed in the inference stage for all SIVI variants.

The averaged test rooted mean squared error (RMSE) and negative log-likelihood (NLL) over 10 random runs are reported in Table~\ref{tab:bnn_rmse}.
We see that KSIVI can provide results on par with SIVI and SIVI-SM on all datasets.
However, it does not exhibit a significant advantage within the context of Bayesian neural networks.
This phenomenon is partially attributable to the evaluation metrics. 
Since test RMSE and test NLL are assessed from a predictive standpoint rather than directly in terms of the posterior distribution, they do not fully capture the approximation accuracy of the posterior distribution. 
Additionally, the performance of KSIVI may be contingent on the choice of kernels, particularly in higher-dimensional cases, as suggested by research on unbounded kernels \citep{Hertrich2024SlicedMMD}. 
We leave a more thorough investigation for future work.

\section{Ablation Study}
\paragraph{Kernel Choice in Heavy Tails Distribution}
Considering the pivotal role of kernel function selection in the efficacy of kernel-based methods\citep{MMDgorham17a}, we conducted an ablation study on a two-dimmensional distribution constructed as the product of two Student's t-distributions, following the setting described by \citet{li2023sampling}.
For KSIVI, we apply a practical regularization fix to thin the kernel Stein discrepancy across all the kernel functions\citep{bénard2023kernel}. 
We use the implementation from \citet{li2023sampling} to reproduce the results for two particle-based variational inference methods MIED \citep{li2023sampling} and KSDD \citep{korba2021kernel}.
In table \ref{tab:t-Student}, we report the estimated Wasserstein distances from the target distributions to the variational posteriors (using the metric implementation provided in \cite{li2023sampling} with 1000 samples).
We found that compared to the Gaussian kernel, the Reisz kernel leads to a more pronounced improvement for this task.  

\begin{table}[t]
\caption{\textbf{Estimated Wasserstein distances for the product of two Student’s t-distributions.} The results were averaged from
10 independent runs.}
\label{tab:t-Student}
\centering
\setlength\tabcolsep{5pt}
\vskip0.5em
\resizebox{\linewidth}{!}{
\begin{tabular}{lccc}
\toprule
Methods \textbackslash Edge width & 5 & 8& 10 \\
\midrule
MIED                       & $\bm{0.0366_{\pm0.01}}$ & $\bm{0.0778_{\pm0.03}}$ & $0.1396_{\pm0.04}$ \\
KSDD                       & $0.1974_{\pm 0.04}$     & $0.3187_{\pm 0.10}$     & $0.3910_{\pm 0.11}$ \\
KSIVI (Gaussian kernel)    & $0.1048_{\pm 0.03}$     & $0.1976_{\pm 0.08}$     & $0.2546_{\pm 0.09}$\\
KSIVI (Reisz kernel)       & $0.0451_{\pm 0.01}$     & $0.0816_{\pm 0.04}$     & $\bm{0.1136_{\pm 0.05}}$ \\
\bottomrule
\end{tabular}
}
\vspace{-1.5em}
\end{table}

\section{Conclusion}
This paper proposed a novel framework of semi-implicit variational inference, KSIVI, which takes the KSD as the training objective instead of the ELBO or Fisher divergence.
As a variant of SIVI-SM, KSIVI eliminates the need for lower-level optimization using the kernel trick.
By taking advantage of the hierarchical structure of the semi-implicit variational family, the KSD objective is readily computable.
We provided efficient Monte Carlo gradient estimators for stochastic optimization.
We also derived an upper bound for the variance of the Monte Carlo gradient estimators, which allows us to establish the convergence guarantee to a stationary point using techniques in stochastic optimization.
Extensive numerical experiments validate the efficiency and effectiveness of KSIVI.

\section{Limitation}
Our current work has several limitations. Firstly, since KSIVI uses KSD to measure dissimilarity between distributions, it may encounter drawbacks like stationary points that differ from the target distributions, especially in high-dimensional, non-convex tasks. Secondly, our use of the Gaussian kernel in KSIVI may be less effective in high dimensions due to its fast-decaying tails. Exploring unbounded kernel functions could be a promising direction for future research.
Additionally, our theoretical analysis depends on strong assumptions, such as the Lipschitz smoothness of the neural network, which may be too strict for deep neural networks in high dimensions. Investigating simpler kernel machines or random feature regression models as alternatives could also be valuable.

\section*{Impact Statement}
This paper presents work whose goal is to advance the field of machine learning. There are many potential societal consequences of our work, none of which we feel must be specifically highlighted here.

\section*{Acknowledgements}
This work was supported by National Natural Science Foundation of China (grant no. 12201014 and grant no. 12292983).
The research of Cheng Zhang was support in part by National Engineering Laboratory for Big Data Analysis and Applications, the Key Laboratory of Mathematics and Its Applications (LMAM) and the Key Laboratory of Mathematical Economics and Quantitative Finance (LMEQF) of Peking University.
The authors appreciate the anonymous ICML reviewers for their constructive feedback.

\bibliography{main}
\bibliographystyle{icml2024}

\newpage
\appendix
\onecolumn

\section{Proofs}\label{app:sec:proofs}

\subsection{Proof of Theorem \ref{thm:opt_f}}\label{app:subsec:proof_opt_f}
\begin{theorem}
    Consider the min-max problem
    \begin{equation}
        \min_{\phi}\max_{f} \quad \E_{q_{\phi}}\left[2f(x)^T[s_p(x) - s_{q_\phi}(x)] - \|f\|_{\mathcal{H}}^2\right].
    \end{equation}
    Given variational distribution $q_\phi$, the optimal $f^*$ is given by 
    \begin{equation}
        f^*(x)=\E_{y\sim q_\phi(y)} k(x,y)\left[s_p(y)-s_{q_\phi}(y)\right].
    \end{equation}
    Thus the upper-level problem for $\phi$ is
    \begin{equation}
        \min_\phi\quad  \text{KSD}(q_\phi\|p)^2=\left\|S_{q_\phi,k}\nabla\log\frac{p}{q_\phi}\right\|_{\mathcal{H}}^2.
    \end{equation}
\end{theorem}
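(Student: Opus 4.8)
The plan is to treat the inner maximization over $f\in\mathcal H$ as an unconstrained concave quadratic optimization problem in the Hilbert space $\mathcal H$ and solve it directly. First I would rewrite the objective using the reproducing property of $\mathcal H$. Since each coordinate $f_i\in\mathcal H_0$ satisfies $f_i(x)=\langle f_i,k(x,\cdot)\rangle_{\mathcal H_0}$, the linear term $\E_{q_\phi(x)}[2f(x)^T(s_p(x)-s_{q_\phi}(x))]$ can be expressed as $2\langle f,\, v\rangle_{\mathcal H}$ where $v\in\mathcal H$ is the vector-valued function $v(\cdot)=\E_{q_\phi(y)}k(\cdot,y)(s_p(y)-s_{q_\phi}(y))$; one checks this lies in $\mathcal H$ using Assumption on the kernel together with integrability of $q_\phi$ against $k(x,x)$. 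With this identification the objective reads $J(f)=2\langle f,v\rangle_{\mathcal H}-\|f\|_{\mathcal H}^2$.

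Next I would complete the square: $J(f)=\|v\|_{\mathcal H}^2-\|f-v\|_{\mathcal H}^2$. This is maximized uniquely at $f^*=v$, giving optimal value $\|v\|_{\mathcal H}^2$. So $f^*(x)=\E_{q_\phi(y)}k(x,y)[s_p(y)-s_{q_\phi}(y)]$ as claimed, and the upper-level objective becomes $\max_f J(f)=\|v\|_{\mathcal H}^2=\|\E_{q_\phi(y)}k(\cdot,y)[s_p(y)-s_{q_\phi}(y)]\|_{\mathcal H}^2$. The final step is to recognize this as $\text{KSD}(q_\phi\|p)^2$: by definition of the integral operator $S_{q_\phi,k}$ applied coordinatewise, $v = S_{q_\phi,k}(s_p-s_{q_\phi}) = S_{q_\phi,k}\nabla\log(p/q_\phi)$, so $\|v\|_{\mathcal H}^2 = \|S_{q_\phi,k}\nabla\log(p/q_\phi)\|_{\mathcal H}^2$, which matches the KSD formula quoted in the background section. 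Substituting this closed form for the inner max into the minimax problem yields the stated minimization over $\phi$.

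The main obstacle — really the only non-formal point — is justifying that the linear functional $f\mapsto\E_{q_\phi(x)}[2f(x)^T(s_p(x)-s_{q_\phi}(x))]$ is genuinely a bounded linear functional on $\mathcal H$ representable by an element $v\in\mathcal H$, i.e.\ that $v$ as defined actually belongs to $\mathcal H$ and that one may interchange the expectation with the Hilbert-space inner product. This requires $\E_{q_\phi(y)}\sqrt{k(y,y)}\,\|s_p(y)-s_{q_\phi}(y)\| < \infty$, which follows from boundedness of $k$ (Assumption on the kernel) plus a moment bound on the score discrepancy; under those conditions $v$ is a Bochner integral in $\mathcal H$ and the interchange is valid by standard properties of Bochner integrals. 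Everything else is the elementary "complete the square in a Hilbert space" argument. I would state the needed integrability as a standing assumption (it is implied by the assumptions used later for the convergence analysis) and then present the square-completion cleanly.
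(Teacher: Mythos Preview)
Your proposal is correct and follows essentially the same argument as the paper: use the reproducing property to rewrite the linear term as $2\langle f,v\rangle_{\mathcal H}$ with $v=S_{q_\phi,k}\nabla\log(p/q_\phi)$, then complete the square to obtain $f^*=v$ and optimal value $\|v\|_{\mathcal H}^2$. Your discussion of the Bochner-integrability condition needed to swap expectation and inner product is an extra level of rigor that the paper's proof simply omits.
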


\begin{proof}
    For any $f\in \mathcal{H}$, we have $f(x)=\left\langle f,k(\cdot,x)\right\rangle_{\mathcal{H}}$ by reproducing property.
    Since
    \begin{equation}
        \begin{aligned}
            \E_{q_{\phi}}f(x)^T[s_p(x) - s_{q_\phi}(x)]
            &= \E_{q_{\phi}}\left\langle f,k(\cdot,x)\right\rangle_{\mathcal{H}}^T[s_p(x) - s_{q_\phi}(x)] \\
            &= \left\langle f, \E_{q_{\phi}}k(\cdot,x)[s_p(x) - s_{q_\phi}(x)]\right\rangle_{\mathcal{H}} \\
            &= \left\langle f, S_{q_\phi,k}\nabla\log\frac{p}{q_\phi}\right\rangle_{\mathcal{H}},
        \end{aligned}
    \end{equation}
    the lower-level problem is 
    \begin{equation}
        \max_{f} \quad 2\left\langle f, S_{q_\phi,k}\nabla\log\frac{p}{q_\phi}\right\rangle_{\mathcal{H}} - \|f\|_{\mathcal{H}}^2.
    \end{equation}
    Therefore, the optimal $f^*=S_{q_\phi,k}\nabla\log\frac{p}{q_\phi}$ and the upper-level problem is 
    \begin{equation}
        \min_\phi\quad \left\|S_{q_\phi,k}\nabla\log\frac{p}{q_\phi}\right\|_{\mathcal{H}}^2.
    \end{equation}
\end{proof}

\subsection{Proofs in Section \ref{sec: theory}}\label{app:subsec:proof}

\begin{proposition}
    Under Assumption \ref{asp:target} and \ref{asp:moment}, we have $\E_{q_\phi(x)}\|s_p(x)\|^4\lesssim L^4(s^4+\|x^*\|^4):=C^2$, where $x^*$ is any zero point of $s_p$.
\end{proposition}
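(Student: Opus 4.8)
The plan is to exploit that Assumption \ref{asp:target} makes the score $s_p=\nabla\log p$ globally $L$-Lipschitz, and then to anchor this estimate at the zero point $x^*$. First I would write, for any $x\in\R^d$,
\[
s_p(x)-s_p(x^*)=\int_0^1 \nabla^2\log p\big(x^*+t(x-x^*)\big)\,(x-x^*)\,\dif t,
\]
which is valid since $\log p\in C^2$ (indeed $C^3$). Taking norms and using $\|\nabla^2\log p\|\le L$ together with $s_p(x^*)=0$ (by the definition of $x^*$) gives the pointwise bound $\|s_p(x)\|\le L\|x-x^*\|$.

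Next I would raise this to the fourth power and combine the triangle inequality $\|x-x^*\|\le\|x\|+\|x^*\|$ with the elementary inequality $(a+b)^4\le 8(a^4+b^4)$ for $a,b\ge 0$, obtaining $\|s_p(x)\|^4\le 8L^4\big(\|x\|^4+\|x^*\|^4\big)$. Finally, taking the expectation over $q_\phi(x)$ and invoking the fourth-moment bound $\E_{q_\phi(x)}\|x\|^4\le s^4$ from Assumption \ref{asp:moment} yields $\E_{q_\phi(x)}\|s_p(x)\|^4\le 8L^4(s^4+\|x^*\|^4)$, which is exactly the claimed bound with the absolute constant $8$ absorbed into $\lesssim$.

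This argument is essentially routine; the only step meriting care is the passage from the Hessian norm bound to the global Lipschitz estimate for $s_p$, which is where the $C^2$ regularity of $\log p$ is used. It is worth noting for the reader that neither the third-derivative bound $M$ nor the moment bound on $q(z)$ from Assumption \ref{asp:moment} is needed here, and that the existence of a zero $x^*$ of $s_p$ is presupposed in the statement (for instance, a stationary point of $\log p$).
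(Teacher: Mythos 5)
Your proof is correct and follows essentially the same route as the paper's: anchoring at the zero point $x^*$, deducing the global $L$-Lipschitz bound on $s_p$ from the Hessian bound, and then applying the triangle inequality together with the elementary power inequality and the fourth-moment assumption. The paper compresses the Lipschitz step and the $(a+b)^4$ inequality into a single "AM-GM" citation, whereas you spell them out, but there is no substantive difference.
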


\begin{proof}
    By AM-GM inequality, and since $x^*$ is a zero point of $s_p$,
    \begin{equation}
        \E_{q_\phi(x)}\|s_p(x)\|^4=\E_{q_\phi(x)}\|s_p(x)-s_p(x^*)\|^4 \lesssim L^4(\E_{q_\phi(x)}\|x\|^4+\|x^*\|^4)\lesssim L^4(s^4+\|x^*\|^4).
    \end{equation}
\end{proof}

\begin{theorem}
    The objective $\mathcal{L}(\phi)$ is $L_\phi$-smooth, where $L_\phi\lesssim B^2G^2d_z\log d\left[(1\vee L)^3+Ld+ M^2 + \E \|s_p(x)\|^2\right]$.
\end{theorem}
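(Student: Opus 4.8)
\textbf{Proof proposal for Theorem \ref{thm:smooth}.}

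The plan is to bound the operator norm of the Hessian $\nabla_\phi^2 \mathcal{L}(\phi)$ uniformly in $\phi$, since $L_\phi$-smoothness follows from $\|\nabla_\phi^2 \mathcal{L}(\phi)\| \le L_\phi$. Writing $\mathcal{L}(\phi) = \E_{q_\phi(x,z), q_\phi(x',z')}[k(x,x')\langle h(x,z), h(x',z')\rangle]$ where $h(x,z) = s_p(x) - s_{q_\phi(\cdot|z)}(x)$, I would first use the reparameterization trick to push all $\phi$-dependence into the integrand: substitute $x = \mu(z;\phi) + \sigma(z;\phi)\odot\xi$ with $\xi\sim\mathcal{N}(0,I)$, so that the expectation is over $z, z', \xi, \xi'$ with a $\phi$-independent measure. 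Then $\nabla_\phi^2\mathcal{L}$ can be moved inside the expectation, and I need to bound $\E[\|\nabla_\phi^2\{k(x,x')\langle h, h'\rangle\}\|]$.

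The second step is a careful product-rule expansion. Differentiating $k(x,x')\langle h,h'\rangle$ twice in $\phi$ produces terms involving (i) up to two derivatives of $k$ composed with up to two derivatives of $x(\phi) = \mu + \sigma\odot\xi$ (controlled by Assumptions \ref{asp:kernel} and \ref{asp:nn}, giving factors of $B^2$ and $G(1+\|z\|)$), (ii) up to two derivatives of $h(x(\phi),z)$ in $\phi$. For the latter I would further split $h = s_p(x) - s_{q_\phi(\cdot|z)}(x)$: the score term $s_p(x)$ differentiated in $\phi$ brings in $\nabla^2\log p$ and $\nabla^3 \log p$ (bounded by $L$ and $M$ via Assumption \ref{asp:target}) times derivatives of $x(\phi)$; the conditional-score term has the explicit form $-\xi/\sigma(z;\phi)$, whose $\phi$-derivatives are controlled using the lower bound $\sigma \ge 1/\sqrt{L}$ and the smoothness of $\sigma$ from Assumption \ref{asp:nn} (this is where the $(1\vee L)^3$ and the $Ld$ terms will appear, the $d$ coming from $\E\|\xi\|^2 = d$ or $\E\|\xi\|^4 \lesssim d^2$ after Cauchy--Schwarz). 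Collecting, each summand is a product of a bounded kernel-type factor, a polynomial-in-$(1+\|z\|)$ Jacobian factor, and a score/curvature factor; applying Cauchy--Schwarz to separate the $\|z\|$ powers (bounded in expectation by $\sqrt{\E\|z\|^4}\lesssim d_z$ via Assumption \ref{asp:moment}) from the score factors (bounded by $\sqrt{\E\|s_p(x)\|^4}\lesssim C$ via Proposition \ref{prop:score_bound}, or by $\E\|s_p(x)\|^2$ as stated) yields the claimed bound, with the $\log d$ factor presumably entering through the median-heuristic kernel bandwidth or a union bound over coordinates in handling the $\mathcal{H} = \mathcal{H}_0^{\otimes d}$ norm.

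The main obstacle I anticipate is the bookkeeping around the $d$-dependence and the interaction between the RKHS structure and the reparameterization: one must be careful that the Hilbert-space norm $\|\cdot\|_{\mathcal{H}}$ of a vector-valued object like $S_{q_\phi,k}\nabla\log(p/q_\phi)$ is handled correctly when differentiating (e.g.\ via $\|v\|_{\mathcal{H}}^2 = \langle v,v\rangle_{\mathcal{H}}$ and the double-expectation form in \eqref{eq:phi_obj}), and that the conditional-score term $\xi/\sigma$, which is \emph{not} bounded pointwise, only contributes through its moments. A secondary subtlety is ensuring the exchange of $\nabla_\phi^2$ and $\E$ is justified (dominated convergence), which requires the same moment bounds as a byproduct. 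I would organize the write-up as a sequence of lemmas bounding $\E\|\nabla_\phi^j x(\phi)\|^4$, $\E\|\nabla_\phi^j h\|^2$ for $j=0,1,2$, and then assemble the Hessian bound by the product rule.
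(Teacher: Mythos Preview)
Your overall strategy is exactly the paper's: push $\phi$ into the integrand via reparameterization, expand $\nabla_\phi^2\big[k(x,x')\langle h,h'\rangle\big]$ by the product rule into three blocks (second derivative of $k$ times zeroth of $\langle h,h'\rangle$; first times first; zeroth times second), and bound each block in expectation using Assumptions~\ref{asp:kernel}--\ref{asp:moment} and Cauchy--Schwarz. The paper works directly with $h(x,z)=s_p(x)+\xi/\sigma$ and controls $\|\nabla_\phi x\|,\|\nabla_\phi^2 x\|\le G(1+\|z\|)(1+\|\xi\|_\infty)$, then feeds these into the three blocks and collects terms; your proposed lemma structure would reproduce this.

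There is one concrete gap: your guess about where the $\log d$ factor comes from is wrong. It has nothing to do with the median heuristic (the theoretical statement uses only the uniform bound $B$ on the kernel) or with the product RKHS structure. It enters because bounding the operator norm $\|\nabla_\phi(\sigma\odot\xi)\|=\|\mathrm{diag}(\xi)\,\nabla_\phi\sigma\|$ naturally produces $\|\xi\|_\infty$, and for $\xi\sim\mathcal{N}(0,I_d)$ one has $\E\|\xi\|_\infty^2\lesssim \log d$ and $\E\|\xi\|_\infty^4\lesssim \log^2 d$. If you bound $\|\nabla_\phi x\|$ using $\|\xi\|_2$ instead (as your discussion of ``$\E\|\xi\|^2=d$'' suggests), you would still get a valid smoothness constant but with a polynomial-in-$d$ factor replacing $\log d$, and you would not recover the stated $L_\phi$. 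So when you assemble the Jacobian bounds, isolate $\|\xi\|_\infty$ rather than $\|\xi\|_2$; the $Ld$ term in the bracket comes separately from $\E\|\xi/\sigma\|^2\le L\,\E\|\xi\|_2^2=Ld$ in the zeroth-order factor $\|h\|^2$, not from the Jacobian.
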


\begin{proof}
    Let $u=(z,\xi)$ and $x=T(u;\phi)=\mu(z;\phi)+\sigma(z;\phi)\odot \xi$.
    Then the objective has the following representation:
    \begin{equation}
        \begin{aligned}
            \mathcal{L}
            &= \E_{z,z'}\E_{q_\phi(x|z),q_\phi(x'|z')} \left[k(x,x')(s_p(x)-s_{q_\phi(\cdot|z)}(x))^T(s_p(x')-s_{q_\phi(\cdot|z')}(x'))\right]\\
            &= \E_{u,u'} \left[k(x,x')(s_p(x)+\xi/\sigma)^T(s_p(x')+\xi'/\sigma')\right],
        \end{aligned}
    \end{equation}
    which implies
    \begin{equation}
        \begin{aligned}
            \nabla_\phi^2 \mathcal{L}
            &= \E \left\{\underbrace{\nabla_\phi^2[k(x,x')] (s_p(x)+\xi/\sigma)^T(s_p(x')+\xi'/\sigma')}_{\cirone}\right. \\
            &\quad + \underbrace{\nabla_\phi k(x,x')\otimes \nabla_\phi \left[(s_p(x)+\xi/\sigma)^T(s_p(x')+\xi'/\sigma')\right] + \nabla_\phi \left[(s_p(x)+\xi/\sigma)^T(s_p(x')+\xi'/\sigma')\right]\otimes \nabla_\phi k(x,x')}_{\cirtwo} \\
            &\quad + \left.\underbrace{k(x,x')\nabla_\phi^2 \left[(s_p(x)+\xi/\sigma)^T(s_p(x')+\xi'/\sigma')\right]}_{\cirthree}\right\}.
        \end{aligned}
    \end{equation}

    For $\cirone$, we have 
    \begin{equation}
        \begin{aligned}
            \nabla_\phi^2[k(x,x')] 
            &= \nabla_\phi^2x\nabla_1k + \nabla_\phi^T x\left(\nabla_{11}k\nabla_\phi x+\nabla_{21}k\nabla_\phi x'\right) \\
            &+ \nabla_\phi^2x'\nabla_2k + \nabla_\phi^T x'\left(\nabla_{22}k\nabla_\phi x'+\nabla_{12}k\nabla_\phi x\right).
        \end{aligned}
    \end{equation}
    Here $\nabla_\phi^2x\nabla_1k$ is a matrix with entry $[\nabla_\phi^2x\nabla_1k]_{ij}=\sum_{l=1}^d\nabla_{\phi}^2x_l\nabla_{x_l}k$.
    
    Note that by Assumption \ref{asp:nn}$, \|\nabla_\phi x\|\leq \|\nabla_\phi \mu\|+\|\nabla_\phi \sigma\odot\xi\|\leq G(1+\|z\|)(1+\|\xi\|_\infty)$ and $\|\nabla_\phi^2 x\|\leq \|\nabla_\phi^2 \mu\|+\|\nabla_\phi^2 \sigma\odot\xi\|\leq G(1+\|z\|)(1+\|\xi\|_\infty)$. 
    Then by Assumption \ref{asp:kernel}, $\|\nabla_1 k\|\leq B^2, \|\nabla_{11} k\|\leq B^2$,
    \begin{equation}
        \begin{aligned}
            \E [\|\cirone\|]
            &\leq \E[\|\nabla_\phi^2 k\|\cdot\|s_p(x)+\xi/\sigma\|\|s_p(x')+\xi'/\sigma'\|] \\
            &\leq \sqrt{\E[\|\nabla_\phi^2 k\|^2]}\sqrt{\E [\|s_p(x)+\xi/\sigma\|^2\|s_p(x')+\xi'/\sigma'\|^2]} \\
            &\lesssim
            \left[GB^2\sqrt{\E (1+\|\xi\|_\infty)^2(1+\|z\|)^2} + G^2B^2 \sqrt{\E (1+\|\xi\|_\infty)^4(1+\|z\|)^4}\right] \E [\|s_p(x)+\xi/\sigma\|^2] \\
            &\lesssim G^2B^2d_z\log d \E[\|s_p(x)+\xi/\sigma\|^2] \\
            &\lesssim G^2B^2d_z\log d [\E\|s_p(x)\|^2+Ld].
        \end{aligned}
    \end{equation}
    In the inequality second to last, we utilize the well-known fact $\E \|\xi\|_\infty^4\lesssim \log^2 d$ and Assumption \ref{asp:moment}.
    
    For $\cirtwo$, we have
    \begin{equation}
        \begin{aligned}
            \|\nabla_\phi k(x,x')\|
            &\leq \|\nabla_\phi^Tx\nabla_1k\|+\|\nabla_\phi^Tx' \nabla_2k\| \\
            &\lesssim B^2G[(1+\|z\|)(1+\|\xi\|_\infty) + (1+\|z'\|)(1+\|\xi'\|_\infty)],
        \end{aligned}
    \end{equation}
    and
    \begin{equation}
        \begin{aligned}
            \nabla_\phi \left[(s_p(x)+\xi/\sigma)^T(s_p(x')+\xi'/\sigma')\right]
            &= \left[\nabla^2\log p(x)\nabla_\phi x-\diag\{\xi/\sigma^2\}\nabla_\phi\sigma\right]^T(s_p(x')+\xi'/\sigma') \\
            &\quad + \left[\nabla^2\log p(x')\nabla_\phi x'-\diag\{\xi'/\sigma'^2\}\nabla_\phi\sigma'\right]^T(s_p(x)+\xi/\sigma).
        \end{aligned}
    \end{equation}
    Therefore,
    \begin{equation}
        \begin{aligned}
            \E [\|\cirtwo\|]
            &\lesssim B^2G\sqrt{\E [(1+\|z\|)^2(1+\|\xi\|_\infty)^2]}\sqrt{\E\left[\|\left[\nabla^2\log p(x)\nabla_\phi x-\diag\{\xi/\sigma^2\}\nabla_\phi\sigma\right]^T(s_p(x')+\xi'/\sigma')\|^2\right]} \\
            &\lesssim B^2G\sqrt{d_z\log d }\cdot LG\sqrt{\E [(1+\|z\|)^2(1+\|\xi\|_\infty)^2\|s_p(x')+\xi'/\sigma'\|^2]} \\
            &\lesssim LG^2B^2d_z\log d \sqrt{\E\|s_p(x)\|^2+Ld}.
        \end{aligned}
    \end{equation}

    For $\cirthree$, we have
    \begin{equation}
        \begin{aligned}
            \nabla_\phi^2\left[(s_p(x)+\xi/\sigma)^T(s_p(x')+\xi'/\sigma')\right]
            &= \nabla_\phi^2\left[s_p(x)^Ts_p(x')+\frac{\xi}{\sigma}\cdot s_p(x')+\frac{\xi'}{\sigma'}\cdot s_p(x) + \frac{\xi}{\sigma}\cdot \frac{\xi'}{\sigma'}\right]
        \end{aligned}
    \end{equation}
    Firstly,
    \begin{equation}
        \begin{aligned}
            \nabla_\phi^2 [s_p(x)^Ts_p(x')]
            &= \nabla_\phi^2x [\nabla^2\log p(x)s_p(x')] + \nabla_\phi^T x [\nabla^3\log p(x)s_p(x')]\nabla_\phi x \\
            &\qquad + \nabla_\phi^2x' [\nabla^2\log p(x')s_p(x)] + \nabla_\phi^T x' [\nabla^3\log p(x')s_p(x)]\nabla_\phi x' \\
            &\qquad + \nabla_\phi^T x \nabla^2\log p(x)\nabla^2\log p(x')\nabla_\phi x' + \nabla_\phi^T x' \nabla^2\log p(x')\nabla^2\log p(x)\nabla_\phi x.
        \end{aligned}
    \end{equation}
    \begin{equation}
        \begin{aligned}
            \E \left[k(x,x')\|\nabla_\phi^2 [s_p(x)^Ts_p(x')]\|\right]
            &\lesssim B^2\E \left[\left[LG(1+\|z\|)(1+\|\xi\|_\infty) + MG^2(1+\|z\|)^2(1+\|\xi\|_\infty)^2\right]\|s_p(x')\|\right] \\
            &\qquad + B^2L^2G^2\E[(1+\|z\|)(1+\|\xi\|_\infty)(1+\|z'\|)(1+\|\xi'\|_\infty)] \\
            &\lesssim GB^2(L\sqrt{d_z\log d}+GMd_z\log d)\sqrt{\E \|s_p(x)\|^2}+B^2L^2G^2d_z\log d.
        \end{aligned}
    \end{equation}
    Then, since
    \begin{equation}
        \nabla_\phi^2 [\frac{\xi}{\sigma}\cdot s_p(x')]
        = \nabla_\phi^2[s_p(x')] \frac{\xi}{\sigma} + \nabla_\phi^2[\frac{\xi}{\sigma}]s_p(x') + \nabla_\phi[s_p(x')]\otimes \nabla_\phi[\frac{\xi}{\sigma}] + \nabla_\phi[\frac{\xi}{\sigma}] \otimes\nabla_\phi[s_p(x')],
    \end{equation}
    it holds that
    \begin{equation}
        \begin{aligned}
            \|\nabla_\phi^2 [\frac{\xi}{\sigma}\cdot s_p(x')]\|
            &\lesssim \|\nabla_\phi^2[s_p(x')]\|\cdot\|\frac{\xi}{\sigma}\|+ \|\nabla_\phi^2[\frac{\xi}{\sigma}]\|\cdot \|s_p(x')\| + \|\nabla_\phi[\frac{\xi}{\sigma}]\| \cdot\|\nabla_\phi[s_p(x')]\| \\
            &\lesssim \left[M\|\nabla_\phi x'\|^2+L\|\nabla_\phi^2 x'\|\right]\cdot L^{1/2}\|\xi\| + (1\vee L)^{3/2}G^2(1+\|z\|)^2\|\xi\|_\infty\cdot\|s_p(x')\| \\
            &\qquad + L\|\nabla_\phi x'\| \cdot LG(1+\|z\|)\|\xi\|_\infty,
        \end{aligned}
    \end{equation}
    and thus
    \begin{equation}
        \begin{aligned}
            \E \left[k(x,x')\|\nabla_\phi^2 [\frac{\xi}{\sigma}\cdot s_p(x')]\|\right]
            &\lesssim B^2 G(L\sqrt{d_z\log d}+GMd_z\log d)L^{1/2}d^{1/2} \\
            &\qquad + B^2G^2(1\vee L)^{3/2}d_z\log^{1/2} d\sqrt{\E \|s_p(x)\|^2} + B^2G^2L^2d_z\log d.
        \end{aligned}
    \end{equation}
    Lastly, we have
    \begin{equation}
        \nabla_\phi^2[\frac{\xi}{\sigma}\cdot \frac{\xi'}{\sigma'}] = \nabla_\phi^2[\frac{\xi}{\sigma}]\frac{\xi'}{\sigma'} + \nabla_\phi^2[\frac{\xi'}{\sigma'}]\frac{\xi}{\sigma} + \nabla_\phi[\frac{\xi'}{\sigma'}]\otimes \nabla_\phi[\frac{\xi}{\sigma}] + \nabla_\phi[\frac{\xi}{\sigma}] \otimes\nabla_\phi[\frac{\xi'}{\sigma'}].
    \end{equation}
    And
    \begin{equation}
        \begin{aligned}
            \E \left[k(x,x')\|\nabla_\phi^2[\frac{\xi}{\sigma}\cdot \frac{\xi'}{\sigma'}]\|\right]
            &\lesssim B^2\E \left[(1\vee L)^{3/2}G^2(1+\|z\|)^2\|\xi\|_\infty\cdot L^{1/2}\|\xi'\| + \|\xi\|_\infty LG(1+\|z\|) \cdot \|\xi'\|_\infty LG(1+\|z'\|) \right] \\
            &\lesssim B^2 \left[(1\vee L)^{3/2}L^{1/2}G^2d_zd^{1/2}\log^{1/2}d + L^2G^2d_z\log d\right] \\
            &\lesssim B^2 (1\vee L)^{3/2}L^{1/2}G^2d_zd^{1/2}\log^{1/2} d.
        \end{aligned}
    \end{equation}
    Therefore, we get
    \begin{equation}
        \begin{aligned}
            \E[\|\cirthree\|]
            &\lesssim B^2G^2d_z\log d\left((1\vee L)^{3/2}+M\right)\sqrt{\E \|s_p(x)\|^2+Ld}
        \end{aligned}
    \end{equation}
    Combining $\cirone,\cirtwo,\cirthree$, we can conclude that 
    \begin{equation}
        \|\nabla_\phi^2\mathcal{L}\|\lesssim B^2G^2d_z\log d\left[(1\vee L)^3+Ld+ M^2 + \E \|s_p(x)\|^2\right].
    \end{equation}
\end{proof}

\begin{theorem}
    Both gradient estimators $\hat{g}_{\textrm{vanilla}}$ and $\hat{g}_{\textrm{u\text{-}stat}}$ have bounded variance:
    \begin{equation}
        \Var (\hat{g})\lesssim \frac{B^4G^2d_z\log d[L^3d+L^2d^2+\E \|s_p(x)\|^4]}{N}.
    \end{equation}
\end{theorem}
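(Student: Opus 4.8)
The plan is to bound the variance by bounding the second moment $\E\|\hat g\|^2$, since $\Var(\hat g)\le \E\|\hat g\|^2$ and the estimators are unbiased. I will treat the vanilla estimator \eqref{eq:sto_grad} in detail; the U-statistic case is analogous (and standard, cf.\ the variance decomposition of U-statistics into projections, as in \citet{MMD,liu2016kernelized}). Write $h_{ij}:=\nabla_\phi[k(x_{1i},x_{2j})\langle f_{1i},f_{2j}\rangle]$ where $f_{ri}=s_p(x_{ri})+\xi_{ri}/\sigma(z_{ri};\phi)$, so that $\hat g_{\textrm{vanilla}}=\frac{1}{N^2}\sum_{i,j}h_{ij}$. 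Because the two batches $r=1,2$ are i.i.d., the cross terms factorize through $\E[h_{ij}]=g(\phi)$; grouping summands by how many indices they share, one gets $\Var(\hat g_{\textrm{vanilla}})\lesssim \frac{1}{N}\E\|h_{11}\|^2$ up to lower-order $O(1/N^2)$ pieces. Hence it suffices to prove the single-summand bound $\E\|h_{11}\|^2\lesssim B^4G^2 d_z\log d\,[L^3 d + L^2 d^2 + \E_{q_\phi}\|s_p(x)\|^4]$.

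The core step is therefore to estimate $\E\|\nabla_\phi[k(x,x')\langle f,f'\rangle]\|^2$ where $x=\mu(z;\phi)+\sigma(z;\phi)\odot\xi$, $x'=\mu(z';\phi)+\sigma(z';\phi)\odot\xi'$, and primes denote the independent copy. By the product rule, $\nabla_\phi[k\langle f,f'\rangle]=\langle f,f'\rangle\nabla_\phi k(x,x') + k(x,x')\nabla_\phi\langle f,f'\rangle$, so I bound the square of each piece. For the first piece I use $|\langle f,f'\rangle|\le\|f\|\,\|f'\|$, $\|\nabla_\phi k(x,x')\|\lesssim B^2 G[(1+\|z\|)(1+\|\xi\|_\infty)+(1+\|z'\|)(1+\|\xi'\|_\infty)]$ (exactly the chain-rule bound already derived in the proof of Theorem~\ref{thm:smooth}), then Cauchy--Schwarz over the randomness: $\E[\|f\|^2\|f'\|^2(1+\|z\|)^2(1+\|\xi\|_\infty)^2]\le \sqrt{\E\|f\|^4(1+\|z\|)^4(1+\|\xi\|_\infty)^4}\cdot\sqrt{\E\|f'\|^4}$ by independence, and $\E\|f\|^4\lesssim \E\|s_p(x)\|^4 + \E\|\xi/\sigma\|^4 \lesssim \E\|s_p(x)\|^4 + L^2 d^2$ using the lower bound $\sigma\ge 1/\sqrt L$ and $\E\|\xi\|^4\lesssim d^2$. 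The moment bookkeeping — Assumption~\ref{asp:moment} for $\E\|z\|^4\lesssim d_z^2$, the Gaussian facts $\E\|\xi\|_\infty^4\lesssim\log^2 d$ and $\E\|\xi\|^4\lesssim d^2$, plus repeated AM--GM — collapses all these to $B^4 G^2 d_z\log d\,[\E\|s_p(x)\|^4 + L^2 d^2]$. For the second piece, $\nabla_\phi\langle f,f'\rangle = [\nabla^2\log p(x)\nabla_\phi x - \mathrm{diag}\{\xi/\sigma^2\}\nabla_\phi\sigma]^T f' + (\text{symmetric in }x')$, whose norm is $\lesssim G(1+\|z\|)(1+\|\xi\|_\infty)[L + L^{3/2}]\|f'\| + (\cdots)$ after using $\|\nabla^2\log p\|\le L$, $\sigma\ge 1/\sqrt L$, and Assumption~\ref{asp:nn}; multiplying by $k\le B^2$ and squaring and taking expectations gives a term of order $B^4 G^2 d_z\log d\,[L^3\,\E\|f\|^2 + (\text{pure }\xi\text{ terms})] \lesssim B^4 G^2 d_z\log d\,[L^3 d + L^2 d^2 + \E\|s_p(x)\|^4]$ (here $\E\|f\|^2\lesssim \E\|s_p(x)\|^2 + Ld \lesssim \sqrt{\E\|s_p(x)\|^4} + Ld$). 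Combining the two pieces and invoking Proposition~\ref{prop:score_bound} to replace $\E\|s_p(x)\|^4$ by $C^2$ yields the claimed $\Sigma_0$.

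The main obstacle is the moment bookkeeping: unlike the smoothness proof, here every quantity is squared, so products like $(1+\|z\|)^2(1+\|\xi\|_\infty)^2\|f\|^2\|f'\|^2$ appear and one must split them carefully with Cauchy--Schwarz along the product structure $(z,\xi)\perp(z',\xi')$ so that no fourth moment of $\|z\|$ beyond what Assumption~\ref{asp:moment} supplies is needed, and so that the $\|\xi\|_\infty$ factors only ever contribute $\log^{O(1)} d$ rather than powers of $d$. A secondary subtlety is the U-statistic estimator: I will reduce it to the vanilla bound via the classical identity $\Var(\hat g_{\textrm{u-stat}}) = \frac{2}{N(N-1)}\big[(N-2)\,\Var(\E[h_{12}\mid x_1,z_1]) + \Var(h_{12})\big] \le \frac{2}{N-1}\Var(h_{12}) \le \frac{4}{N}\E\|h_{12}\|^2$ for $N\ge 2$ (since $h_{ij}$ is symmetric in $i\leftrightarrow j$ up to the symmetrization already built into the kernel and inner product), so the same single-summand estimate controls both. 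I will carry out the single-summand bound once and apply it in both cases, noting the constant factors are absorbed by $\lesssim$.
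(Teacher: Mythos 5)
Your proposal follows essentially the same route as the paper's own proof: reduce the variance to $\frac{1}{N}$ times the second moment of a single summand, apply the product rule to split $\nabla_\phi[k\langle f,f'\rangle]$ into the $\nabla_\phi k$ piece and the $\nabla_\phi\langle f,f'\rangle$ piece, bound $\|\nabla_\phi k\|\lesssim B^2G[(1+\|z\|)(1+\|\xi\|_\infty)+\cdots]$, then close with Cauchy--Schwarz and the moment facts $\E\|z\|^4\lesssim d_z^2$, $\E\|\xi\|_\infty^4\lesssim\log^2 d$, $\E\|\xi\|^4\lesssim d^2$, $\sigma\ge 1/\sqrt L$. The only cosmetic differences are that you spell out the V-/U-statistic covariance bookkeeping that licenses the $\frac{1}{N}$ reduction (the paper just asserts it suffices to bound the two-sample term), and you have a small slip in the second piece where you write a factor $[L+L^{3/2}]$: the bound $\|\nabla^2\log p(x)\nabla_\phi x-\mathrm{diag}\{\xi/\sigma^2\}\nabla_\phi\sigma\|\lesssim LG(1+\|z\|)(1+\|\xi\|_\infty)$ carries only an $L$, not $L^{3/2}$, since $\|\mathrm{diag}\{\xi/\sigma^2\}\nabla_\phi\sigma\|\le L\|\xi\|_\infty\|\nabla_\phi\sigma\|$; this does not change the final $\Sigma_0$ up to constants and is exactly how the paper handles it.
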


\begin{proof}
    It is sufficient to look at the upper bound of variance with two samples.
    Note that 
    \begin{equation}
        \begin{aligned}
            \nabla_\phi \left[k(x,x')(s_p(x)+\xi/\sigma)^T(s_p(x')+\xi'/\sigma') \right]
            &= \underbrace{\nabla_\phi [k(x,x')](s_p(x)+\xi/\sigma)^T(s_p(x')+\xi'/\sigma')}_{\cirone} \\
            &\quad + \underbrace{k(x,x')\nabla_\phi\left[(s_p(x)+\xi/\sigma)^T(s_p(x')+\xi'/\sigma')\right]}_{\cirtwo}.
        \end{aligned}
    \end{equation}

    For $\cirone$, we again utilize the fact that
    \begin{equation}
        \|\nabla_\phi k(x,x')\| \lesssim B^2G[(1+\|z\|)(1+\|\xi\|_\infty) + (1+\|z'\|)(1+\|\xi'\|_\infty)],
    \end{equation}
    and thus
    \begin{equation}
        \begin{aligned}
            \E [\|\cirone\|^2] 
            &\lesssim \E[\|\nabla_\phi k\|^2 \|s_p(x)+\xi/\sigma\|^2\|s_p(x')+\xi'/\sigma'\|^2] \\
            &\lesssim B^4G^2d_z\log d[\E\|s_p(x)+\xi/\sigma\|^4]^{1/2} \E\|s_p(x)+\xi/\sigma\|^2\\
            &\lesssim B^4G^2d_z\log d \left[\sqrt{\E\|s_p(x)\|^4}+Ld\right]\E\|s_p(x)+\xi/\sigma\|^2.
        \end{aligned}
    \end{equation}

    For $\cirtwo$, we have 
    \begin{equation}
        \begin{aligned}
            \nabla_\phi \left[(s_p(x)+\xi/\sigma)^T(s_p(x')+\xi'/\sigma')\right]
            &= \left[\nabla^2\log p(x)\nabla_\phi x-\diag\{\xi/\sigma^2\}\nabla_\phi\sigma\right]^T(s_p(x')+\xi'/\sigma') \\
            &\quad + \left[\nabla^2\log p(x')\nabla_\phi x'-\diag\{\xi'/\sigma'^2\}\nabla_\phi\sigma'\right]^T(s_p(x)+\xi/\sigma).
        \end{aligned}
    \end{equation}
    Hence
    \begin{equation}
        \begin{aligned}
            \E [\|\cirtwo\|^2]
            &\lesssim B^4\E\left[\left\|\left[\nabla^2\log p(x)\nabla_\phi x-\diag\{\xi/\sigma^2\}\nabla_\phi\sigma\right]^T(s_p(x')+\xi'/\sigma')\right\|^2\right] \\
            &\lesssim B^4 L^2G^2\E [(1+\|z\|)^2(1+\|\xi\|_\infty)^2\|s_p(x')+\xi'/\sigma'\|^2] \\
            &\lesssim B^4L^2G^2d_z\log d \left[\E\|s_p(x)+\xi/\sigma\|^2\right].
        \end{aligned}
    \end{equation}
    Therefore, combining $\cirone, \cirtwo$, we conclude that
    \begin{equation}
        \begin{aligned}
            &\E \left[\left\|\nabla_\phi \left[k(x,x')(s_p(x)+\xi/\sigma)^T(s_p(x')+\xi'/\sigma') \right]\right\|^2\right] \\
            &\qquad\qquad \lesssim B^4G^2d_z\log d\left[Ld+L^2+\sqrt{\E \|s_p(x)\|^4}\right]\left[\E\|s_p(x)+\xi/\sigma\|^2\right],
        \end{aligned}
    \end{equation}
    which implies
    \begin{equation}
        \begin{aligned}
            \Var (\hat{g})
            &\lesssim \frac{1}{N}\Var\left(\left[k(x,x')(s_p(x)+\xi/\sigma)^T(s_p(x')+\xi'/\sigma') \right]\right)\\
            &\lesssim \frac{B^4G^2d_z\log d[L^3d+L^2d^2+\E \|s_p(x)\|^4]}{N}.
        \end{aligned}
    \end{equation}
\end{proof}

\begin{theorem}
Under Assumption \ref{asp:kernel}-\ref{asp:moment}, the iteration sequence generated by SGD $\phi_{t+1}=\phi_t-\eta \hat{g}_t$ with proper learning rate $\eta$ includes an $\varepsilon$-stationary point $\hat{\phi}$ such that $\E [\|\nabla_\phi\mathcal{L}(\hat{\phi})\|]\leq\varepsilon$, if 
\begin{equation}
    T\gtrsim \frac{L_\phi\mathcal{L}_0}{\varepsilon^2}\left(1+\frac{\Sigma_0}{N\varepsilon^2}\right).
\end{equation}
Here $\mathcal{L}_0:=\mathcal{L}(\phi_0)-\inf_{\phi}\mathcal{L}$. 
\end{theorem}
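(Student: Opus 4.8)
The plan is to combine the $L_\phi$-smoothness from Theorem~\ref{thm:smooth} with the variance bound $\Sigma=\Sigma_0/N$ from Theorem~\ref{thm:var} and run the textbook analysis of SGD on a smooth nonconvex objective \citep{ghadimi2013stochastic}. First I would invoke the descent lemma: since $\mathcal{L}$ is $L_\phi$-smooth, one SGD step $\phi_{t+1}=\phi_t-\eta\hat g_t$ satisfies
\begin{equation}
\mathcal{L}(\phi_{t+1})\le \mathcal{L}(\phi_t)-\eta\langle\nabla\mathcal{L}(\phi_t),\hat g_t\rangle+\tfrac{L_\phi\eta^2}{2}\|\hat g_t\|^2.
\end{equation}
Taking expectation conditional on $\phi_t$ and using that $\hat g_t$ is unbiased with $\E[\|\hat g_t\|^2\mid\phi_t]=\|\nabla\mathcal{L}(\phi_t)\|^2+\Var(\hat g_t\mid\phi_t)\le\|\nabla\mathcal{L}(\phi_t)\|^2+\Sigma_0/N$ yields
\begin{equation}
\E[\mathcal{L}(\phi_{t+1})]\le \E[\mathcal{L}(\phi_t)]-\eta\bigl(1-\tfrac{L_\phi\eta}{2}\bigr)\E\|\nabla\mathcal{L}(\phi_t)\|^2+\tfrac{L_\phi\eta^2\Sigma_0}{2N}.
\end{equation}

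Next I would fix $\eta\le 1/L_\phi$ so that $1-L_\phi\eta/2\ge 1/2$, rearrange, and telescope over $t=0,\dots,T-1$, using $\mathcal{L}(\phi_0)-\E[\mathcal{L}(\phi_T)]\le \mathcal{L}_0$, to obtain
\begin{equation}
\frac{1}{T}\sum_{t=0}^{T-1}\E\|\nabla\mathcal{L}(\phi_t)\|^2\le \frac{2\mathcal{L}_0}{\eta T}+\frac{L_\phi\eta\Sigma_0}{N}.
\end{equation}
Defining $\hat\phi$ as an iterate drawn uniformly from $\{\phi_0,\dots,\phi_{T-1}\}$ (equivalently, $\argmin_t\E\|\nabla\mathcal{L}(\phi_t)\|^2$), the left-hand side equals $\E\|\nabla\mathcal{L}(\hat\phi)\|^2$, and Jensen's inequality gives $\E\|\nabla\mathcal{L}(\hat\phi)\|\le(\E\|\nabla\mathcal{L}(\hat\phi)\|^2)^{1/2}$, so it suffices to force the right-hand side below $\varepsilon^2$.

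Finally I would optimize the learning rate: taking $\eta=\min\{1/L_\phi,\ \sqrt{2\mathcal{L}_0 N/(L_\phi\Sigma_0 T)}\,\}$ balances the two terms, so $\tfrac{2\mathcal{L}_0}{\eta T}+\tfrac{L_\phi\eta\Sigma_0}{N}\le\varepsilon^2$ holds once $T\gtrsim\tfrac{L_\phi\mathcal{L}_0}{\varepsilon^2}\bigl(1+\tfrac{\Sigma_0}{N\varepsilon^2}\bigr)$, which is exactly the claimed complexity. The argument is essentially routine given Theorems~\ref{thm:smooth} and~\ref{thm:var}; the only points requiring mild care are the bookkeeping between the squared gradient norm produced by the descent analysis and the unsquared quantity $\E\|\nabla\mathcal{L}(\hat\phi)\|$ in the statement (handled by Jensen, which is why $\varepsilon^2$ appears inside the complexity bound), and verifying that the two-regime choice of $\eta$ reproduces the additive form $1+\Sigma_0/(N\varepsilon^2)$ rather than a product.
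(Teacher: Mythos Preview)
Your proposal is correct and follows essentially the same approach as the paper's proof: invoke the descent lemma from $L_\phi$-smoothness, take conditional expectation using unbiasedness and the variance bound $\Sigma_0/N$, set $\eta\le 1/L_\phi$, telescope, and optimize $\eta\asymp\min\{1/L_\phi,\sqrt{N\mathcal{L}_0/(L_\phi\Sigma_0 T)}\}$ to obtain the stated complexity. Your write-up is actually a bit more explicit than the paper's on two points the paper leaves implicit: the definition of $\hat\phi$ as a uniformly drawn iterate and the use of Jensen to pass from the squared to the unsquared gradient norm.
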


\begin{proof}
    Since $\mathcal{L}(\cdot)$ is $L_\phi$-smooth and $\hat{g}_t$ is unbiased with $\Var(\hat{g}_t)\leq \frac{\Sigma_0}{N}$, we have
    \begin{equation}
        \begin{aligned}
            \E_t\mathcal{L}(\phi_{t+1}) 
            &\leq E_t \left[\mathcal{L}(\phi_t) + \left\langle \nabla_\phi \mathcal{L}(\phi_t), \phi_{t+1}-\phi_t \right\rangle + \frac{L_\phi}{2}\|\phi_{t+1}-\phi_t\|^2\right] \\
            &\leq \mathcal{L}(\phi_t) - (\eta-\frac{\eta^2L_\phi}{2}) \|\nabla_\phi \mathcal{L}(\phi_t)\|^2 + \frac{\eta^2L_\phi\Sigma_0}{2N}.
        \end{aligned}
    \end{equation}
    Let $\eta\leq 1/L_\phi$.
    Take full expectation and sum from $0$ to $T-1$.
    \begin{equation}
        \begin{aligned}
            \frac{1}{T}\sum_{t=0}^{T-1}\E[\|\nabla_\phi\mathcal{L}(\phi_t)\|^2]
            &\lesssim \frac{\mathcal{L}(\phi_0)-\E \mathcal{L}(\phi_T)}{\eta T} + \frac{\eta L_\phi\Sigma_0}{N}\\
            &\lesssim \sqrt{\frac{\Sigma_0 L_\phi \mathcal{L}_0}{NT}} + \frac{L_\phi \mathcal{L}_0}{T}.
        \end{aligned}
    \end{equation}
    In the last inequality, we plug optimal $\eta\asymp \min\left\{1/L_\phi, \sqrt{\frac{N\mathcal{L}_0}{L_\phi \sigma_0T}}\right\}$.
    We conclude the proof by substituting the RHS with $\varepsilon^2$ and solving $T$.
\end{proof}

\subsection{Justification for Assumption \ref{asp:nn}}\label{app:subsec:asp}

In all our experiments, the variance $\sigma(\cdot;\phi)$ is a constant vector $\sigma\in \R^d$, rather than a neural network.
Therefore the smoothness assumption for $\sigma(\cdot;\phi)$ is valid for $G=1$.
As for $\mu(\cdot;\phi)$, we compute $\|\nabla_\phi \mu(z;\phi)\|$ and $\|\nabla^2_\phi \mu(z;\phi)\|$ for randomly sampled $z$ and $\phi$ in the training process.
Note that for simplicity, we compute the Frobeneious norm and estimate the Lipschitz constant of Jacobian $\nabla_\phi \mu(z;\phi)$, which are larger than the operator norms. 
We plot the result in BLR experiment in Figure \ref{fig:smooth_mu} and \ref{fig:smooth_mu_u_stats}, showing that $G$ remains within a certain range during the early, middle, and late stages of training.
Therefore, Assumption \ref{asp:nn} is reasonable.

\begin{figure}[ht]
    \centering
    \subfigure{
        \begin{minipage}{0.45\linewidth}
            \centering
            \includegraphics[width=1\linewidth]{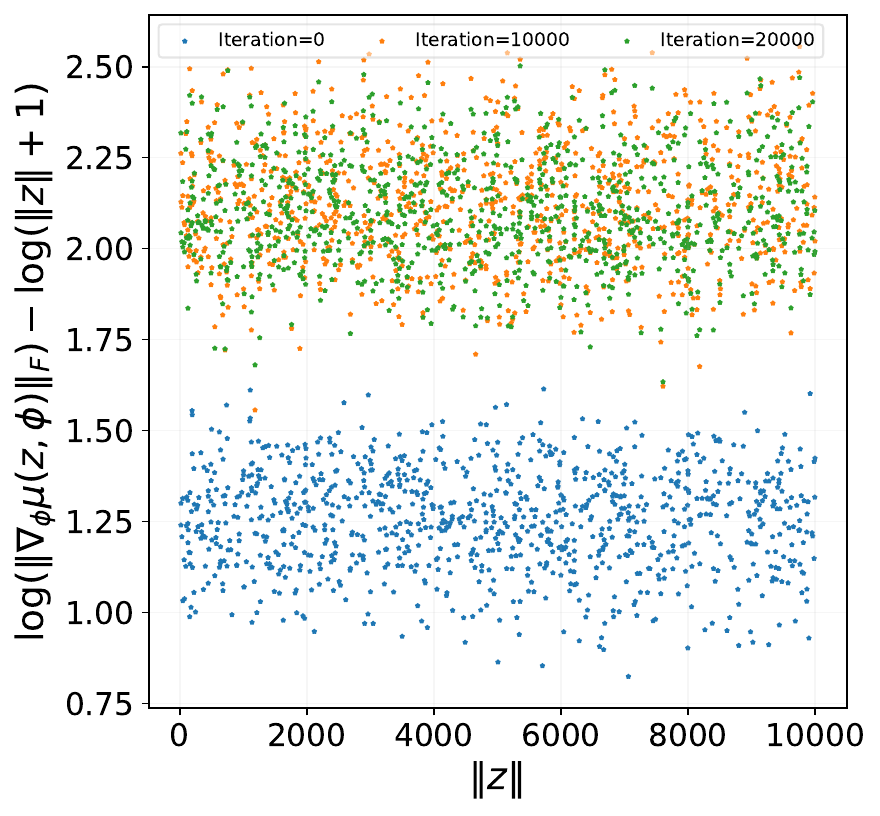}
        \end{minipage}
    }
    \hfill
    \subfigure{
        \begin{minipage}{0.45\linewidth}
            \centering
            \includegraphics[width=1\linewidth]{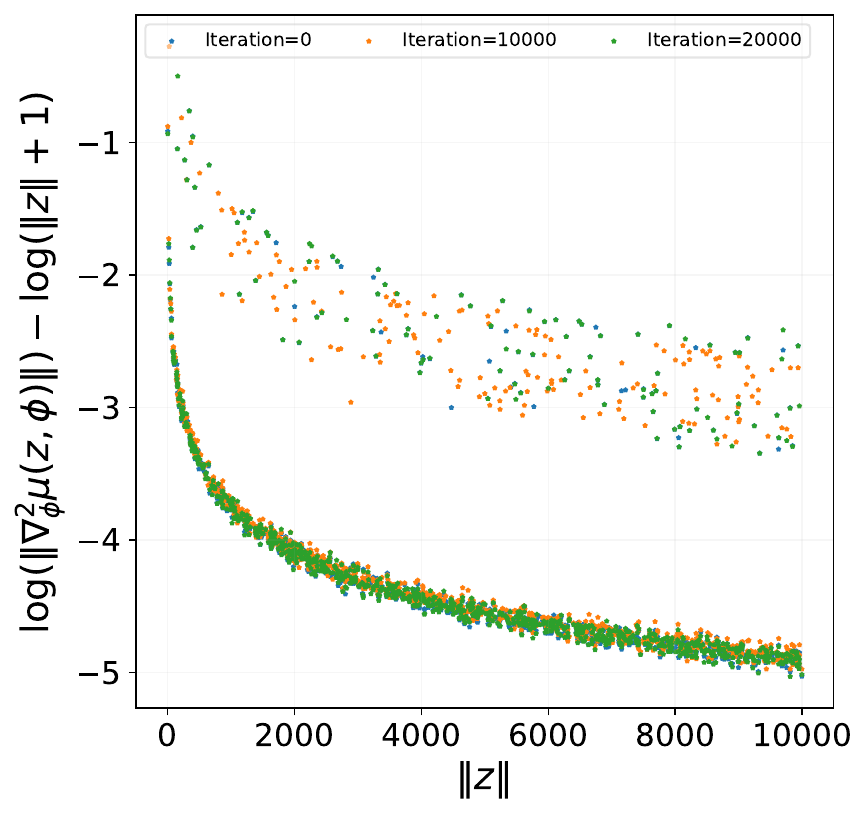}
        \end{minipage}
    }
    \centering
    \caption{Smoothness of neural network $\mu(\cdot;\phi)$ in KSIVI ($\hat{g}_{\textrm{vanilla}}$) trajectory.}
    \label{fig:smooth_mu}
\end{figure}

\begin{figure}[ht]
    \centering
    \subfigure{
        \begin{minipage}{0.45\linewidth}
            \centering
            \includegraphics[width=1\linewidth]{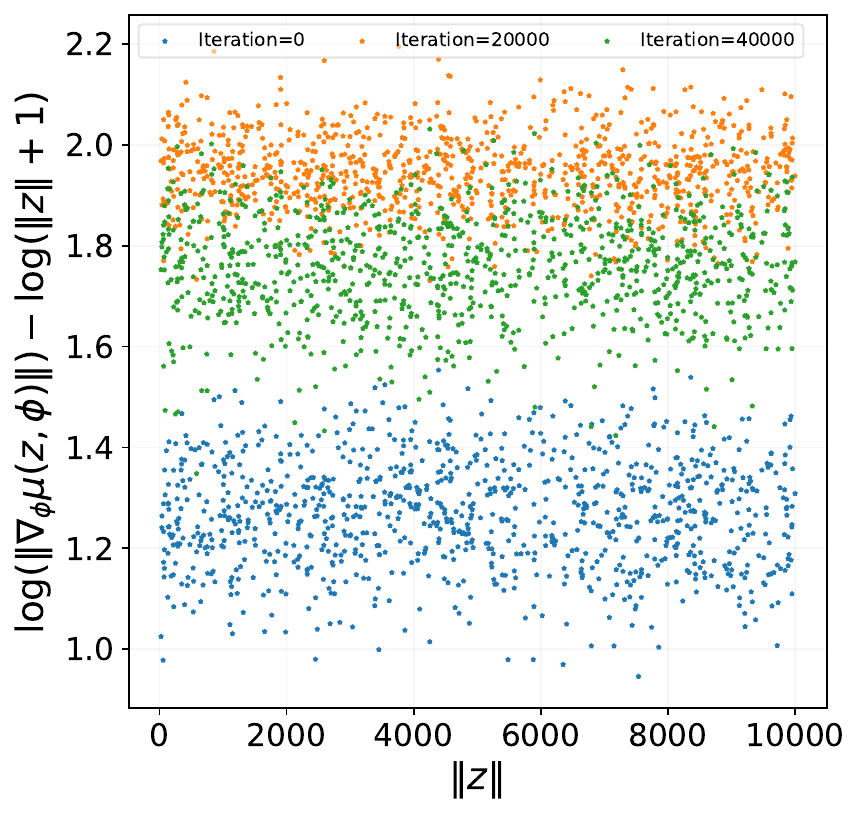}
        \end{minipage}
    }
    \hfill
    \subfigure{
        \begin{minipage}{0.45\linewidth}
            \centering
            \includegraphics[width=1\linewidth]{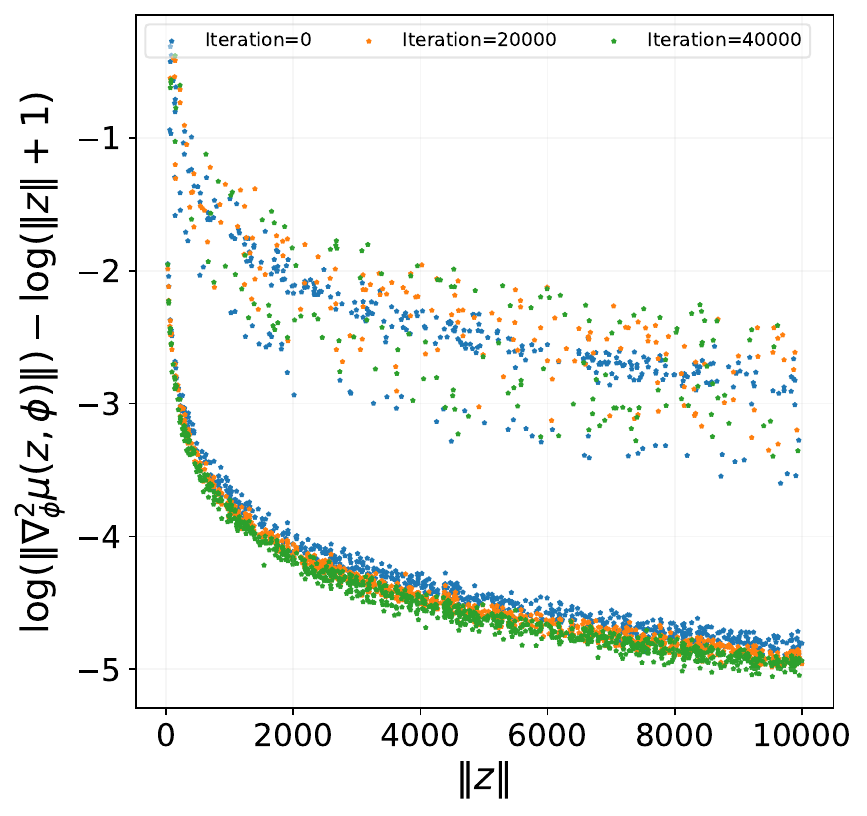}
        \end{minipage}
    }
    \centering
    \caption{Smoothness of neural network $\mu(\cdot;\phi)$ in KSIVI ($\hat{g}_{\textrm{u\text{-}stat}}$) trajectory.}
    \label{fig:smooth_mu_u_stats}
\end{figure}

\section{KSIVI Algorithm}\label{app:sec:alg}

\begin{algorithm}[H]
    \caption{KSIVI with diagonal Gaussian conditional layer and U-statistic gradient estimator}
    \label{alg:kernel_sivi}
    \begin{algorithmic}
        \STATE{{\bfseries Input:} target score $s_p(x)$, number of iterations $T$, number of samples $N$ for stochastic gradient.}
        \STATE{{\bfseries Output:} the optimal variational parameters $\phi^\ast$.}
        \FOR{$t=0, \cdots, T-1$}
            \STATE{
            Sample $\{z_1,\cdots, z_N\}$ from mixing distribution $q(z)$;
            }
            \STATE{
            Sample $\{\xi_1,\cdots, \xi_N\}$ from $\mathcal{N}(0,I)$;
            }
            \STATE{
            Compute $x_i =\mu(z_i;\phi)+\sigma(z_i;\phi)\odot \xi_i$ and $f_i=s_p(x_i)+\xi_i/\sigma(z_i;\phi)$;
            }
            \STATE{
            Compute the gradient estimate $\hat{g}_{\textrm{u-stat}}(\phi)$ through \eqref{eq:sto_grad_u};
            }
            \STATE{
            Set $\phi\leftarrow\text{optimizer}(\phi, \hat{g}_{\textrm{u-stat}}(\phi))$;
            }
        \ENDFOR
        \STATE $\phi^\ast\leftarrow\phi$.
    \end{algorithmic}
\end{algorithm}

\section{Experiment Setting and Additional Results}\label{appendix:AddtionalExp}
\subsection{Toy Experiments}\label{appendix:Toy}
\paragraph{Setting details} In the three 2-D toy examples, we set the conditional layer to be $\mathcal{N}(\mu(z;\phi),\mathrm{diag}\{\phi_\sigma^2\}$. 
The $\mu(z;\phi)$ in SIVI-SM and KSIVI all have the same structures of multi-layer perceptrons (MLPs) with layer widths $[3, 50, 50, 2]$ and ReLU  activation functions. 
The initial value of $\phi_\sigma$ is set to $1$ except for banana distribution on which we use $\frac12$. 
We set the learning rate of variational parameters $\phi$ to 0.001 and the learning rate of $\psi$ in SIVI-SM to 0.002. 
In the lower-level optimization of SIVI-SM, $\psi$ is updated after each update of $\phi$.
\paragraph{Additional results} Figure \ref{figure:mmd_toy} depicts the descent of maximum mean discrepancy (MMD) during the training process of SIVI-SM and KSIVI.

\begin{figure}[t]
    \centering
    \includegraphics[width=0.95\linewidth]{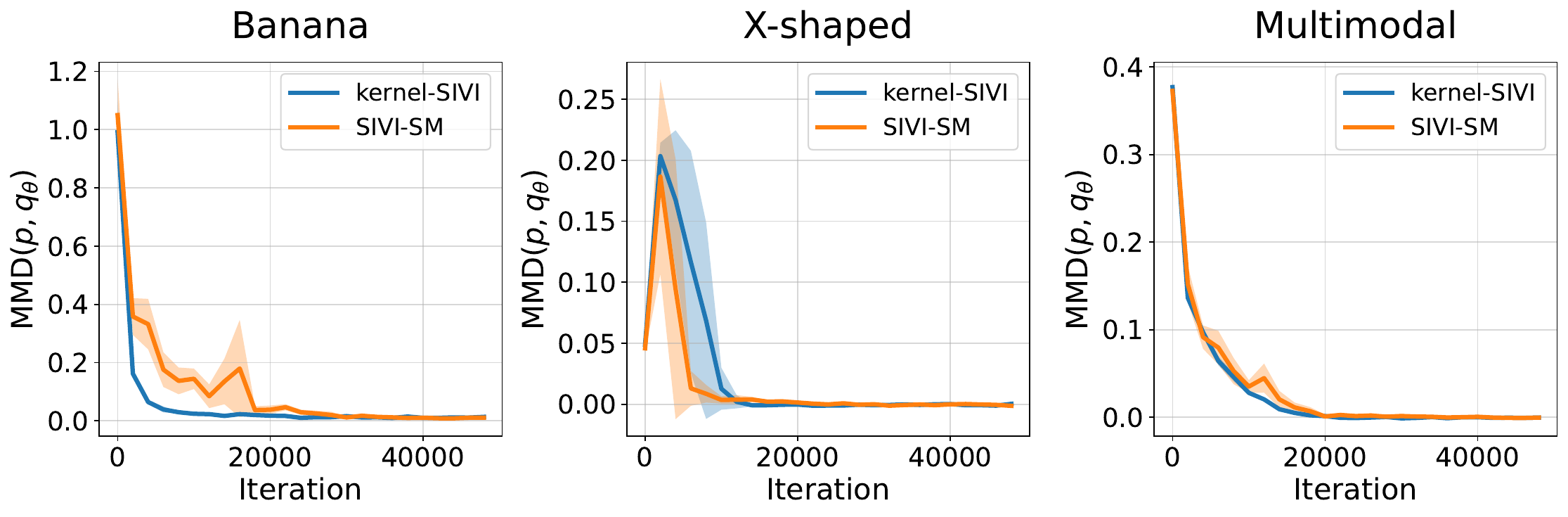}
    \caption{Convergence of MMD divergence of different methods on 2-D toy examples. 
    The MMD objectives are estimated using 1000 samples.
    The results are averaged over 5 independent computations with the standard deviation as the shaded region.
    }
    \label{figure:mmd_toy}
\end{figure}

\begin{table}[H]
   \captionof{table}{Three 2-D target distributions implemented in the 2-D toy experiments.}
   \label{table: ToyDensity}
   \begin{center}
   \begin{tabular}{cll}
   \toprule
   \multicolumn{1}{c}{Name}  & \multicolumn{2}{c}{ Density}\\
   \midrule
   \textsc{Banana} & $x = \left(v_1, v_1^2 + v_2 + 1\right)^T$, $v \sim \mathcal{N}(0, \Sigma)$ & $\Sigma = \bigl[\begin{smallmatrix}1 & 0.9\\0.9 & 1\end{smallmatrix}\bigl]$\\[2ex] 
   \textsc{Multimodal}    & $x\sim\frac{1}{2}\mathcal{N}(x|\mu_1, I)+\frac{1}{2}\mathcal{N}(x|\mu_2, I)$ & $\mu_1 = [-2, 0]^T, \mu_2 = [2, 0]^T$\\[2ex]
   \textsc{X-shaped}      & $ x\sim\frac{1}{2}\mathcal{N}(x|0, \Sigma_1)+\frac{1}{2}\mathcal{N}(x|0, \Sigma_2)$ & $\Sigma_1 = \bigl[\begin{smallmatrix}2 & 1.8\\1.8 & 2\end{smallmatrix}\bigl], \Sigma_2 = \bigl[\begin{smallmatrix} 2 & -1.8\\-1.8 & 2\end{smallmatrix}\bigl]$\\[1ex]
   \bottomrule
   \end{tabular}
   \end{center}
\end{table}

\subsection{Bayesian Logistic Regression}\label{appendix:blr}
\paragraph{Setting details} For the experiment on Bayesian logistic regression, $\mu(z;\phi)$ for all the SIVI variants are parameterized as MLPs with layer widths $[10, 100, 100, 22]$ and ReLU activation functions. 
For SIVI-SM, the $f_\psi$ is three-layer MLPs with 256 hidden units and ReLU activation functions. 
The initial value of $\phi_\sigma^2$ is $e^{-5}$ for KSIVI and 1 for SIVI and SIVI-SM. 
These initial values are determined through grid search on the set $\{e^{0}, e^{-2}, e^{-5}\}$.
For SIVI, the learning rate of $\phi_\mu$ is set to 0.01, and the learning rate of $\phi_\sigma$ is set to 0.001, following the choices made in \citet{yin2018semi}. 
The learning rate for the variational parameters $\phi$, comprising $\phi_\mu$ and $\phi_\sigma$, is set to 0.001 for both SIVI and SIVI-SM.
The learning rate of $\psi$ for SIVI-SM is set to 0.002, and we update $\psi$ after each update of $\phi$.
For SIVI, the auxiliary sample size is set to $K = 100$.
For all methods, we train the variational distributions for 20,000 iterations with a batch size of 100.

\paragraph{Additional results} In line with Figure 5 in \citet{Domke18}, we provide the results of KSIVI using varying step sizes of $0.00001$, $0.0001$, $0.001$, $0.005$, and $0.008$.
The results in Table \ref{tab:LRwaveform_wd} indicate that KSIVI demonstrates more consistent convergence behavior across different step sizes.
Figure \ref{figure:LRwaveform_density_dim_1_6} shows the marginal and pairwise joint density of the well-trained posterior $q_\phi(\beta_1, \beta_2, \cdots, \beta_6)$ for SIVI, SIVI-SM, KSIVI with vanilla gradient estimator and KSIVI with U-statistic gradient estimator. 
Figure \ref{figure:LRwaveform_corr_u} presents the results of the estimated pairwise correlation coefficients by KSIVI using the U-statistic gradient estimator.

\begin{table}[t]
    \caption{\textbf{Estimated sliced Wasserstein distances (Sliced-WD) of KSIVI and SIVI-SM on Bayesian logistic regression.} The Sliced-WD objectives are estimated using 1000 samples.}
    \label{tab:LRwaveform_wd}
    \centering
    \setlength\tabcolsep{5pt}
    \vskip0.5em
    \begin{tabular}{lccccc}
    \toprule
    Methods \textbackslash Step sizes & 0.00001 & 0.0001 & 0.001 & 0.005 & 0.008 \\
    \midrule
    SIVI-SM (iteration = 10k) & 2.5415 & 24.2952  & 0.1203 & 0.6107 & 0.9906\\
    SIVI-SM (iteration = 20k) & 5.6184 & 28.6335  & \textbf{0.0938} & 0.3283 & 0.9145\\
    KSIVI (iteration = 10k) &1.5196& 0.6863 & 0.1509 &\textbf{0.1064} & 0.6059\\
    KSIVI (iteration = 20k) &\textbf{1.2600}&\textbf{0.1120}&0.0965&0.2628&\textbf{0.3186} \\
    \bottomrule
    \end{tabular}
    \vspace{-1.5em}
    \end{table}

\begin{figure}[t]
   \centering
   \subfigure{
   \begin{minipage}[t]{0.47\linewidth}
   \centering
   \includegraphics[width=1\textwidth]{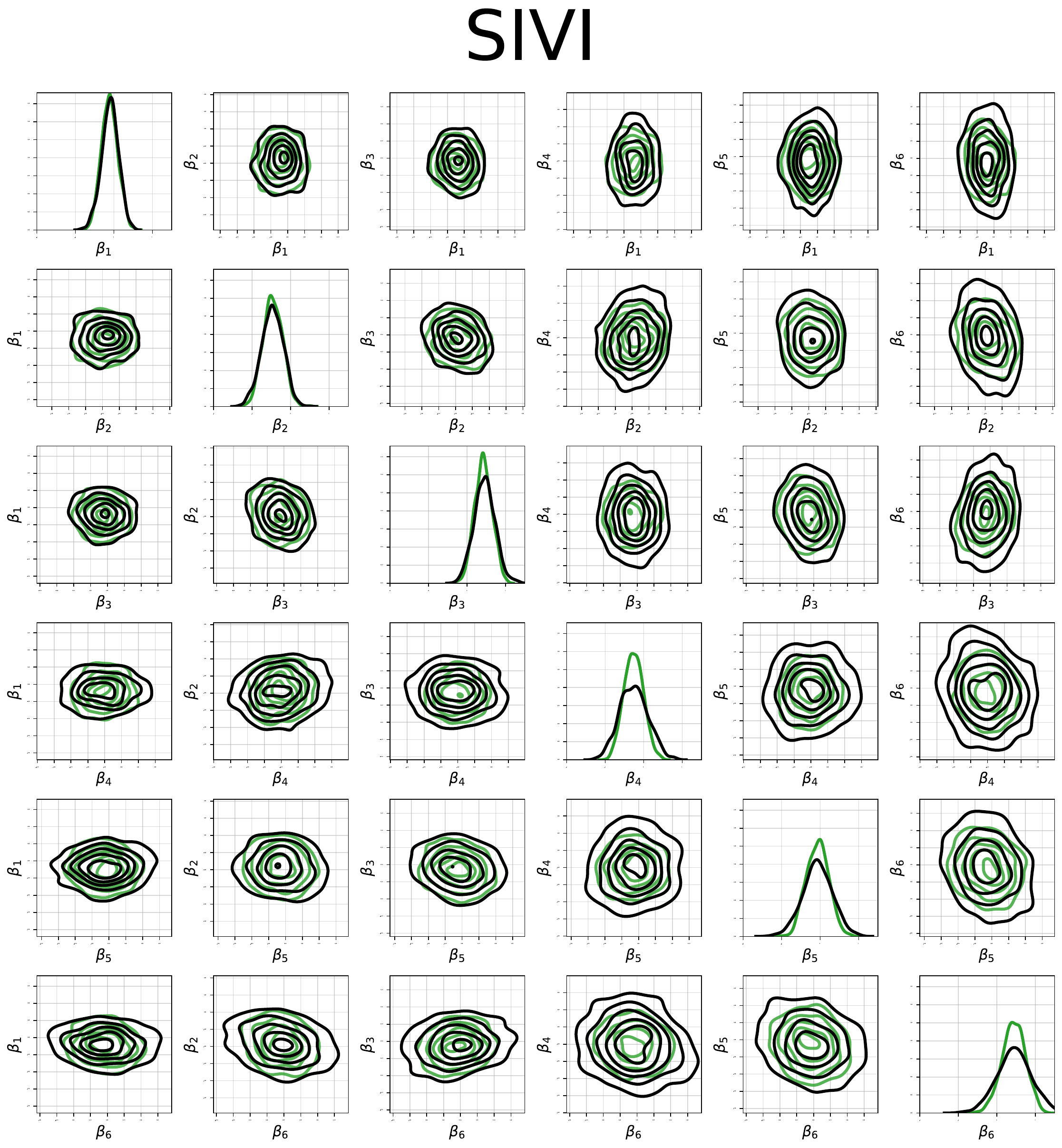}
   \end{minipage}
   }
   \hfill
   \subfigure{
   \begin{minipage}[t]{0.47\linewidth}
   \centering
   \includegraphics[width=1\textwidth]{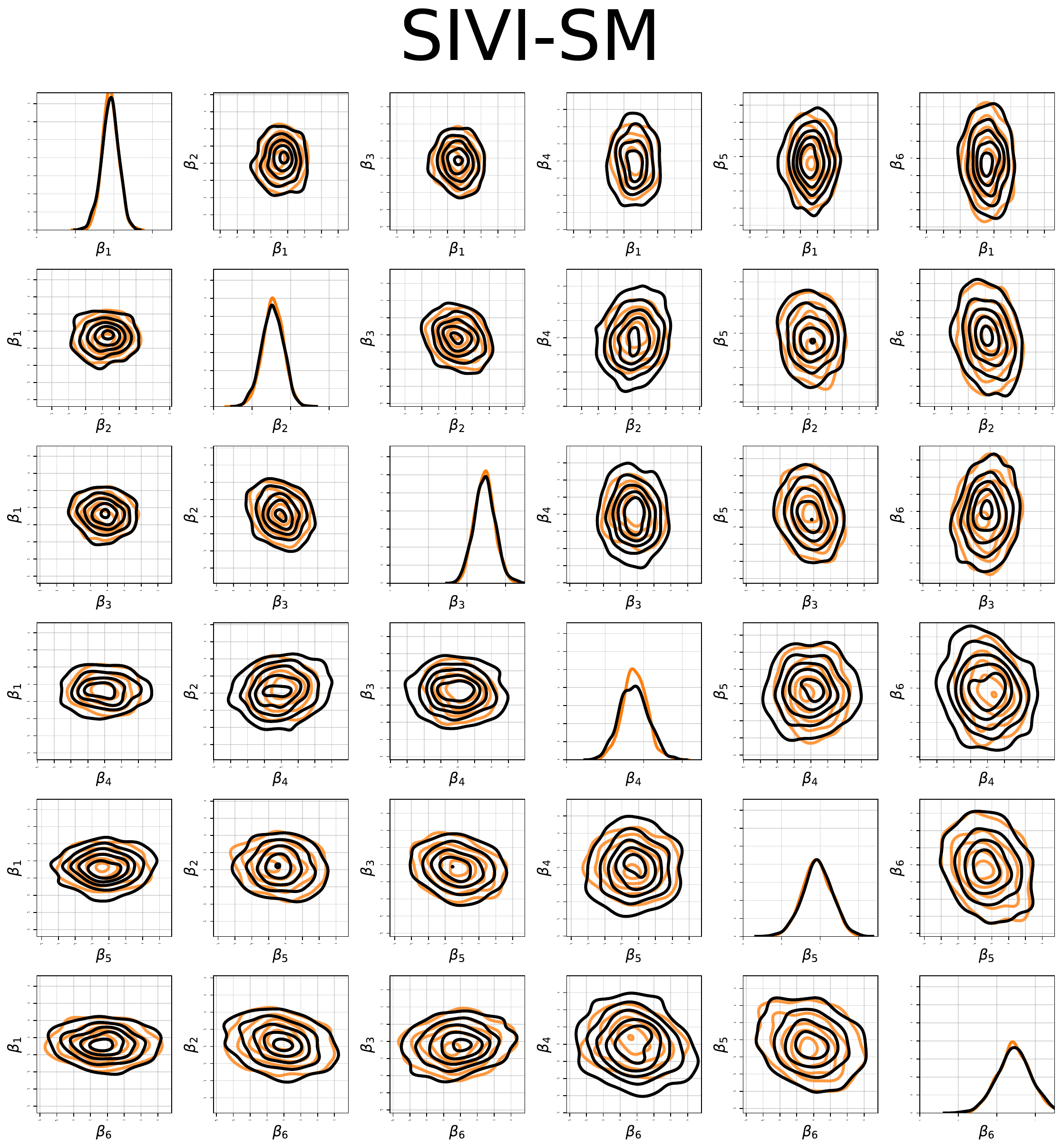}
   \end{minipage}
   }
   \hfill
   \subfigure{
   \begin{minipage}[t]{0.47\linewidth}
   \centering
   \includegraphics[width=1\textwidth]{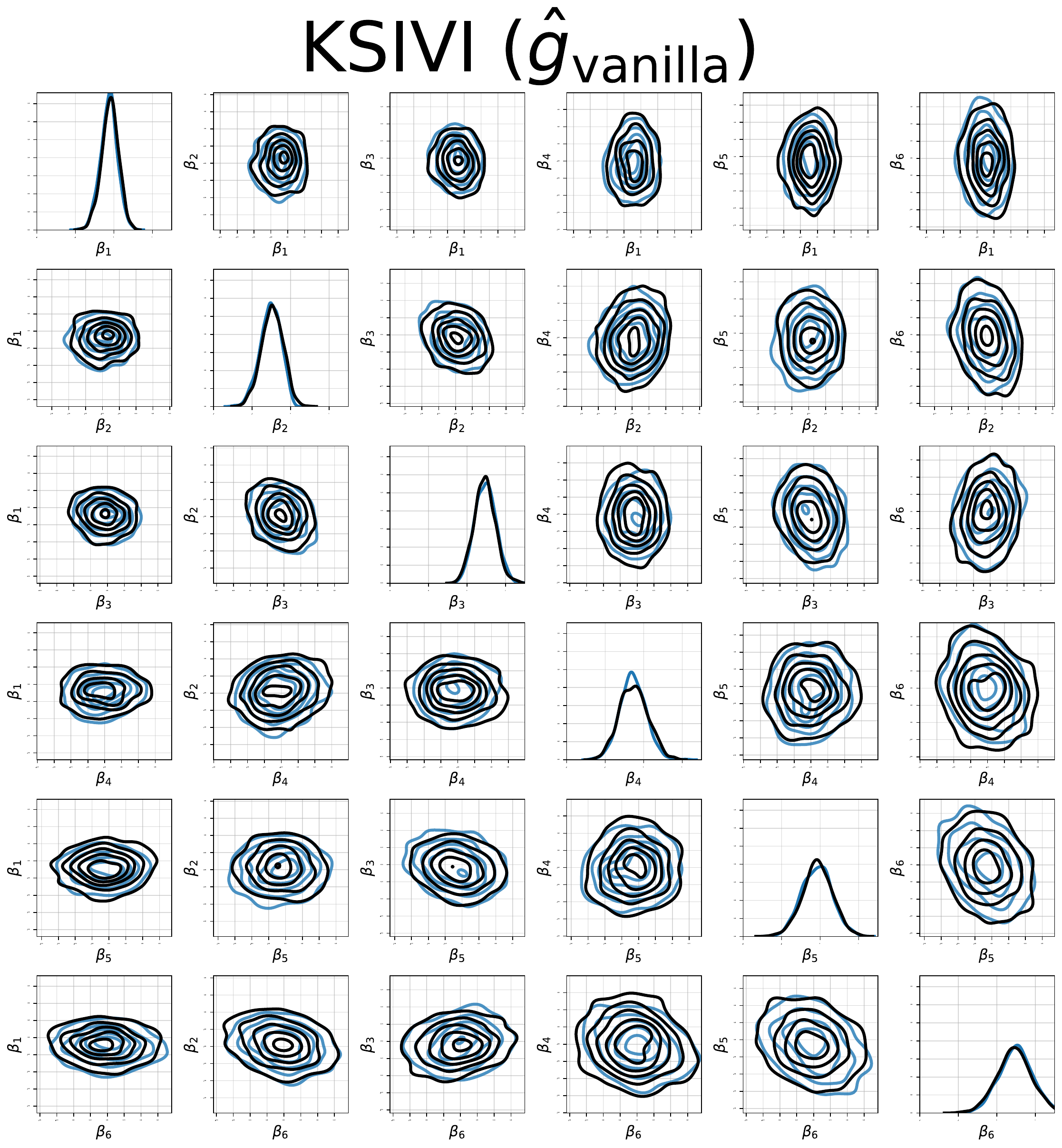}
   \end{minipage}
   }
   \hfill
   \subfigure{
   \begin{minipage}[t]{0.47\linewidth}
   \centering
   \includegraphics[width=1\textwidth]{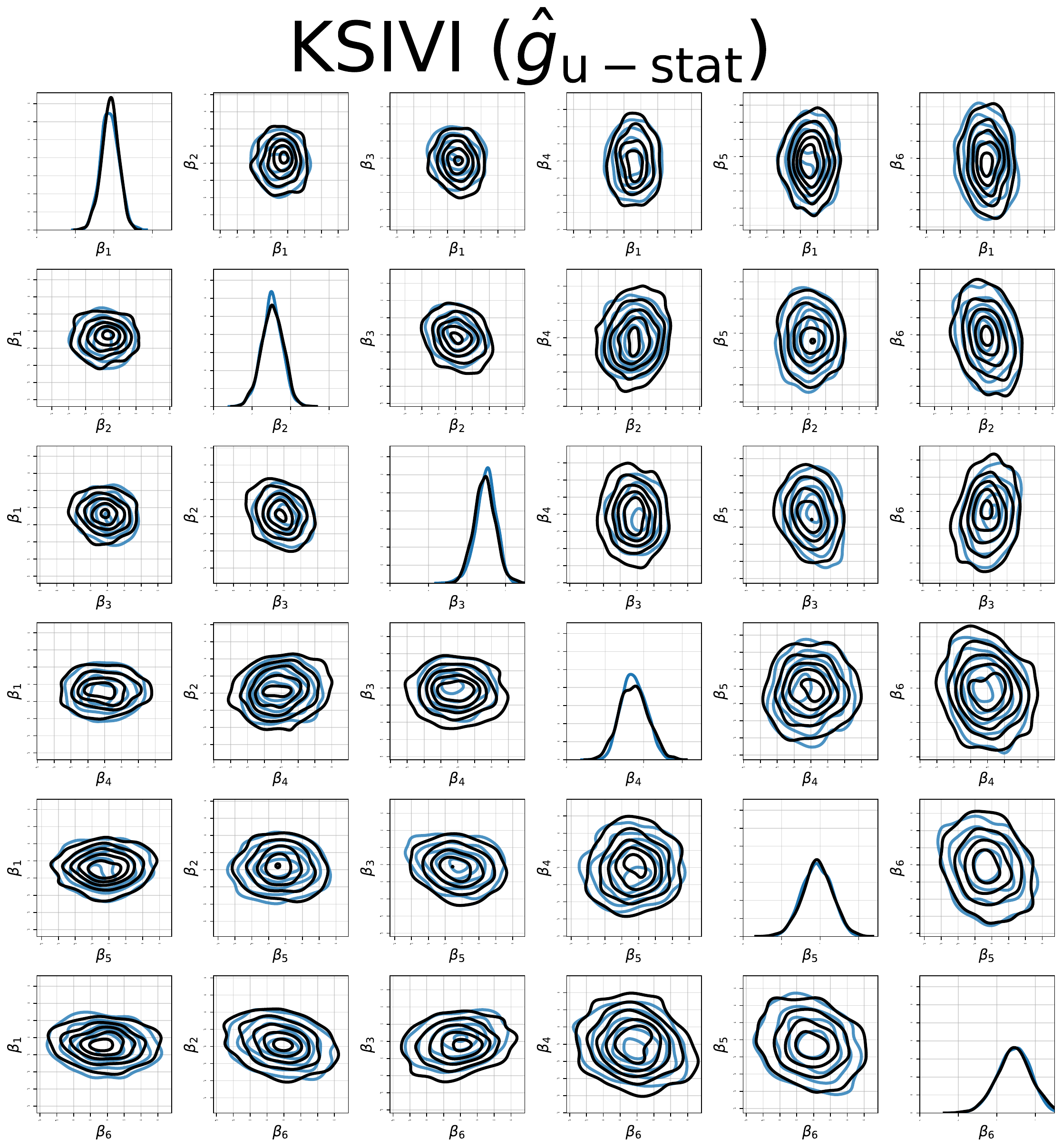}
   \end{minipage}
   }
   \centering
   \captionof{figure}{\textbf{Marginal and pairwise joint posteriors on $\bm{\beta_1,\cdots,\beta_6}$}.
   The contours of the pairwise empirical densities produced by three baseline algorithms, i.e. SIVI-SM (in orange), SIVI (in green), and KSIVI (in blue) including both KSIVI ($\hat{g}_{\textrm{vanilla}}$) and KSIVI ($\hat{g}_{\textrm{u-stat}}$) are graphed in comparison to the ground truth(in black).
   }
   \label{figure:LRwaveform_density_dim_1_6}
\end{figure}

\begin{figure}[t]
    \centering
    \includegraphics[width=\linewidth]{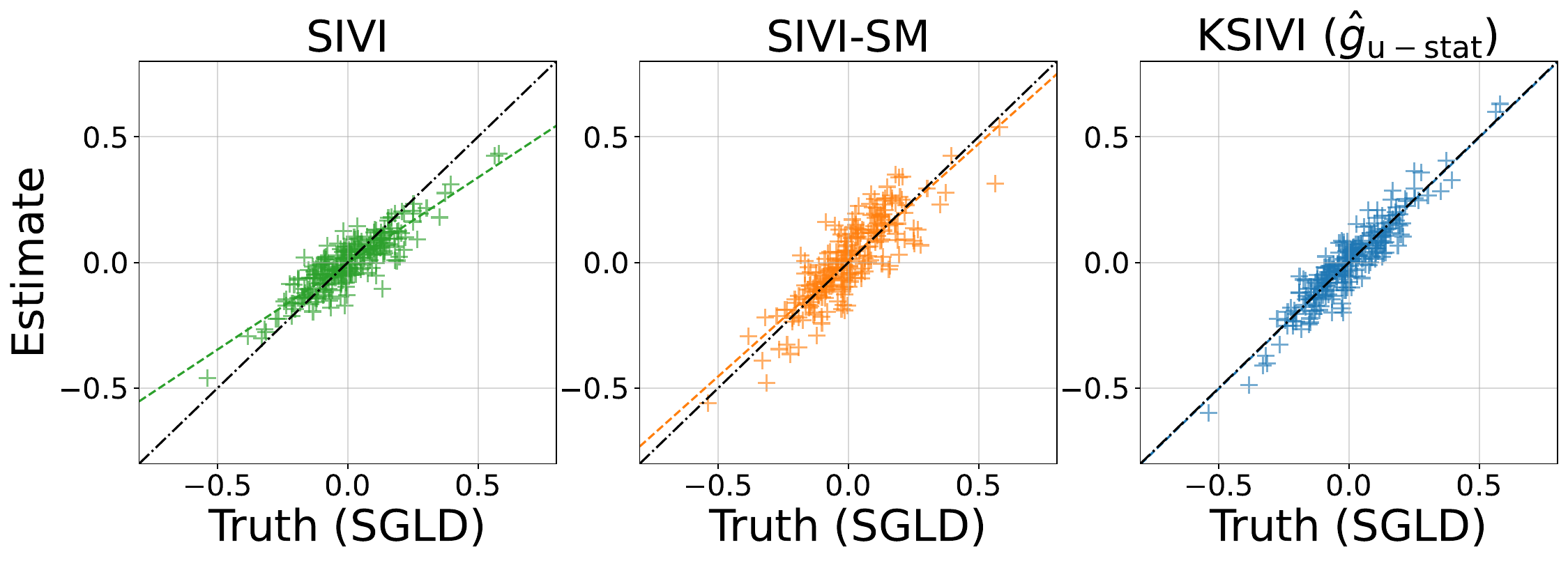}
    \caption{\textbf{Comparison result between the estimated pairwise correlation coefficients on SIVI, SIVI-SM, and KSIVI with U-statistic gradient estimator.} 
    The correlation coefficients are estimated with 1000 samples.
    }
    \label{figure:LRwaveform_corr_u}
\end{figure}

\subsection{Conditioned Diffusion Process}\label{appendix:cd}
\paragraph{Setting details} For the experiment on conditioned diffusion process, $\mu(z;\phi)$ for all the SIVI variants are parameterized as MLPs with layer widths $[100, 128, 128, 100]$ and ReLU activation functions. 
For SIVI-SM, the $f_\psi$ is a MLPs with layer widths $[100, 512, 512, 100]$ and ReLU  activation functions. 
The initial value of $\phi_\sigma^2$ is $e^{-2}$ for all the SIVI variants. 
We set the learning rate of variational parameters $\phi$ to 0.0002 for KSIVI and 0.0001 for SIVI and SIVI-SM. 
The learning rate of $\psi$ of SIVI-SM is set to 0.0001 and the number of lower-level gradient steps for $\psi$ is 1.
For all methods, we conduct training on variational distributions using 100,000 iterations with a batch size of 128, and the Adam optimizer is employed. 
For SIVI, we use the score-based training as done in \citet{yu2023hierarchical} for a fair time comparison.

\paragraph{Additional results} 
Table \ref{tab:cd_wd_dim} shows the estimated sliced Wasserstein distances from the target distributions to different variational approximations with increasing dimensions d = 50, 100, and 200 on conditional diffusion.
Table \ref{tab:run_time_cd_ustat} shows the training time per 10,000 iterations of KSIVI with the vanilla estimator and KSIVI with the u-stat estimator. 
We see that there exists a tradeoff between computation cost and estimation variance between the vanilla and u-stat overall. 
As shown in the equations (\ref{eq:sto_grad}) and (\ref{eq:sto_grad_u}), the vanilla estimator uses two batches of $N$ samples and requires $N^2$ computation complexity for backpropagation, while the u-stat estimator uses one batch of $N$ samples and requires $\frac{N^2}{2}$ computation. 
For convergence, the variance of the vanilla estimator~(\ref{eq:sto_grad}) is slightly smaller than the variance of the u-stat estimator~(\ref{eq:sto_grad_u}).

Figure \ref{figure:cd_traj_u} depicts the well-trained variational posteriors of SIVI, SIVI-SM, and KSIVI with the U-statistic gradient estimator on the conditioned diffusion process problem.
Figure \ref{figure:cd_loss} shows the training loss of SIVI, SIVI-SM, and KSIVI. 
We observe that the training losses of all the methods converge after 100,000 iterations.

\begin{table}[t]
    \caption{\textbf{Estimated sliced Wasserstein distances (Sliced-WD) on conditional diffusion.} The Sliced-WD objectives are estimated using 1000 samples.}
    \label{tab:cd_wd_dim}
    \centering
    \setlength\tabcolsep{5pt}
    \vskip0.5em
    \begin{tabular}{lccc}
    \toprule
    Methods \textbackslash Dimensionality & 50 & 100 & 200 \\
    \midrule
    SIVI	&0.1266	&0.0981 &0.0756\\
    SIVI-SM	&0.0917	&0.0640	&0.0628\\
    UIVI	&0.0314	&0.0426	&0.0582\\
    KSIVI (u-stat)	&\textbf{0.0134} &0.0182	& \textbf{0.0551}\\
    KSIVI (vanilla)	&\textbf{0.0140} &\textbf{0.0115} &0.0493\\
    \bottomrule
    \end{tabular}
    \vspace{-1.5em}
\end{table}

\begin{table}[t]
    \caption{\textbf{Training time (per 10,000 iterations) for the conditioned diffusion process inference task.} For all the methods, the batch size for Monte Carlo estimation is set to $N=128$.}
    \label{tab:run_time_cd_ustat}
    \centering
    \setlength\tabcolsep{5pt}
    \vskip0.5em
    \begin{tabular}{lccc}
    \toprule
    Methods \textbackslash Dimensionality & 50 & 100 & 200 \\
    \midrule
    KSIVI (u-stat)	&39.09 &58.13 & 64.17\\
    KSIVI (vanilla)	&56.67 &90.48 & 107.84\\
    \bottomrule
    \end{tabular}
    \vspace{-1.5em}
\end{table}

\begin{figure}[t]
    \centering
    \includegraphics[width=\linewidth]{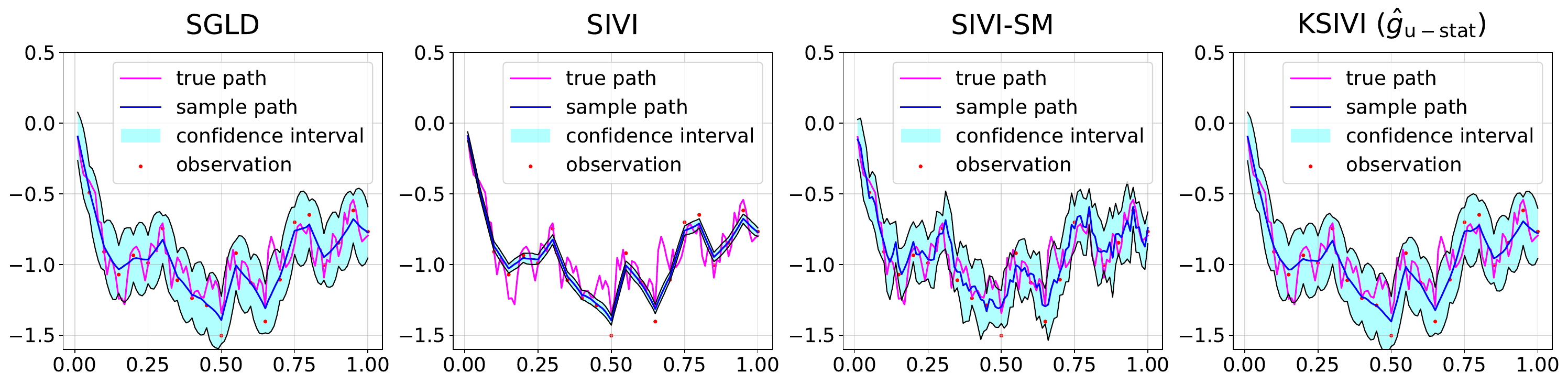}
    \caption{\textbf{Variational approximations of different methods for the discretized conditioned diffusion process.}
    The magenta trajectory represents the ground truth via parallel SGLD. The blue line corresponds to the estimated posterior mean of different methods, and the shaded region denotes the $95\%$ marginal posterior confidence interval at each time step. The sample size is 1000.
    }
    \label{figure:cd_traj_u}
\end{figure}

\begin{figure}[h]
    \centering
    \includegraphics[width=\linewidth]{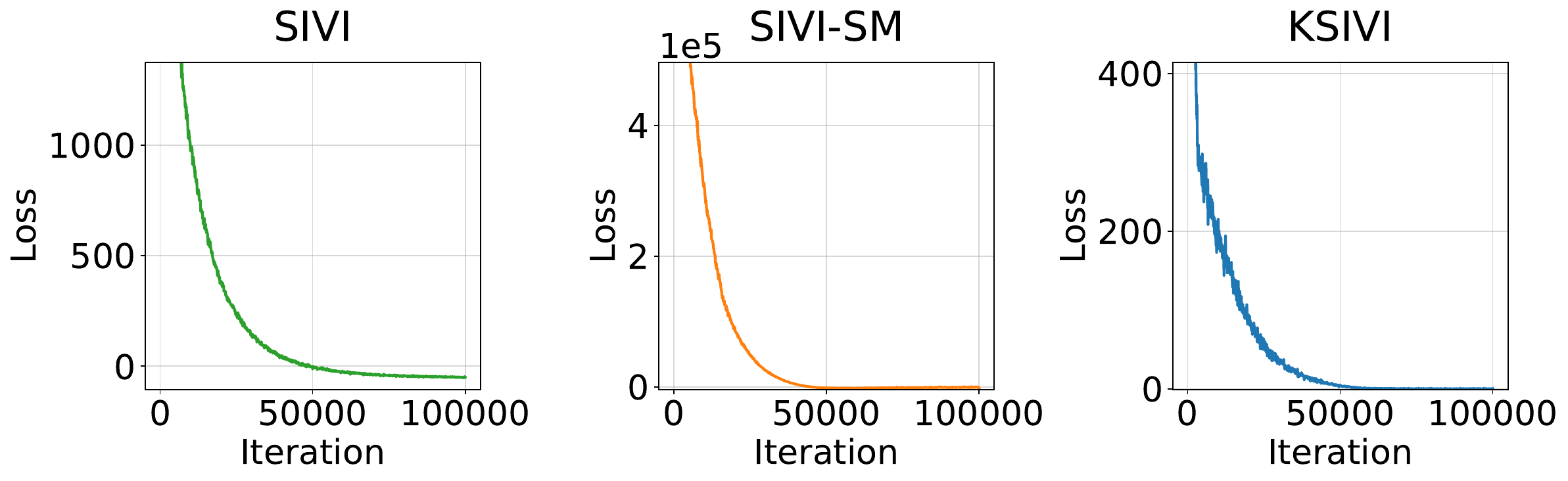}
    \caption{\textbf{Training loss of SIVI in conditioned diffusion process experiment.} The sample size is 1000 for all the methods.}
    \label{figure:cd_loss}
\end{figure}

\subsection{Bayesian Neural Network}\label{appendix:bnn}
\paragraph{Setting details} For the experiments on the Bayesian neural network, we follow the setting in \citet{liu2016stein}. 
For the hyper-parameters of the inverse covariance of the BNN, we select them as the estimated mean of 20 particles generated by SVGD \citep{liu2016stein}.
$\mu(z;\phi)$ for all the SIVI variants are parameterized as MLPs with layer widths $[3, 10, 10, d]$ and ReLU activation functions, where $d$ is the number of parameters in Bayesian neural network.
For SIVI-SM, the $f_\psi$ takes three-layer MLPs with 16 hidden units and ReLU activation functions. 
For KSIVI, we apply a practical regularization fix to thin the kernel Stein discrepancy, as described in \citep{bénard2023kernel}.
For all the SIVI variants, we choose the learning rate of $\phi$ from $\{0.00001, 0.0001, 0.0002, 0.0005, 0.001\}$ and run 20,000 iterations for training.
For SGLD, we choose the step size from the set $\{0.00002, 0.00004, 0.00005, 0.0001\}$, and the number of iterations is set to 10,000 with 100 particles.
\paragraph{Additional results} Table \ref{tab:bnn_rmse_u} shows the additional results of KSIVI with the U-statistic gradient estimator.

\begin{table}[h]
\caption{\textbf{Test RMSE and test NLL of Bayesian neural networks on several UCI datasets}. The results are averaged from 10 independent runs with the standard deviation in the subscripts.
  For each data set, the best result is marked in \textbf{black bold font} and the second best result is marked in \textbf{\textcolor{Sepia!30}{brown bold font}}.
  }  
\label{tab:bnn_rmse_u}
\centering
\vskip0.5em
\setlength\tabcolsep{6.3pt}
\resizebox{\linewidth}{!}{
\begin{tabular}{lcccccccc}
\toprule
 \multirow{2}{*}{Dataset}&\multicolumn{4}{c}{Test RMSE ($\downarrow$)} &\multicolumn{4}{c}{Test NLL ($\downarrow$)}  \\
\cmidrule(l){2-5}\cmidrule(l){6-9}
&  SIVI& SIVI-SM & KSIVI ($\hat{g}_{\textrm{u-stat}}$)& KSIVI ($\hat{g}_{\textrm{vanilla}}$)&SIVI& SIVI-SM & KSIVI ($\hat{g}_{\textrm{u-stat}}$)& KSIVI ($\hat{g}_{\textrm{vanilla}}$) \\
\midrule
\textsc{Boston}       & $\bm{\textcolor{Sepia!30}{2.621}}_{\pm0.02}$ & $2.785_{\pm0.03}$ & $2.857_{\pm0.11}$& $\bm{2.555}_{\pm0.02}$& $\bm{2.481}_{\pm0.00}$ & $2.542_{\pm0.01}$ & $3.094_{\pm0.01}$& $\bm{\textcolor{Sepia!30}{2.506}}_{\pm0.01}$    \\
\textsc{Concrete}     & $6.932_{\pm0.02}$ & $\bm{\textcolor{Sepia!30}{5.973}}_{\pm0.04}$ & $6.861_{\pm0.19}$& $\bm{5.750}_{\pm0.03}$  & $3.337_{\pm0.00}$ & $\bm{3.229}_{\pm0.01}$ & $4.036_{\pm0.01}$& $\bm{\textcolor{Sepia!30}{3.309}}_{\pm0.01}$    \\
\textsc{Power}        & $\bm{3.861}_{\pm0.01}$ & $4.009_{\pm0.00}$ & $3.916_{\pm0.01}$& $\bm{\textcolor{Sepia!30}{3.868}}_{\pm0.01}$   & $\bm{2.791}_{\pm0.00}$ & $2.822_{\pm0.00}$ & $2.944_{\pm0.00}$& $\bm{\textcolor{Sepia!30}{2.797}}_{\pm0.00}$   \\
\textsc{Wine}      & $\bm{\textcolor{Sepia!30}{0.597}}_{\pm0.00}$ & $0.605_{\pm0.00}$ & $\bm{\textcolor{Sepia!30}{0.597}}_{\pm0.00}$& $\bm{0.595}_{\pm0.00}$ & $\bm{\textcolor{Sepia!30}{0.904}}_{\pm0.00}$ & $0.916_{\pm0.00}$ & $\bm{\textcolor{Sepia!30}{0.904}}_{\pm0.00}$& $\bm{0.901}_{\pm0.00}$    \\
\textsc{Yacht}        & $1.505_{\pm0.07}$ & $\bm{0.884}_{\pm0.01}$ & $2.152_{\pm0.09}$& $\bm{\textcolor{Sepia!30}{1.237}}_{\pm0.05}$  & $\bm{\textcolor{Sepia!30}{1.721}}_{\pm0.03}$ & $\bm{1.432}_{\pm0.01}$ & $2.873_{\pm0.03}$& $1.752_{\pm0.03}$    \\
\textsc{Protein}      & $\bm{4.669}_{\pm0.00}$ & $5.087_{\pm0.00}$  & $\bm{\textcolor{Sepia!30}{4.777}}_{\pm0.00}$& $5.027_{\pm0.01}$ & $\bm{2.967}_{\pm0.00}$ & $3.047_{\pm0.00}$ & $\bm{\textcolor{Sepia!30}{2.984}}_{\pm0.00}$& $3.034_{\pm0.00}$    \\
\bottomrule
\end{tabular}
}
\end{table}

\end{document}